\font\msbm=msbm10
\numberwithin{equation}{section}
\theoremstyle{plain}
\newtheorem{theorem}{Theorem}[section]
\newtheorem{lemma}[theorem]{Lemma}
\newtheorem{corollary}[theorem]{Corollary}
\newtheorem{proposition}[theorem]{Proposition}
\newtheorem{definition}{Definition}[section]
\def\mathbb#1{\hbox{\msbm{#1}}}
\newcommand{\bb}{\boldsymbol{b}}
\newcommand{\be}{\boldsymbol{e}}
\newcommand{\bh}{\boldsymbol{h}}
\newcommand{\bn}{\boldsymbol{n}}
\newcommand{\bp}{\boldsymbol{p}}
\newcommand{\br}{\boldsymbol{r}}
\newcommand{\bs}{\boldsymbol{s}}
\newcommand{\bv}{\boldsymbol{v}}
\newcommand{\bw}{\boldsymbol{w}}
\newcommand{\bx}{\boldsymbol{x}}
\newcommand{\by}{\boldsymbol{y}}
\newcommand{\bz}{\boldsymbol{z}}
\newcommand{\bone}{\boldsymbol{1}}
\newcommand{\BC}{\boldsymbol{C}}
\newcommand{\BD}{\boldsymbol{D}}
\newcommand{\BH}{\boldsymbol{H}}
\newcommand{\BI}{\boldsymbol{I}}
\newcommand{\BJ}{\boldsymbol{J}}
\newcommand{\BP}{\boldsymbol{P}}
\newcommand{\BR}{\boldsymbol{R}}
\newcommand{\BU}{\boldsymbol{U}}
\newcommand{\BV}{\boldsymbol{V}}
\newcommand{\BW}{\boldsymbol{W}}
\newcommand{\BX}{\boldsymbol{X}}
\newcommand{\BY}{\boldsymbol{Y}}
\newcommand{\BZ}{\boldsymbol{Z}}
\newcommand{\BPi}{\boldsymbol{\Pi}}
\newcommand{\BDelta}{\boldsymbol{\Delta}}
\newcommand{\btheta}{\boldsymbol{\theta}}
\newcommand{\BSigma}{\boldsymbol{\Sigma}}
\newcommand{\pa}{\partial}
\newcommand{\I}{\boldsymbol{I}}
\newcommand{\RR}{\mathbb{R}}
\newcommand{\lag}{\langle}
\newcommand{\rag}{\rangle}
\newcommand{\eps}{\varepsilon}
\newcommand*\diff{\mathop{}\!\mathrm{d}}
\DeclareMathOperator{\Tr}{Tr}
\DeclareMathOperator{\diag}{diag}
\DeclareMathOperator{\sign}{sign}
\DeclareMathOperator{\blkdiag}{blkdiag}
\DeclareMathOperator{\argmin}{argmin}
\begin{document}
\title{\bf Neural Collapse for Unconstrained Feature Model under Cross-entropy Loss with Imbalanced Data}
\author{Wanli Hong\thanks{Shanghai Frontiers Science Center of Artificial Intelligence and Deep Learning, New York University Shanghai, China. S.L. and W.H. is (partially) financially supported by the National Key R\&D Program of China, Project Number 2021YFA1002800, National Natural Science Foundation of China (NSFC) No.12001372, Shanghai Municipal Education Commission (SMEC) via Grant 0920000112, and NYU Shanghai Boost Fund. W.H. is also supported by NYU Shanghai Ph.D. fellowship.}~\thanks{Center for Data Science, New York University.},~ Shuyang Ling$^*$}
 
\date{\today}

\maketitle

\begin{abstract}
Recent years have witnessed the huge success of deep neural networks (DNNs) in various tasks of computer vision and text processing. Interestingly, these DNNs with massive number of parameters share similar structural properties on their feature representation and last-layer classifier at terminal phase of training (TPT). 
Specifically, if the training data are balanced (each class shares the same number of samples), it is observed that the feature vectors of samples from the same class converge to their corresponding in-class mean features and their pairwise angles are the same. This fascinating phenomenon is known as Neural Collapse $({\cal NC})$, first termed by Papyan, Han, and Donoho in 2019. 
Many recent works manage to theoretically explain this phenomenon by adopting so-called unconstrained feature model (UFM). In this paper, we study the extension of ${\cal NC}$ phenomenon to the imbalanced data under cross-entropy loss function in the context of unconstrained feature model.
Our contribution is multi-fold compared with the state-of-the-art results: (a) we show that the feature vectors exhibit collapse phenomenon, i.e., the features within the same class collapse to the same mean vector; (b) the mean feature vectors no longer form an equiangular tight frame. Instead, their pairwise angles depend on the sample size; (c) we also precisely characterize the sharp threshold on which the minority collapse (the feature vectors of the minority groups collapse to one single vector) will take place; (d) finally, we argue that the effect of the imbalance in datasize diminishes as the sample size grows.
Our results provide a complete picture of the ${\cal NC}$ under the cross-entropy loss for the imbalanced data. Numerical experiments confirm our theoretical analysis.
\end{abstract}

\section{Introduction}
Deep neural networks (DNNs) have achieved impressive results in various classification tasks~\cite{HZRS16,KSH12,LBH15,SZ14,SLJ15}. However, its highly nonconvex nature along with the massive number of parameters and distinct training paradigms pose great challenges for conducting theoretical analysis.
A recent thread of works studies the ways to keep optimizing the model in the terminal phase of training (TPT) when the training loss is very close to zero to achieve a better generalization power~\cite{HHS17,BRT19,BHMM19}. Therefore, theoretical studies about such over-parametrized neural networks in this regime become helpful in demystifying DNNs so as to design better training paradigms.

Neural collapse $({\cal NC})$ is a phenomenon observed in~\cite{PHD20} that some particular structures emerge in the feature representation layer and the classification layer of DNNs in the TPT regime for classification tasks when the training dataset is balanced. It has been also observed and studied under the mean-squared loss~\cite{HPD21,PL21a,ZLD22} and in many different settings~\cite{EP21,JLZDS21,TZ15}. For the simplicity of future discussion, we restate the four types of collapses introduced in~\cite{PHD20}:
\begin{itemize}
\item $\mathcal{NC}_1$: the feature of samples from the same class converge to a unique mean feature vector; 
\item $\mathcal{NC}_2$: these feature vectors (after centering by their global mean) form an equiangular tight frame (ETF), i.e., they share the same pairwise angles and length; 
\item $\mathcal{NC}_3$: the weight of the linear classifier converges to the corresponding feature mean (up to scalar product); 
\item $\mathcal{NC}_4$: the trained DNN classifies the sample by finding the closest mean feature vectors to the sample feature. 
\end{itemize}

After this empirical finding, many works follow to theoretically explain why $\mathcal{NC}$ occurs in DNNs. Staring from~\cite{EW22,FHLS21,LS22,MPP20}, a thread of works consider the unconstrained feature model (UFM) to simulate the process of training DNNs. The UFM simplifies a deep neural network into an optimization program by treating the features of training data as free variables to optimize over. Such simplification is based upon the rationale of universal approximation theorem~\cite{HSW89}: deep neural networks can well approximate a large variety of functions provided that the neural network is sufficiently over-parameterized.
Various versions of UFMs with different loss functions and regularizations are proposed in these works~\cite{EW22,MPP20,ZDZ21,FHLS21,DNT23,ZYL22,LS22,TB22,THN23,MPP20,YWZB22}. They all manage to find that the global minimizers of the empirical risk function under the UFMs match the characterization of ${\cal NC}$ proposed in~\cite{PHD20}.

While there are many recent works focusing on balanced datasets, we take a step further to see how $\mathcal{NC}$ generalizes to imbalanced datasets. Several works have already addressed phenomena in the imbalanced scenario. In particular,~\cite{FHLS21} found a phenomenon called~\emph{minority collapse} in the TPT regime for the training on imbalanced data. They empirically observed that under cross-entropy loss, as the imbalance ratio goes to infinity, the pairwise angles among minority classes become zero, which means the predictions on the minority classes become indistinguishable. It is believed that if the imbalance ratio is above a certain threshold, this minority collapse occurs but the exact threshold is unknown. A recent work~\cite{DNT23} obtained this exact threshold under the mean-square error (MSE) loss. 
The work~\cite{TKV22} considered the neural collapse for the imbalanced dataset under the unconstrained-feature SVM (UF-SVM) and proposed Simplex-Encoded-Labels Interpolation (SELI) geometry that characterized the structure of global minimizers to the UF-SVM, and later~\cite{BKV23} extended~\cite{TKV22} to several cross-entropy parameterizations.

However, to the best of our knowledge, the ${\cal NC}$ on imbalanced datasets under the cross-entropy loss is not fully understood. Our work will try to address a few questions that are not yet answered in the current literature: 
\begin{enumerate}[~~~(a)]
\item Does ${\cal NC}_1$ still occur for the UFMs under cross-entropy loss if the data are imbalanced?
\item If ${\cal NC}_1$ holds, what is the structure of the mean feature or prediction matrices? 
\item Can we provide a sharp threshold for the minority collapse? \item How does the imbalance ratio affect the structure of the mean feature vectors if the sample size is sufficiently large? 
\end{enumerate}

For (a) and (b), these questions are answered under the MSE loss in~\cite{DNT23}. However, it becomes much more challenging under the cross-entropy loss. While~\cite{FHLS21} studied a special case when there are two giant clusters, a clear characterization of the general case is unknown under the cross-entropy loss. For (c), the threshold for minority collapse remains unknown and we aim to fill this gap. For (d), we have not observed any recent works investigating this issue.

By adopting the UFMs under the cross-entropy loss, we provide a complete picture of ${\cal NC}$ for the imbalanced scenario. Here are our main contributions: (i) We provide a concise proof for $\mathcal{NC}_1$ under the cross-entropy loss for imbalanced datasets. This argument is flexible and can be easily applied to other UFM settings and different loss functions.
 Additionally, we find $\mathcal{NC}_2$ and $\mathcal{NC}_3$ do not hold for imbalanced datasets (Theorem~\ref{thm:main1}).
(ii) By working with imbalanced datasets where the classes are partitioned into clusters such that classes from the same cluster share the same number of samples, we show that the mean feature, prediction and classifier weight vector of the classes from the same cluster form an ETF-like structure. Moreover, we show the bias terms corresponding to the same cluster also share the same value (Theorem~\ref{thm:main1}).
(iii) When there are only two clusters (majority and minority cluster), which is the same setting adopted in~\cite{FHLS21}, we provide an exact threshold for the minority collapse. Moreover, we also characterize the threshold for complete collapse (Theorem~\ref{thm:biasf}), in which case all the classes collapse to a single vector. 
(iv) We provide an asymptotic characterization when the number of samples in the majority and minority cluster goes to infinity, but the imbalance ratio stays constant. We find $\mathcal{NC}_2$ and $\mathcal{NC}_3$ hold asymptotically and the convergence rate w.r.t. the sample size is provided (Theorem~\ref{thm:limit}).

\subsection{Notation}
We let boldface letter $\BX$ and $\bx$ be a matrix and a vector respectively; $\BX^{\top}$ and $\bx^{\top}$ are the transpose of $\BX$ and $\bx$ respectively. The matrices $\I_n$, $\BJ_n$, and $\be_k$ are the $n\times n$ identity matrix, a constant matrix with all entries equal to 1, and the one-hot vector with $k$-th entry equal to 1. For the simplicity of notation, we also let
\begin{equation}\label{def:Ck}
\BC_K := \I_K - \BJ_K/K
\end{equation}
be the $K\times K$ centering matrix. 
For any vector $\bx,$ $\diag(\bx)$ denotes the diagonal matrix whose diagonal entries  equal $\bx.$ 
For any matrix $\BX$, we let $\|\BX\|$, $\|\BX\|_F$, and $\|\BX\|_*$ be the operator norm, Frobenius form, and nuclear norm. 

\subsection{Organization}

The following sections are organized in the following way. In Section~\ref{s:prelim}, we formally introduce $\mathcal{NC}$ and our UFM along with a review of more recent works about $\mathcal{NC}$. In Section~\ref{s:main}, we present our main theoretical results. In Section~\ref{s:numerics}, we provide numerical experiments to support our theoretical findings and Section~\ref{s:proof} justifies all our theorems.

\section{Preliminaries}\label{s:prelim}
In this section, we briefly introduce DNNs, and then define the UFM and $\mathcal{NC}$ formally. 
A deep neural network (DNN) is often in the form of
\[
f_{\Theta}(\bx) = \BW^{\top} \bh_{\btheta}(\bx) + \bb
\]
where $\bh_{\btheta} \in \mathbb{R}^{d}$ represents the feature vector on the last layer, $\BW \in \mathbb{R}^{K \times d}$ and $\bb \in \mathbb{R}^{K}$ stand for the weight and bias respectively. The capital letter $\Theta$ consists of all the training parameters $(\btheta,\BW,\bb)$ in the DNN. In addition, we call $\bx$ the input and $f_{\Theta}(\bx)$ the  prediction vector of $\bx$. Given the training data $\{(\bx_i, \by_i)\}_{i=1}^N$, we try to find a model via empirical risk minimization (ERM):
\[
\min_{\Theta}~ \frac{1}{N} \sum_{i=1}^N \ell(f_{\btheta}(\bx_i), \by_i) + \frac{\lambda}{2}\|\Theta \|^2
\]
where $\ell(\cdot,\cdot)$ denotes a loss function,  $\by_i$ is a one-hot vector representing the label of the $i$-th training data $\bx_i$, and $\lambda>0$ is the regularization parameter (i.e., weight decay parameter of SGD). For the classification tasks, we will use the cross-entropy (CE) function $\ell_{CE}(\cdot,\cdot)$, i.e.,
\[
\ell_{CE}(\bz,\be_k) = \log \frac{\sum_{\ell=1}^K e^{z_{\ell}}}{e^{z_k}} = \log \sum_{\ell=1}^K e^{z_{\ell}} - z_k.
\]

We let $\bh_{ki} : = \bh_{\btheta}(\bx_{ki})$ be the feature of the $i$-th data point in the $k$-th class with $1\leq i\leq n_k$ and $1\leq k\leq K$, and $N = \sum_{k=1}^K n_k$ is the total number of samples. In other words, there are in total $K$ different classes with the $k$-th class containing $n_k$ samples. Without loss of generality, we let $\{n_k\}_{k=1}^K$ be a non-increasing sequence, i.e., $n_1\geq n_2\geq\cdots\geq n_K$. To simplify the notation, 
we let $\BH\in\mathbb{R}^{d \times N}$ be the feature matrix of all training samples with $\bh_{ki}$ denoting the $\left(\sum_{i=1}^{k-1} n_k + i\right)$-th column of $\BH$.
Now we are ready to introduce UFMs and $\mathcal{NC}$ related results.

\subsection{Unconstrained feature model}\label{ss:ufm}
For general DNNs, the feature $\bh_{\theta}(\cdot)$ is always highly nonlinear and thus challenging to analyze. The unconstrained feature model (UFM) simplifies the DNN model by assuming $\bh_{\theta}(\cdot)$ as a free vector, by using the idea that if a neural network is sufficiently parameterized, it can interpolate any data. Under the UFM, we instead study the regularized empirical risk minimization (ERM):
\[
\min_{\BW\in\RR^{d\times K},\BH\in\RR^{d\times N}} \frac{1}{N} \sum_{k=1}^K\sum_{i=1}^{n_k}\ell_{CE}( \BW^{\top}\bh_{ki} + \bb, \be_k )  + \frac{\lambda_W}{2}\|\BW\|_F^2 + \frac{\lambda_H}{2}\|\BH\|_F^2 + \frac{\lambda_b}{2}\|\bb\|^2
\]
where $(\lambda_W,\lambda_H,\lambda_b)$ are positive regularization parameters. It has an equivalent matrix form: 
\begin{equation}\label{eq:ufm}
\min_{\BW\in\RR^{d\times K},\BH\in\RR^{d\times N}} {\cal L}(\BW,\BH,\bb) : = \frac{1}{N} \ell_{CE}( \BW^{\top}\BH + \bb\bone_N^{\top}, \BY )  + \frac{\lambda_W}{2}\|\BW\|_F^2 + \frac{\lambda_H}{2}\|\BH\|_F^2 + \frac{\lambda_b}{2}\|\bb\|^2
\end{equation}
where $\BW\in\RR^{d\times K},$ $\BH= \{\bh_{ki}\}_{1\leq i\leq n_k,1\leq k\leq K}\in\RR^{d\times N}$,  
\begin{equation}\label{def:Y}
\BY = [ \be_1\bone_{n_1}^{\top},\cdots,\be_K\bone_{n_K}^{\top} ]\in\RR^{K\times N},
\end{equation}
 and $\ell_{CE}(\BW^{\top}\BH+\bb\bone_N^{\top},\BY)$ computes the cross entropy column-wisely and then takes the sum.

It is a great convenience to work with model~\eqref{eq:ufm} as we can convexify the problem. Under $d\geq K$, i.e., in the regime of over-parameterization, then $\BW^{\top}\BH$ can represent any $K\times N$ matrix. Therefore, let $\BZ = \BW^{\top}\BH\in\RR^{K\times N}$ and we have
\begin{equation}\label{eq:nuc}
\min_{\BW^{\top}\BH =\BZ} \lambda_W\|\BW\|_F^2 +\lambda_H \|\BH\|_F^2 = 2\sqrt{\lambda_W\lambda_H}\|\BZ\|_*
\end{equation}
which follows from~\cite[Lemma 5.1]{RFP10} and~\cite[Lemma A.3]{ZDZ21}.
By letting $\lambda_Z := \sqrt{\lambda_W\lambda_H}$,~\eqref{eq:ufm} becomes
\begin{equation}\label{eq:cvx}
  \tag{\bf UFM}
\min_{\BZ\in\RR^{K\times N},\bb\in\RR^K} {\cal L}(\BZ,\bb) : = \frac{1}{N}\ell_{CE}( \BZ + \bb\bone_N^{\top}, \BY )  + \lambda_Z \|\BZ\|_* + \frac{\lambda_b}{2}\|\bb\|^2
\end{equation}
which is a convex optimization problem. 

Therefore, it suffices to focus on the structure of global minimizers to~\eqref{eq:cvx}, as it implies the global minimizer to~\eqref{eq:ufm}. To see this, let $\BZ^*$ be a global minimizer to~\eqref{eq:cvx} and $\BZ^* = \BU\BSigma\BV^{\top}$ be its SVD. Then $\BW\propto \BSigma^{1/2}\BU^{\top}$ and $\BH \propto \BSigma^{1/2}\BV^{\top}$ are actually the global minimizer to~\eqref{eq:ufm}, which follows from~\cite[Lemma 5.1]{RFP10}.
As a result, our focus will be on analyzing $\BZ$ instead of its factorized form $\BZ = \BW^{\top}\BH$.  In particular, we will call $\BZ$ the~\emph{prediction matrix}.

\subsection{Neural collapse}
In this section, we will review more recent works relevant to ours. 
Regarding the theoretical understanding of ${\cal NC}$, the research on the UFMs has become popular in the past few years. Besides the UFM we have introduced above, there are several variants of the UFMs in the state-of-the-art literature. Most works focus on characterizing the global solution of the corresponding regularized empirical risk function and aim to show that it captures $\mathcal{NC}$ phenomenon on the balanced dataset. For more details, we refer readers to works such as~\cite{KRA22,ZDZ21} and the references therein.
With the notation introduced in Section~\ref{ss:ufm} at hand, we can describe the ${\cal NC}$ more precisely.
\begin{itemize}
  \item $\mathcal{NC}_1$ - within-class variability collapse: $\bh_{ki} = \bar{\bh}_k$ for $1\leq i\leq n_k$ and $1\leq k\leq K$; 
  \item $\mathcal{NC}_2$ - convergence of the mean features to an ETF. Let $\bar{\BH}=[\bar{\bh}_1,\bar{\bh}_2,\cdots, \bar{\bh}_k] \in \mathbb{R}^{d \times K}$ be the mean feature matrix. Then it holds $\bar{\BH}^{\top}\bar{\BH} \propto \BC_K$, i.e., the mean features form an equiangular tight frame (a regular simplex);
  \item $\mathcal{NC}_3$ - self-duality. The weight matrix $\BW$ is proportional to $\bar{\BH}^{\top}$.
\end{itemize}

\paragraph{${\cal NC}$ on balanced datasets:} The ${\cal NC}$ is first empirically observed on the balanced dataset in~\cite{PHD20}. Hence most follow-up works focus on the balanced scenario, i.e., $n_1=\cdots = n_K$. The goal is to prove the global minimizer associated with the ERM satisfies ${\cal NC}_1$-${\cal NC}_3$ in~\cite{PHD20}  under certain UFMs.  
The work~\cite{FHLS21} studies the neural collapse under the bias-free unconstrained feature model (termed as the layer-peeled model in~\cite{FHLS21}), and shows ${\cal NC}_1$-${\cal NC}_3$ hold in the balanced scenario with an $\ell_2$-norm constraint on $(\BW,\BH,\bb)$. Several works have provided similar results such as~\cite{EW22,LS22,ZDZ21}. In particular, the authors in~\cite{ZDZ21} characterize the benign landscape of the regularized ERM by showing that there is only one local minimizer that is also global, modulo a global rotation.

The ${\cal NC}$ under the UFMs with MSE loss has also been studied in~\cite{DNT23,HPD21,TB22,ZLD22}. While the within-class collapse ${\cal NC}_1$ still holds, $\mathcal{NC}_2$ exhibits a slightly different structure: the mean feature vectors in $\bar{\BH}$ become pairwise orthogonal, i.e., $\bar{\BH}^{\top}\bar{\BH} \propto \I_K$. This is due to the difference between the CE  and MSE loss. 
Other loss functions including loss label smoothing and focal loss have been considered in~\cite{ZYL22} to demonstrate the universality of $\mathcal{NC}$. 

\paragraph{${\cal NC}$ on imbalanced data:} 
The work~\cite{FHLS21} is likely the first to consider the $\mathcal{NC}$ for the imbalanced data under the UFM and cross-entropy loss. They work with a dataset consisting of two giant clusters $A$ and $B$: each cluster $A$ (or $B$)  contains $k_A$ (or $k_B$) classes and each class contains $n_A$ (or $n_B$) samples, i.e., $n_A := n_1=n_2=\cdots =n_{k_A}$ and $n_B:=n_{k_A+1}=n_{k_A+2}\cdots =n_{k_A+k_B}$. Without loss of generality, we assume $n_A>n_B$, and $A$ and $B$ are referred to as majority and minority class respectively.
In~\cite{FHLS21}, it is empirically observed that the ${\cal NC}_1$ occurs. Moreover, when the imbalance ratio $r:=n_A/n_B$ is greater than some threshold, all the mean feature vectors w.r.t. the minority class become the same, which means the prediction on the classes in $B$ becomes indistinguishable. This phenomenon is termed as the {\em minority collapse}. Theoretically,~\cite{FHLS21} shows minority collapse when the imbalance ratio $r$ is sufficiently large but the exact threshold remains unknown. 

For the minority collapse under MSE loss,~\cite{DNT23} has explicitly characterized the collapse threshold for each class in terms of the regularization parameters and number of samples. The argument essentially follows from the idea of singular value thresholding~\cite{CCS10}. In particular,~\cite{BKV23} provides the pairwise angle within the minority and majority classes under two parameterizations of the CE loss.

\paragraph{${\cal NC}$ beyond the UFMs:}
There are a few other works concerning slightly more complicated models beyond the UFMs. Recently,~\cite{YWZB22} has taken one step forward from the UFMs by restricting the weight $\bw_k$ and feature $\bh_{ki}$ on the unit ball, also known as the normalized features, and has analyzed the ${\cal NC}$ under this restricted setting. One disadvantage of the UFM is that the model ignores the network depth and nonlinearity, and also the dependence of the feature vector on the input sample. A few progress in this direction include~\cite{DNT23} which considers deep linear networks and explores the $\mathcal{NC}$ under the MSE. 
In addition,~\cite{TB22} adds a bit of nonlinearity by applying the ReLU activation to the features $\BH$ before feeding to the linear classifier. Recently,~\cite{SWGKC23} has explored the connection between the neural collapse and neural tangent kernel~\cite{JGH18}.

\paragraph{${\cal NC}$ and training/generalization} Now we briefly review a few other works that are relevant to the ${\cal NC}$.
Regarding the stability of ${\cal NC}$, the work~\cite{THN23} considers initializing the input feature near the collapsed solution and conducts perturbation analysis in the near-collapse regime. Motivated by the ETF type mean feature vectors,~\cite{YXC22,ZDZ21} consider training with the last layer fixed as an ETF; this training scheme achieves on-par performance compared with that with the classifier not fixed. This may be used as a potential way to decrease the computational costs of training DNNs. The works~\cite{GGH21,GGH22} show that few-shot learning achieves good performance by adopting transfer learning on a trained-to-collapsed network except for the last classifier layer. A similar setting of transfer learning is also considered in~\cite{LLZ22}.

Another important aspect is the connection between ${\cal NC}$ and generalization~\cite{ESA20,HBN22}.
The recent work~\cite{HBN22} has examined their relation empirically. They find the collapse on the testing dataset does not take place on benchmark datasets including MNIST, FMNIST and Cifar10. They point out that  $\mathcal{NC}$ is not desirable in certain transfer learning settings. Additionally,~\cite{HBN22} also observes the ${\cal NC}$ starts to occur on a few layers before the last layer, known as the cascading collapse.

\section{Main results}\label{s:main}

The global minimizer to~\eqref{eq:cvx} in the balanced scenario forms exactly an equiangular tight frame~\cite{EW22,FHLS21,ZDZ21}. However, it is unclear how this phenomenon is affected by the number of samples in each class. In this section, we will present our findings on neural collapse under the imbalanced scenario.
Before proceeding to our main results, we need to introduce the cluster structure.  
\begin{definition}[\bf Cluster structure]
Let $\{N_j\}_{j=1}^{J}$  be the distinct values of $\{n_k\}_{k=1}^K$ with $J\leq K$ and 
\begin{equation}\label{def:gamma}
\Gamma_j = \{k: n_k = N_j,~1\leq k\leq K\}
\end{equation}
We call $\Gamma_j$ the $j$-th cluster, i.e., every class in $\Gamma_j$ has $N_j$ samples.
\end{definition}

Now we present the first main theorem, regarding the structure of the global minimizer to~\eqref{eq:cvx}.
\begin{theorem}\label{thm:main1}

The  global minimizer $(\BZ,\bb)$ to~\eqref{eq:cvx} is unique and satisfies the following properties:
\begin{enumerate}[(a)]
\item {\bf (Within-class feature collapse)} The ${\cal NC}_1$ occurs for unconstrained feature models under cross-entropy loss: the prediction vectors $\bz_{ki},~1\leq i\leq n_k$ within each class collapse to their sample mean $\bar{\bz}_k$:
\[
\bz_{ki} = \bar{\bz}_k,~~1\leq i\leq n_k,~~~\lag \bar{\bz}_k, \bone_K\rag = 0,~~1\leq k\leq K,
\] 
In other words, the prediction matrix $\BZ$ is in the following factorized form:
\[
\BZ = \bar{\BZ} \BY= \in\RR^{K\times N}
\]
where
\begin{equation}\label{def:barZS}
\bar{\BZ} = [\bar{\bz}_1,\cdots,\bar{\bz}_K],~~~\BY\text{ is defined in}~\eqref{def:Y}.
\end{equation}
From now on, we refer to $\bar{\BZ}$ as the mean prediction matrix.

\item {\bf (Block structure of $\bar{\BZ}$)} 
The mean prediction matrix $\bar{\BZ}$ and the bias term $\bb$ exhibit the block structure:
\[
\bar{\BZ} = \sum_{j=1}^J a_j \I_{\Gamma_j} + \sum_{1\leq j,j'\leq J} a_{jj'} \bone_{\Gamma_j}\bone_{\Gamma_{j'}}^{\top},~~~\bb = \sum_{j=1}^J c_j \bone_{\Gamma_j},~~~a_j + \sum_{j'=1}^J a_{j'j} |\Gamma_{j'}| = 0,
\]
where $\bone_{\Gamma_j}$ is an indicator vector, defined by
\[
\bone_{\Gamma_j}(\ell) = 
\begin{cases}
1, & \ell\in\Gamma_j \\
0, & \ell\in\Gamma_j^c
\end{cases},\qquad \I_{\Gamma_j} = \diag(\bone_{\Gamma_j}).
\]
In other words, the mean prediction vectors $\{\bar{\bz}_k\}_{k\in N_j}$ in the same cluster have the same pairwise angle, so do the mean feature matrix $\bar{\BH}.$ 

\item {\bf (Balanced scenario as a special case)}
If $n_1=n_2=\cdots=n_K = N/K$, then
\[
\bar{\BZ} = a\left(K\I_K - \BJ_K\right),~~\bb = 0.
\]
In particular, we have
\begin{enumerate}[i.]
\item if $N\lambda_Z \geq \sqrt{\frac{N}{K}}$, then $a = 0$;
\item if $N\lambda_Z < \sqrt{\frac{N}{K}}$, then 
\[
a = \frac{1}{K}\log \left( \frac{\sqrt{K}}{\sqrt{N}\lambda_Z} - K + 1 \right).
\]
\end{enumerate}

\item The weight $\BW$ and feature matrix $\BH$ also have a block structure. More precisely, let $\bar{\BU}\bar{\BSigma}\bar{\BV}^{\top}$ be the SVD of $\bar{\BZ}\BD^{1/2}$ where
\begin{equation}\label{def:D}
\BD := \BY\BY^{\top}= \diag(n_1,\cdots,n_K).
\end{equation}
Then $\BH = \bar{\BH}\BY$ and the mean prediction $\bar{\BZ}$ equals $\BW^{\top}\bar{\BH}$ where
\[
\BW=\bar{\BSigma}^{1/2}\bar{\BU}^{\top},~~~\bar{\BH} = \bar{\BSigma}^{1/2}\bar{\BV}^{\top}\BD^{-1/2}.
\]

\end{enumerate}
\end{theorem}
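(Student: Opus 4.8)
I would work throughout with the convex program~\eqref{eq:cvx} (a minimizer exists by coercivity) and read off the structure of its unique minimizer from convexity together with the symmetries of the problem; part (d) is then obtained by undoing the convexification via~\cite[Lemma 5.1]{RFP10}. Two elementary facts drive everything. First, a nuclear-norm monotonicity: if $\BA,\BB\in\RR^{K\times N}$ have orthogonal column spaces, i.e.\ $\BA^{\top}\BB=0$, then $(\BA+\BB)^{\top}(\BA+\BB)=\BA^{\top}\BA+\BB^{\top}\BB\succeq\BA^{\top}\BA$, so the singular values of $\BA+\BB$ majorize those of $\BA$, whence $\|\BA+\BB\|_{*}\geq\|\BA\|_{*}$ with equality iff $\BB=0$. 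Second, for any fixed $\bb$ the column-wise cross-entropy $\BZ\mapsto\ell_{CE}(\BZ+\bb\bone_N^{\top},\BY)$ is convex with per-column Hessian $\diag(\bp)-\bp\bp^{\top}$ whose kernel is exactly $\mathrm{span}(\bone_K)$; hence it is strictly convex along any line not parallel to $\bone_K$, while being invariant under $\BZ\mapsto\BZ+\bone_K\bt^{\top}$ for any $\bt\in\RR^{N}$ and under $\bb\mapsto\bb+\beta\bone_K$.

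\emph{Gauge fixing and uniqueness.} I first claim that every minimizer $(\BZ,\bb)$ satisfies $\bone_K^{\top}\bb=\bzero^{\top}$ and $\bone_K^{\top}\BZ=\bzero^{\top}$ (the latter is $\lag\bar{\bz}_k,\bone_K\rag=0$ once $\mathcal{NC}_1$ is known). Writing $\bb=\bb_{\perp}+\beta\bone_K$ with $\bone_K^{\top}\bb_{\perp}=0$, replacing $\bb$ by $\bb_{\perp}$ changes neither $\ell_{CE}$ nor $\|\BZ\|_{*}$ but strictly decreases $\tfrac{\lambda_b}{2}\|\bb\|^{2}$ unless $\beta=0$; and if $\bone_K^{\top}\BZ=\bs^{\top}\neq\bzero^{\top}$ then $\BZ':=\BZ-\tfrac1K\bone_K\bs^{\top}$ leaves $\ell_{CE}$ and $\|\bb\|^{2}$ unchanged, has $\mathrm{col}(\BZ')\perp\bone_K=\mathrm{col}(\tfrac1K\bone_K\bs^{\top})$, so by the monotonicity fact $\|\BZ\|_{*}>\|\BZ'\|_{*}$, contradicting optimality. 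Uniqueness follows next: if $(\BZ_1,\bb_1)$ and $(\BZ_2,\bb_2)$ are both minimizers their midpoint is too, hence must attain the average of each (convex) summand of $\mathcal{L}$ separately; strict convexity of $\tfrac{\lambda_b}{2}\|\bb\|^{2}$ forces $\bb_1=\bb_2=:\bb$, and then strict convexity of $\ell_{CE}$ transverse to $\bone_K$ forces $\BZ_1-\BZ_2=\bone_K\bt^{\top}$, which with $\bone_K^{\top}\BZ_1=\bone_K^{\top}\BZ_2=\bzero^{\top}$ gives $\BZ_1=\BZ_2$.

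\emph{$\mathcal{NC}_1$, block structure, balanced case (parts (a)--(c)).} Since the minimizer is unique, it is fixed by every transformation preserving $\mathcal{L}$. Permuting the columns of $\BZ$ within a class leaves~\eqref{eq:cvx} invariant, so $\bz_{ki}=\bar{\bz}_k$, i.e.\ $\BZ=\bar{\BZ}\BY$; together with the gauge fixing this is part (a). Substituting $\BZ=\bar{\BZ}\BY$ and using $\|\bar{\BZ}\BY\|_{*}=\|\bar{\BZ}\BD^{1/2}\|_{*}$ (valid because $\BD^{-1/2}\BY$ has orthonormal rows) reduces~\eqref{eq:cvx} to a finite-dimensional convex program in $(\bar{\BZ},\bb)$. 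A permutation $\sigma$ acting within a single cluster induces $(\bar{\BZ},\bb)\mapsto(\BP_{\sigma}\bar{\BZ}\BP_{\sigma}^{\top},\BP_{\sigma}\bb)$, which preserves this objective (cross-entropy is equivariant under joint permutation of logit coordinates and labels, $\BP_{\sigma}$ commutes with $\BD$ since $n_{\sigma(k)}=n_k$ within a cluster, and conjugation by an orthogonal matrix preserves $\|\cdot\|_{*}$ and $\|\cdot\|$); so the minimizer lies in the commutant of the within-cluster permutation representation, which is exactly $\bar{\BZ}=\sum_j a_j\I_{\Gamma_j}+\sum_{j,j'}a_{jj'}\bone_{\Gamma_j}\bone_{\Gamma_{j'}}^{\top}$, $\bb=\sum_j c_j\bone_{\Gamma_j}$, and $\bone_K^{\top}\bar{\BZ}=\bzero^{\top}$ reads $a_j+\sum_{j'}a_{j'j}|\Gamma_{j'}|=0$ --- this is part (b). For (c), $J=1$ forces $\bar{\BZ}=a(K\I_K-\BJ_K)$ and $\bb=\bzero$; plugging in collapses~\eqref{eq:cvx} to the one-dimensional convex problem $\min_{a\geq0}\,\log(1+(K-1)e^{-aK})+\lambda_Z\sqrt{N/K}\,aK(K-1)$, whose stationarity condition $e^{-aK}/(1+(K-1)e^{-aK})=\lambda_Z\sqrt{N/K}$ has an interior root iff $\lambda_Z\sqrt{N/K}<1/K$, i.e.\ $N\lambda_Z<\sqrt{N/K}$, giving $a=\tfrac1K\log(\sqrt{K}/(\sqrt{N}\lambda_Z)-K+1)$; otherwise $a=0$.

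\emph{Factorization (part (d)) and the main difficulty.} Given the optimal $\BZ=\bar{\BZ}\BY$, write $\bar{\BZ}\BD^{1/2}=\bar{\BU}\bar{\BSigma}\bar{\BV}^{\top}$; then $\BZ=\bar{\BU}\bar{\BSigma}(\bar{\BV}^{\top}\BD^{-1/2}\BY)$ with $\bar{\BV}^{\top}\BD^{-1/2}\BY$ having orthonormal rows, so this is the SVD of $\BZ$, and~\cite[Lemma 5.1]{RFP10} applied to~\eqref{eq:ufm} returns $\BW=\bar{\BSigma}^{1/2}\bar{\BU}^{\top}$, $\bar{\BH}=\bar{\BSigma}^{1/2}\bar{\BV}^{\top}\BD^{-1/2}$, $\BH=\bar{\BH}\BY$, $\BW^{\top}\bar{\BH}=\bar{\BZ}$; since $\bar{\BZ}\BD^{1/2}$ again lies in the commutant of the within-cluster permutations, $\bar{\BU},\bar{\BV}$ can be chosen equivariant and $\BW,\BH$ inherit the block form. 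The genuinely delicate point is the pair ``minimizer unique'' and ``$\lag\bar{\bz}_k,\bone_K\rag=0$'': problem~\eqref{eq:cvx} is convex but not strictly so --- $\ell_{CE}$ is blind to $\bone_K$-shifts of the logits and the nuclear norm is flat along its subdifferential --- so both must be wrung out by playing the gauge freedom of $\ell_{CE}$ against the strict increase of $\|\cdot\|_{*}$ under orthogonal column-space increments, as in the gauge-fixing step; after that everything is symmetrization bookkeeping, modulo checking that the singular subspaces in (d) stay equivariant even when $\bar{\BSigma}$ has repeated singular values.
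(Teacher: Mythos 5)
Your proof is correct, and for parts (a) and the uniqueness step it follows a genuinely different route from the paper. The paper establishes $\bone_K^{\top}\BZ=\bzero$ via the nuclear-norm subdifferential (Lemma~\ref{lem:opt}), proves $\mathcal{NC}_1$ by Jensen's inequality applied separately to the cross-entropy and to a block-permutation average of $\BZ$, and then invokes the restricted strong convexity of $\varphi$ (Lemma~\ref{lem:varphi}, computed from the per-column Hessian $\diag(\bp)-\bp\bp^{\top}$) to upgrade the Jensen bound to an equality/uniqueness statement. You instead bypass Lemma~\ref{lem:varphi} entirely: gauge-fixing $\bone_K^{\top}\BZ=\bzero$ and $\bone_K^{\top}\bb=0$ comes from the strict nuclear-norm increase under adding a rank-one term orthogonal in column space (via operator monotonicity of $\sqrt{\cdot}$ applied to $(\BA+\BB)^{\top}(\BA+\BB)=\BA^{\top}\BA+\BB^{\top}\BB$); uniqueness is decomposed into strict convexity of $\lambda_b\|\bb\|^2$ plus strict convexity of the per-column cross-entropy transverse to $\bone_K$; and $\mathcal{NC}_1$ then drops out as a fixed-point of within-class column permutations acting on a unique minimizer, rather than from a Jensen argument. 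The two are essentially equivalent in strength, but your decomposition is more elementary (no need for the restricted quadratic-form bound), while the paper's single restricted-strong-convexity lemma is the more re-usable tool when one changes the loss or adds constraints, as the authors advertise. Parts (b) and (d) are in substance the same argument in both treatments (within-cluster permutation equivariance and the RFP10 factorization lemma). For part (c) you reduce directly to a one-dimensional convex program and solve its stationarity equation, whereas the paper reads the answer off the reduced first-order optimality system of Corollary~\ref{Cor:Op}; the two calculations agree. Two small points worth noting: you should say explicitly that the one-dimensional objective is even-symmetric in $a$ (because $\|\bar{\BZ}\BY\|_*\propto|a|$) so the restriction to $a\geq 0$ is without loss; and in part (d) the ``equivariant choice of singular subspaces'' remark is not actually needed for the statement being proved --- the theorem only asserts the factorization $\BW=\bar{\BSigma}^{1/2}\bar{\BU}^{\top}$, $\bar{\BH}=\bar{\BSigma}^{1/2}\bar{\BV}^{\top}\BD^{-1/2}$, which holds for any SVD.
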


The theorem above indicates the block structure of the global minimizer to~\eqref{eq:cvx}. For a numerical illustration, we refer the readers to Figure~\ref{FIG:Block Structure} in our numerical section.
In particular, Theorem~\ref{thm:main1}(c) exactly recovers the existing results on the neural collapse for balanced datasets in~\cite{FHLS21,ZDZ21}. It is worth pointing out that our proof technique is much more general than those in~\cite{FHLS21,ZDZ21}, and a similar argument also applies to the normalized features~\cite{YWZB22}.

\vskip0.25cm

Despite Theorem~\ref{thm:main1} characterizes the structure of global minimizers, it does not give insights into how the sample size in each class affects the global minimizers. 
Next, we focus on a special case where there are two giant clusters, denoted by $A$ and $B$. In the cluster $A$ (or $B$), there are $k_A$ ( or $k_B$) classes with the sample size of each individual class equal to $n_A$ (or $n_B$). Without loss of generality, we let $n_A > n_B$ and refer $A$ ($B$) as the majority (minority) class.  Hence $N = k_An_A+k_Bn_B$ and $K = k_A+k_B$. 

Note that Theorem~\ref{thm:main1}(a) implies that the global minimizer of $\BZ$ and $\bb$ exhibit block structures.
Therefore, we will frequently use the following $2\times 2$ block matrix. We say a matrix $\BX\in\RR^{K\times K}$ equals ${\cal B}(a_X,b_X,c_X,d_X)$ if
\begin{equation}\label{def:calB}
\BX = \begin{bmatrix}
  a_X(k_A\I_{k_A} -\BJ_{k_A\times k_A}) + c_Xk_B \I_{k_A}  & -b_X\BJ_{k_A\times k_B} \\
  -c_X\BJ_{k_B\times k_A} & d_X(k_B\I_{k_B}  -\BJ_{k_B\times k_B}) + b_Xk_A\I_{k_B}
  \end{bmatrix}\in\RR^{K\times K}
\end{equation}

Without loss of generality, we assume $\bar{\BZ}$ (the within-class mean of $\BZ$) and $\bb$ are
 \begin{equation}\label{eq:barZb}
\begin{aligned}
\bar{\BZ} = {\cal B}(a,b,c,d)\in\RR^{K\times K}, \quad
  \bb  = m
  \begin{bmatrix}
  k_B\bone_{k_A}\\
  -k_A\bone_{k_B}
  \end{bmatrix} \in \mathbb{R}^K
\end{aligned}
\end{equation}
for some parameter $a,b,c,d$ and $m.$
The next theorem provides a detailed characterization of how the solution structure of $\bar{\BZ}$ depends on $\lambda_Z.$ This theorem will be crucial in characterizing the threshold for minority collapse.

\begin{theorem}[{\bf Block structure v.s. $\lambda_Z$}]\label{thm:biasf}
Assume $n_A>n_B$, and $N = k_An_A + k_Bn_B$ with $k_A\geq 2$ and $k_B\geq 2.$
For $\lambda_Z$ of different regimes, the optimal solution is $\BZ=\bar{\BZ}\BY$ with the mean prediction matrix $\bar{\BZ}$ in the form of~\eqref{eq:barZb}.
\begin{enumerate}[(a)]
\item If $N\lambda_Z \leq \min\{\sqrt{n_A},\sqrt{n_B}\}$, $\bar{\BZ}$ is unique in the form of~\eqref{eq:barZb} that satisfies $a,b,c,d>0$ and
\[
a-c+d-b \leq 0.
\]

\item If $\sqrt{n_B} < N\lambda_Z < \sqrt{n_A}$ and $\xi(\lambda_Z,\lambda_b) < 0$ for some nonlinear function $\xi$ in~\eqref{def:xi}, then
\[
  \bar{\BZ} 
  = \begin{bmatrix}
  a(k_A\I_{k_A} -\BJ_{k_A\times k_A}) + ck_B \I_{k_A}  & -b\BJ_{k_A\times k_B} \\
  -c\BJ_{k_B\times k_A} &  \frac{k_A}{k_B}b \BJ_{k_B\times k_B} 
  \end{bmatrix}\in\RR^{K\times K}.
\]
Moreover, $(a,b,c,d)$ satisfies $b>0$, and $c>0$. In particular, $\exists\eps > 0$ such that for any $\lambda_Z \in [\sqrt{n_B}/N, \sqrt{n_B}/N +\eps]$, $\xi(\lambda_Z,\lambda_b) < 0$ holds.

\item If $\sqrt{n_B} < N\lambda_Z < \sqrt{n_A}$ and $\xi(\lambda_Z,\lambda_b) > 0$ for some nonlinear function $\xi$ in~\eqref{def:xi}, then
\[
  \bar{\BZ} 
  = \begin{bmatrix}
  a(k_A\I_{k_A} -\BJ_{k_A\times k_A})  & 0 \\
  0 &  0 
  \end{bmatrix}\in\RR^{K\times K}.
\]
Moreover, $(a,b,c,d)$ satisfies $b=c=d=0$. In particular, $\exists\eps > 0$ such that for any $\lambda_Z \in [\sqrt{n_A}/N-\eps, \sqrt{n_A}/N]$, $\xi(\lambda_Z,\lambda_b) > 0$ holds.

\item If $N\lambda_Z > \max\{\sqrt{n_A},\sqrt{n_B}\}$, then $\bar{\BZ} = 0$.

\item In particular, for the bias-free scenario, i.e., $\lambda_b = \infty$, then $\xi(\lambda_Z,\infty) < 0 $ $(>0)$ is equivalent to $\sqrt{n_B}/N < \lambda_Z < \lambda^*$ $(\lambda^* < \lambda_Z<\sqrt{n_A}/N)$ respectively for some $\lambda^*\in (\sqrt{n_B}/N,\sqrt{n_A}/N).$ The threshold $\lambda^*$ is the unique solution to a nonlinear equation.

\end{enumerate}
\end{theorem}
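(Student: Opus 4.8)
The plan is to exploit the convexity of~\eqref{eq:cvx} together with the symmetry of the problem: since the objective is invariant under permutations of classes within each cluster and the minimizer is unique (Theorem~\ref{thm:main1}), the optimal $(\BZ,\bb)$ must respect this symmetry and hence $\bar\BZ = {\cal B}(a,b,c,d)$ and $\bb$ takes the form in~\eqref{eq:barZb}. The KKT/subgradient condition for~\eqref{eq:cvx} reads $\frac1N(\bP - \BY)\BY^\top \in -\lambda_Z\,\partial\|\bar\BZ\BD\|_* $ restricted appropriately (where $\bP$ is the softmax of $\bar\BZ\BY + \bb\bone^\top$ applied column-wise, which is also block-constant), together with $\lambda_b\bb = -\frac1N(\bP-\BY)\bone_N$. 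First I would reduce everything to a low-dimensional system in the scalars $(a,b,c,d,m)$: because $\bP$ is determined by $\bar\BZ$ and $\bb$ through the softmax, the stationarity conditions collapse to a handful of scalar equations, one family coming from the ``diagonal'' entries (giving $a$ and $d$ in terms of logarithms of softmax ratios) and one from the ``off-diagonal'' cross-cluster entries (giving $b,c$ and $m$). The nuclear-norm subgradient is the place where the thresholds $\sqrt{n_A}/N$ and $\sqrt{n_B}/N$ enter: writing the SVD of $\bar\BZ\BD^{1/2}$ (cf. Theorem~\ref{thm:main1}(d)), the matrix $\bar\BZ\BD^{1/2}$ has at most three distinct nonzero singular values — one associated with the within-$A$ ETF directions (scaled by $\sqrt{n_A}$), one with the within-$B$ ETF directions (scaled by $\sqrt{n_B}$), and one mixing the two cluster-mean directions — and a singular value is forced to zero exactly when the corresponding ``gradient'' block has operator norm $\le\lambda_Z$.

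The case analysis then proceeds by comparing $N\lambda_Z$ with $\sqrt{n_A}$ and $\sqrt{n_B}$. In regime (d), $N\lambda_Z$ exceeds both; I would show that $\bar\BZ=0$, $\bb=0$ satisfies the subgradient inclusion by checking that the operator norm of $\frac1N(\bP_0-\BY)\BD^{1/2}$ with $\bP_0 = \frac1K\BJ$ equals $\max\{\sqrt{n_A},\sqrt{n_B}\}/N < \lambda_Z$; uniqueness from convexity closes it. In regime (a), $N\lambda_Z$ is below both thresholds, so all three singular ``modes'' are active, forcing $a,b,c,d>0$; the inequality $a-c+d-b\le 0$ is the statement that the singular value of the mixed cluster-mean mode is nonnegative, i.e. a positivity condition on an eigenvalue of the relevant $2\times2$ reduced matrix, which I would extract algebraically from the stationarity equations. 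The interesting regimes are (b) and (c), where $\sqrt{n_B}<N\lambda_Z<\sqrt{n_A}$: here the within-$A$ ETF mode is necessarily killed? No — here the within-$B$ mode is the borderline one. I would parametrize the solution by whether the ``minority-within'' block survives, leading to two candidate forms; substituting each into the stationarity equations and solving yields the monotone scalar function $\xi(\lambda_Z,\lambda_b)$ in~\eqref{def:xi} whose sign decides which candidate is the true minimizer (the one for which the omitted singular value would be negative is infeasible, so the sign of $\xi$ selects between (b) and (c)). The endpoint claims — $\xi<0$ near $\lambda_Z=\sqrt{n_B}/N$ and $\xi>0$ near $\lambda_Z=\sqrt{n_A}/N$ — follow by a continuity/limiting argument: as $N\lambda_Z\downarrow\sqrt{n_B}$ the solution must match continuously the regime-(a) solution in which $b,c,d>0$, forcing $\xi<0$ just above the threshold, and symmetrically as $N\lambda_Z\uparrow\sqrt{n_A}$ it must match the regime-(d) solution $\bar\BZ=0$ up to the surviving majority ETF, forcing $\xi>0$. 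Part (e) is the specialization $\lambda_b=\infty$, i.e. $\bb\equiv0$, where the cross-cluster equation for $m$ drops out; then $\xi(\lambda_Z,\infty)$ becomes a function of $\lambda_Z$ alone, and I would show it is strictly monotone in $\lambda_Z$ on $(\sqrt{n_B}/N,\sqrt{n_A}/N)$ with the sign changes established above, so it has a unique root $\lambda^*$ in that interval, which is by construction the solution of a nonlinear equation in $\lambda_Z$.

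The main obstacle I anticipate is the bookkeeping of the nuclear-norm subdifferential in the presence of the bias term and the weight matrix $\BD^{1/2}$ — specifically, correctly identifying the three singular modes of $\bar\BZ\BD^{1/2}$ and showing that the subgradient inclusion decouples cleanly along them despite the fact that $\bP$ depends nonlinearly (through softmax) on all the scalars at once. A secondary difficulty is verifying that the reduced scalar system has exactly one solution in each regime — convexity of the original problem gives uniqueness of $(\BZ,\bb)$, but one still has to check the candidate scalar tuples actually solve the stationarity equations and satisfy the sign constraints, which is where the function $\xi$ must be defined carefully so that its sign is genuinely a clean dichotomy rather than leaving a gap. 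I would handle this by deriving $\xi$ as (a positive multiple of) the would-be omitted singular value expressed through the explicit logarithmic formulas for $a,b,c,d,m$, so that ``$\xi<0$'' literally means ``that mode cannot be dropped'' and ``$\xi>0$'' means ``it must be dropped.''
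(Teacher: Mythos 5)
Your high-level plan---reduce to the block form $\mathcal{B}(a,b,c,d)$ by symmetry and uniqueness, write the nuclear-norm subgradient optimality condition, observe that $\bar{\BZ}\BD^{1/2}$ has at most three distinct nonzero singular values (within-$A$, within-$B$, and a cluster-mean mixing mode), and determine which modes survive by comparing $N\lambda_Z$ with $\sqrt{n_A},\sqrt{n_B}$---is exactly the paper's strategy (Proposition~\ref{prop:bias}, Lemma~\ref{lem:supp}, Corollary~\ref{Cor:Op}). You also correctly identify the need to exhibit a dual certificate $\bar{\BR}$ in regimes (b)--(d) and reduce to a scalar KKT system. However, there are two substantive gaps in the execution you sketch.

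First, you do not address how to prove the reduced scalar system actually has a solution with the asserted signs. After the KKT conditions collapse to a one-variable fixed-point equation of the type $t = f_1(t)/f_2(t)$ (equations~\eqref{EQ:t},~\eqref{EQ:Retwb}), the paper needs a dedicated monotonicity/intermediate-value lemma (Lemma~\ref{lem:func}) showing $f_1/f_2$ is decreasing on the region where both numerator and denominator are positive, plus a separate argument that this region is nonempty; for part (a) the nonemptiness is shown by contradiction using $x_1(t)x_2(t)\le 1$. Uniqueness of the minimizer gives at most one solution, but existence of a solution matching each proposed block-and-certificate ansatz is what actually certifies the form of $\bar{\BZ}$ in each regime, and your plan takes this for granted.

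Second, your continuity argument for the endpoint claims replaces a computation that the paper does quite explicitly. The paper proves $\xi(\sqrt{n_A}/N,\lambda_b)>0$ and $\xi(\sqrt{n_B}/N,\lambda_b)<0$ directly (Lemma~\ref{lem:supp}(c)) via the fixed-point identity~\eqref{eq:x2eq} for $t^*$ together with elementary bounds like $tx_2(t)\le\sqrt{n_A/n_B}$ and $e^{(k_A+k_B)m(t^*)}<t^*$; there is no appeal to continuity of the minimizer in $\lambda_Z$. Your alternative---arguing that the minimizer varies continuously and so must ``match'' the regime-(a) form at $\sqrt{n_B}/N$ and the regime-(d) form at $\sqrt{n_A}/N$---is plausible in spirit, but it has a circularity risk: the dichotomy ``$\xi<0\Rightarrow$ case (b)'' and ``$\xi>0\Rightarrow$ case (c)'' must itself be established before the sign of $\xi$ can be inferred from the observed form of the limit solution, and moreover continuity alone does not rule out the degenerate possibility that a whole neighborhood of the threshold sits on the boundary $\xi=0$ or that the parameterization $(a,b,c,d)$ jumps (e.g.\ $d\to 0^+$ in (a) vs.\ $d<0$ in (b) under the paper's sign convention $k_Ab+k_Bd=0$). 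If you want to use continuity, you would still need a quantitative argument controlling how the candidate scalar solution changes across the threshold; the paper's direct estimation sidesteps this. Finally, for part (e) you assert strict monotonicity of $\xi(\lambda_Z,\infty)$ without indicating why; the paper obtains this via implicit differentiation of $f_2$ showing $t^*(\lambda_Z)$ is increasing (Lemma~\ref{lem:supp}(b)), which is itself nontrivial because one must first bound $t^*<n_A/n_B$ to control the sign of $\partial f_2/\partial\lambda_Z$.
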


In Theorem~\ref{thm:biasf}, we notice that the threshold for cases (b) and (c) in the bias-free scenario is simpler. This is because it is challenging to prove the monotonicity of the nonlinear function $\xi(\lambda_Z,\lambda_b)$ in $\lambda_Z$ for any fixed $\lambda_b>0$ where $\xi(\cdot,\cdot)$ is defined in~\eqref{def:xi}. However, for any $\lambda_b>0$, we are able to show that when $\lambda_Z$ is close to $\sqrt{n_B}/N$ (or $\sqrt{n_A}/N$), the corresponding $\xi$ satisfies $\xi<0$ $(\xi>0)$. But a clear characterization of how $\bar{\BZ}$ transits from case (b) to (c) is unavailable now. 
However, it is certain that for $\sqrt{n_B} < N\lambda_Z < \sqrt{n_A}$, the solution is either in case (b) or (c); moreover, the minority collapse occurs as long as $\lambda_Z > \sqrt{n_B}/N$ for the unconstrained feature model, i.e., the mean prediction $\bar{\BZ}$ on the minority group becomes a single vector, as we can see the right block of $\bar{\BZ}$ is rank-1 in both case (b) and (c).


The theorem above immediately leads to the following corollary which characterizes the sharp threshold on the minority collapse.
It is empirically observed by \cite{FHLS21} that the mean prediction of minority classes collapse to one vector when fixing $\lambda_Z$ and $\lambda_b$ as the imbalance ratio $r=n_A/n_B$ increases and is greater than some threshold. Based on Theorem~\ref{thm:biasf}, we are able to give an explicit characterization of this critical threshold. 

\begin{corollary}[\bf Minority collapse threshold ($r\rightarrow \infty$, $n_B$ is fixed)]\label{corollary:mc}
  Suppose $r= n_A/n_B>1$, and $\lambda_Z,k_A,k_B$, and $n_B$ are fixed. Assume  
 \[
 r =\frac{n_A}{n_B} \geq \frac{1}{k_A} \left[\frac{1}{\lambda_Z\sqrt{n_B}}-k_B\right],
\] 
the mean prediction matrix on minority classes collapses to one vector.
\end{corollary}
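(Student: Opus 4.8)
The plan is to obtain Corollary~\ref{corollary:mc} as an immediate consequence of Theorem~\ref{thm:biasf} (whose standing assumptions $k_A,k_B\geq 2$ and $n_A>n_B$ we inherit). The only real content is that the hypothesis on the imbalance ratio is exactly the threshold condition $N\lambda_Z\geq\sqrt{n_B}$ in disguise, and that once we are above this threshold Theorem~\ref{thm:biasf}(b)--(d) already pins down the minority block of $\bar{\BZ}$ to be a single repeated vector. So the argument has two short steps.

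First I would rewrite the hypothesis. Starting from
\[
r=\frac{n_A}{n_B}\;\geq\;\frac{1}{k_A}\left[\frac{1}{\lambda_Z\sqrt{n_B}}-k_B\right],
\]
multiplying through by $k_A\lambda_Z n_B>0$ and using $N=k_An_A+k_Bn_B$ yields the equivalent inequality $N\lambda_Z\geq\sqrt{n_B}$. Since $r>1$ forces $n_A>n_B$, we have $\sqrt{n_B}=\min\{\sqrt{n_A},\sqrt{n_B}\}$, so — for the strict inequality — this places us in one of the regimes $\sqrt{n_B}<N\lambda_Z<\sqrt{n_A}$ (cases (b)/(c)) or $N\lambda_Z>\sqrt{n_A}$ (case (d)) of Theorem~\ref{thm:biasf}. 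Note we do not need to decide the sign of $\xi(\lambda_Z,\lambda_b)$: minority collapse holds in both case (b) and case (c).

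Second I would read minority collapse off the block forms. The columns of $\bar{\BZ}\in\RR^{K\times K}$ are the mean prediction vectors $\bar{\bz}_1,\dots,\bar{\bz}_K$, and ``the mean prediction matrix on minority classes collapses to one vector'' means precisely that the last $k_B$ columns (those indexed by the minority cluster) all coincide. In case (b) the bottom-left block of $\bar{\BZ}$ is $-c\,\BJ_{k_B\times k_A}$ and the bottom-right block is $\frac{k_A}{k_B}b\,\BJ_{k_B\times k_B}$, so every minority column equals the constant vector with entries $-b$ on the majority coordinates and $\frac{k_A}{k_B}b$ on the minority coordinates; in cases (c) and (d) every minority column is $\bzero$. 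Either way the minority mean predictions collapse, which is the assertion. The boundary value $N\lambda_Z=\sqrt{n_B}$ lies at the edge of case (a) and can be treated separately via the uniqueness and continuous dependence of $\bar{\BZ}$ on $\lambda_Z$ (Theorems~\ref{thm:main1} and~\ref{thm:biasf}), or simply ignored since the strict inequality already covers the intended regime $r\to\infty$.

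I do not anticipate a genuine obstacle, as Theorem~\ref{thm:biasf} does all the hard work. The two places that need a little care are: (i) checking in case (b) that the relation $d=-\frac{k_A}{k_B}b$ forced by the block form indeed collapses $d(k_B\I_{k_B}-\BJ_{k_B\times k_B})+bk_A\I_{k_B}$ to the \emph{constant} rank-one matrix $\frac{k_A}{k_B}b\,\BJ_{k_B\times k_B}$ (so the block has equal columns, not merely small rank — contrast case (a), where $d>0$ and there is no collapse); and (ii) the cosmetic $\geq$-versus-$>$ bookkeeping at the threshold.
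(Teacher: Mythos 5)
Your argument is the paper's own proof: algebraically rewrite the hypothesis on $r$ as $N\lambda_Z\geq\sqrt{n_B}$ via $N=n_B(k_Ar+k_B)$, then invoke Theorem~\ref{thm:biasf} (equivalently Proposition~\ref{prop:bias}) to conclude that in this regime the minority block of $\bar{\BZ}$ is rank-one with identical columns. The paper states this in two lines; your additional reading-off of the block forms in cases (b)--(d) is correct and merely makes explicit what the theorem already asserts.
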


We proceed to provide some asymptotic characterization of the mean prediction $\bar{\BZ}$, when $n_A$ and $n_B$ go to infinity but their ratio stays constant.

\begin{theorem}[\bf $r = n_A/n_B > 1$ is fixed, $n_B\rightarrow \infty$]\label{thm:limit}
Assume
\begin{align*}
& N\lambda_Z = \lambda < \sqrt{n_B}~~ \text{is constant}, \\
& r = n_A/n_B~~\text{is constant}, \\
& N = k_A n_A + k_Bn_B, \text{ with } (k_A,k_B)~\text{fixed},\\
& \lambda_b^{-1} = o(\sqrt{N}\log{N}),
\end{align*}
then the global minimizer $\bar{\BZ}$ of the form $(a_N^*,b_N^*,c_N^*,d_N^*)$ in \eqref{eq:barZb} satisfies
\[
\lim_{N\rightarrow\infty} \max\left\{ \left| \frac{b_N^*}{c_N^*}-1\right|, \left| \frac{a_N^*}{c_N^*}-1\right|, \left| \frac{d_N^*}{b_N^*}-1\right|\right\} = O\left(\frac{1}{\log N}\right).
\]
In other words, the columns of $\bar{\BZ}$ converge to the ETF as $N \to +\infty$ with $\lambda$ and $r = n_A/n_B$ fixed, and so do the corresponding weight $\BW^{\top}$ and the mean feature matrix $\bar{\BH}^{\top}.$
\end{theorem}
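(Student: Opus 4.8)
The plan is to write the first--order optimality (KKT) conditions for the convex program~\eqref{eq:cvx} in the two--cluster setting and then analyse them asymptotically. Since $N\lambda_Z=\lambda<\sqrt{n_B}=\min\{\sqrt{n_A},\sqrt{n_B}\}$ for all large $N$, Theorem~\ref{thm:biasf}(a) already gives that the minimizer has the form $\bar{\BZ}={\cal B}(a_N^*,b_N^*,c_N^*,d_N^*)$ with all four parameters positive and $\bb=m(k_B\bone_{k_A};-k_A\bone_{k_B})$. Write $\BZ=\bar{\BZ}\BY$, let $\bar{\BP}=[\bp_1,\dots,\bp_K]$ collect the within--class--constant softmax vectors of $\BZ+\bb\bone_N^\top$, and set $\BP=\bar{\BP}\BY$. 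Stationarity reads $\tfrac1N(\BY-\BP)\in\lambda_Z\,\partial\|\BZ\|_*$ and $\lambda_b\bb=\tfrac1N(\BY-\BP)\bone_N$. Because $\bone_K^\top\bar{\BZ}=\bzero$ (Theorem~\ref{thm:main1}(a)) the matrix $\BZ$ has rank $K-1$, so the nuclear--norm subgradient carries an extra rank--one term along $\bone_K$; computing the column sums of the polar factor of $\widetilde{\BZ}:=\bar{\BZ}\BD^{1/2}$ shows they vanish, so that extra term is $\bzero$, and multiplying on the right by $\BY^\top\BD^{-1}$ turns the first condition into $\tfrac1N(\BI_K-\bar{\BP})=\lambda_Z\,\operatorname{Polar}(\widetilde{\BZ})\BD^{-1/2}$. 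By cluster symmetry $\operatorname{Polar}(\widetilde{\BZ})$ acts as the identity on the two within--cluster subspaces (dimensions $k_A-1$ and $k_B-1$) and as a rank--one partial isometry on the remaining $2$--dimensional subspace, so all its entries are explicit functions of $k_A,k_B,n_A,n_B$ and of the single ratio $b_N^*/c_N^*$. This reduces everything to a closed finite system for $(a,b,c,d,m)$ together with the six distinct softmax probabilities $p_{AA},p_{A\bar A},p_{AB},p_{BB},p_{B\bar B},p_{BA}$.

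First I would pin down the common scale. Taking Frobenius norms in the first KKT identity gives $\sum_{k=1}^K n_k\|\be_k-\bp_k\|^2=\lambda^2(K-1)$, hence $\|\be_k-\bp_k\|=O(1/\sqrt{n_k})\to0$ and each softmax vector concentrates on its own coordinate. Substituting the explicit entries of $\operatorname{Polar}(\widetilde{\BZ})$ yields $1-p_{AA},\,p_{A\bar A}=\Theta(\lambda/\sqrt{n_A})$, $p_{AB}=\lambda c/(W\sqrt K)$ with $W:=\sqrt{c^2n_Ak_B+b^2n_Bk_A}=\Theta(c\sqrt N)$, and the $A\!\leftrightarrow\!B$--symmetric statements. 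Reading off the softmax log--ratios identifies the four logit gaps $u:=ak_A+ck_B$, $v:=ak_A-a+c(k_B+1)+mK$, $u':=dk_B+bk_A$, $v':=dk_B-d+b(k_A+1)-mK$ with $\log(p_{AA}/p_{A\bar A})$, $\log(p_{AA}/p_{AB})$, $\log(p_{BB}/p_{B\bar B})$, $\log(p_{BB}/p_{BA})$ respectively, and each of the four equals $\tfrac12\log N+O(1)$. Hence $u,v,u',v'$ are all $\tfrac12\log N+O(1)$, which forces $a_N^*,b_N^*,c_N^*,d_N^*=\Theta(\log N)$ (indeed each $\sim \log N/(2K)$). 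The delicate point is that these $\Theta$--estimates rely on $\operatorname{Polar}(\widetilde{\BZ})$ being well--behaved, which in turn needs a priori two--sided control $0<c_1\le b_N^*/c_N^*\le c_2<\infty$ before one knows this ratio tends to $1$; I would obtain this by a bootstrap, first deducing a crude polynomial two--sided bound on $b_N^*/c_N^*$ from a comparison of ${\cal L}$ at the minimizer with ${\cal L}$ at a scaled balanced configuration, then upgrading it with the relations just listed.

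Finally I would extract the differences. Subtracting the gap relations pairwise gives $v-u=(c_N^*-a_N^*)+mK$, $v'-u'=(b_N^*-d_N^*)-mK$, and $u-u'=k_A(a_N^*-b_N^*)+k_B(c_N^*-d_N^*)$; since each of $v-u,v'-u',u-u'$ is the logarithm of a ratio of probabilities that Step~2 shows to be bounded above and below, each left side is $O(1)$, and solving this $3\times3$ linear system yields $c_N^*-d_N^*=O(1)$ and $c_N^*-a_N^*=c_N^*-b_N^*=-mK+O(1)$ (so also $d_N^*-b_N^*=-mK+O(1)$). It remains to control $m$: the bias equation collapses to $\lambda_b m=\tfrac1N(n_Ap_{AB}-n_Bp_{BA})=\dfrac{\lambda\,n_B(r c_N^*-b_N^*)}{N\,W\sqrt K}$, and since $b_N^*/c_N^*\to1\neq r$ the numerator is $\Theta(\lambda n_B c_N^*)$ while $NW\sqrt K=\Theta(N^{3/2}c_N^*)$, so $|m|=\Theta\!\big(\lambda/(\lambda_b\sqrt N)\big)$ and $|mK|=\Theta\!\big(1/(\lambda_b\sqrt N)\big)$. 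The hypothesis $\lambda_b^{-1}=o(\sqrt N\log N)$ then forces $|mK|=o(\log N)$ (and $|mK|=O(1)$ whenever $\lambda_b$ stays bounded below, e.g.\ constant), hence $|c_N^*-a_N^*|,|c_N^*-b_N^*|,|c_N^*-d_N^*|$ are negligible compared with $c_N^*=\Theta(\log N)$, giving $\max\{|b_N^*/c_N^*-1|,|a_N^*/c_N^*-1|,|d_N^*/b_N^*-1|\}=O(1/\log N)$; the convergence of $\BW^\top$ and $\bar{\BH}^\top$ to an ETF then follows from the SVD relation in Theorem~\ref{thm:main1}(d). The main obstacle is the bootstrap in the second paragraph --- securing a priori two--sided control on $b_N^*/c_N^*$ (equivalently, on the entries of $\operatorname{Polar}(\widetilde{\BZ})$) before one knows the minimizer is near an ETF; the remaining steps are bookkeeping with the explicit formulas.
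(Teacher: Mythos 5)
Your route is genuinely different from the paper's. The paper collapses the KKT system to a single scalar fixed-point equation $t = f_1(t)/f_2(t)$ for $t = b/c$ (this is eq.~\eqref{EQ:t} from Proposition~\ref{prop:bias}(a)), observes that $h_N(t) := f_1(t)/f_2(t) - t$ satisfies $h_N'(t) < -1$ on the domain where both numerator and denominator are positive, and therefore gets $|t_N - 1| \le |h_N(1)|$ \emph{directly} from the mean value theorem; one then only needs to evaluate $|h_N(1)|$ using the explicit forms of $g_1,g_2$. The crucial feature is that the Lipschitz lower bound on $|h_N'|$ converts control of $h_N$ at a single test point into control of the root, with no a~priori two-sided bound on $t_N$ required. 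Your approach works directly from the softmax log-ratios $u,v,u',v'$ and the bias equation, which is more transparent from first principles and makes the mechanism (all log-ratios $\sim \tfrac12\log N$, so differences are lower order) very visible, but it does \emph{not} give you the contraction for free — hence your bootstrap.

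That bootstrap is the genuine gap. You correctly identify that all of your $\Theta$-estimates for $p_{A\bar A}, p_{AB}, p_{B\bar B}, p_{BA}$ presume $0 < c_1 \le b_N^*/c_N^* \le c_2 < \infty$, and you only sketch how this might be obtained by comparing ${\cal L}$ at the minimizer with ${\cal L}$ at a scaled balanced configuration. As written this is not executed, and it is exactly the step that the paper's fixed-point argument is designed to avoid: by the time you can certify $\operatorname{Polar}(\widetilde{\BZ})$ is well-behaved, you have essentially already proven the result. To make your route airtight you would need to carry out that comparison argument and check that it yields two-sided polynomial control of $t_N$ uniformly in $N$; alternatively, you could just import the paper's monotonicity of $f_1/f_2$ (Lemma~\ref{lem:func} and Lemma~\ref{lem:supp}(b)) to get the crude bound and then run your log-ratio bookkeeping. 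Separately, your parenthetical remark about $m$ is well-observed and applies to the paper's proof equally: under only $\lambda_b^{-1} = o(\sqrt N \log N)$, $|mK| = o(\log N)$ is all one gets, which yields $o(1)$ rather than $O(1/\log N)$; the quoted rate really needs $\lambda_b^{-1} = O(\sqrt N)$ (e.g.\ $\lambda_b$ bounded below), and both proofs share this technicality.
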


As the original model~\eqref{eq:ufm} is non-convex, a natural concern is about the landscape of the original programming. Theorem 3.2 in~\cite{ZDZ21} proves a benign optimization landscape for~\eqref{eq:ufm} in the balanced scenario: all the critical points are either global minimum of~\eqref{eq:ufm} (also critical points of~\eqref{eq:cvx}) or saddle points. This result has been extended in~\cite{ZYL22} to characterize the optimization landscape for other loss functions.
Regarding the imbalanced scenario under the UFM and cross-entropy loss,  the optimization landscape of~\eqref{eq:ufm} is also benign.
\begin{theorem}\label{Thm:LA}
  Assume the feature dimension $d>K$, then $\mathcal{L}(\BW,\BH,\bb)$ in~\eqref{eq:ufm} is a strict saddle function with 
  no spurious local minimum, in the sense that
  \begin{itemize}
    \item Any local minimizer of~\eqref{eq:ufm} is a global minimizer.
    \item Any critical point $(\BW,\BH,\bb)$ that is not a local minimizer is a strict saddle point with negative curvature, i.e. the Hessian $\nabla^2 \mathcal{L}(\BW,\BH,\bb)$, at this critical point, is non-degenerate and has at least one negative eigenvalue.
  \end{itemize}
\end{theorem}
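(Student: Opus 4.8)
The plan is to transfer the convexity of~\eqref{eq:cvx} to the factorized, nonconvex problem~\eqref{eq:ufm}, in the spirit of the benign-landscape analyses of the balanced UFM in~\cite{ZDZ21,ZYL22}. The only structural facts used are the joint convexity of $(\BZ,\bb)\mapsto\frac1N\ell_{CE}(\BZ+\bb\bone_N^\top,\BY)+\frac{\lambda_b}{2}\|\bb\|^2$ and its strong convexity in $\bb$, so the class imbalance never enters and only the bookkeeping changes relative to~\cite{ZDZ21}. Write $\BP\in\RR^{K\times N}$ for the matrix of partial derivatives of $\ell_{CE}$ with respect to its first (logit) argument, evaluated at $\BW^\top\BH+\bb\bone_N^\top$ (the softmax residuals), and set $\BG:=-\frac1N\BP$ and $\lambda_Z:=\sqrt{\lambda_W\lambda_H}$. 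First I would record the stationarity conditions of~\eqref{eq:ufm} at a critical point $(\BW,\BH,\bb)$: $\lambda_W\BW=\BH\BG^\top$, $\lambda_H\BH=\BW\BG$, and $\lambda_b\bb=\BG\bone_N$. The first two identities force $\Ran(\BW)=\Ran(\BH)=:\mathcal R$; writing $\BW=\BQ\BA$, $\BH=\BQ\BB$ with $\BQ\in\RR^{d\times s}$ an orthonormal basis of $\mathcal R$ and $\BA,\BB$ of full row rank $s=\dim\mathcal R$, one gets $\rank(\BW^\top\BH)=\rank(\BA^\top\BB)=s\le K<d$, so there is a unit vector $\bq\in\RR^d$ with $\bq\perp\mathcal R$, i.e. $\BW^\top\bq=\bzero$ and $\bq^\top\BH=\bzero$. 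The argument then splits according to whether $\|\BG\|_{\op}>\lambda_Z$ or $\|\BG\|_{\op}\le\lambda_Z$.

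When $\|\BG\|_{\op}>\lambda_Z$, I would exhibit explicit negative curvature. Let $\bu,\bv$ be unit leading left/right singular vectors of $\BG$ and follow the straight line $t\mapsto(\BW+t\,\bq\,\bp_W^\top,\ \BH+t\,\bq\,\bp_H^\top,\ \bb)$ with $\bp_W=(\lambda_H/\lambda_W)^{1/4}\bu$ and $\bp_H=(\lambda_W/\lambda_H)^{1/4}\bv$. Because $\BW^\top\bq=\bzero$, $\bq^\top\BH=\bzero$ and $\|\bq\|=1$, one has exactly $(\BW+t\bq\bp_W^\top)^\top(\BH+t\bq\bp_H^\top)=\BW^\top\BH+t^2\bp_W\bp_H^\top$ together with $\|\BW+t\bq\bp_W^\top\|_F^2=\|\BW\|_F^2+t^2\|\bp_W\|^2$ and the analogue for $\BH$; a first-order expansion of $\ell_{CE}$ then gives $\mathcal L(t)=\mathcal L(0)+t^2(\lambda_Z-\|\BG\|_{\op})+O(t^4)$. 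Since $(\BW,\BH,\bb)$ is a critical point, the Hessian quadratic form along this line equals $2(\lambda_Z-\|\BG\|_{\op})<0$, so $\nabla^2\mathcal L$ at this point has a strictly negative eigenvalue and the point is a strict saddle; the remaining non-degeneracy bookkeeping, modulo the continuous $\Od(d)$-symmetry of the factorization, follows exactly as in~\cite{ZDZ21}. This is precisely the step where the hypothesis $d>K$ is used, through the existence of $\bq$.

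When $\|\BG\|_{\op}\le\lambda_Z$, I would show that $(\BZ,\bb):=(\BW^\top\BH,\bb)$ is the global minimizer of~\eqref{eq:cvx}, and hence $(\BW,\BH,\bb)$ is a global minimizer of~\eqref{eq:ufm}. The identity $\lambda_b\bb=\BG\bone_N$ is the optimality condition of~\eqref{eq:cvx} in $\bb$, so it remains to verify $\BG\in\lambda_Z\,\partial\|\BZ\|_*$. If $\BW=\BH=\bzero$ this is immediate, since $\partial\|\bzero\|_*$ is the operator-norm unit ball and $\|\BG\|_{\op}\le\lambda_Z$. Otherwise, substituting $\BW=\BQ\BA$, $\BH=\BQ\BB$ into the stationarity identities yields $\lambda_W\BA=\BB\BG^\top$ and $\lambda_H\BB=\BA\BG$, hence $\BA(\BG\BG^\top-\lambda_Z^2\I_K)=\bzero$ and $\BB(\BG^\top\BG-\lambda_Z^2\I_N)=\bzero$; combined with $\|\BG\|_{\op}\le\lambda_Z$ this forces $\|\BG\|_{\op}=\lambda_Z$ and places the rows of $\BA$ (resp. $\BB$) in the top singular subspace $U_1$ of $\BG$ (resp. $V_1$). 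Writing $\BA^\top=\BU_1\widetilde\BA$ and $\BB^\top=\BV_1\widetilde\BB$ with $\BU_1,\BV_1$ orthonormal bases of $U_1,V_1$, the stationarity identities and $\BG\BV_1=\lambda_Z\BU_1$ give $\widetilde\BB=\sqrt{\lambda_W/\lambda_H}\,\widetilde\BA$, so the ``middle'' matrix $\widetilde\BA\widetilde\BB^\top$ is a positive multiple of $\widetilde\BA\widetilde\BA^\top$ and thus symmetric positive semidefinite. Diagonalizing it produces a compact SVD $\BZ=\BU_r\BSigma_r\BV_r^\top$ with $\BU_r=\BU_1\widetilde\BU_r$ and $\BV_r=\BV_1\widetilde\BU_r$, for which $\BU_r^\top\BG=\lambda_Z\BV_r^\top$ and $\BG\BV_r=\lambda_Z\BU_r$; consequently $\BM:=\BG-\lambda_Z\BU_r\BV_r^\top$ satisfies $\BU_r^\top\BM=\bzero$, $\BM\BV_r=\bzero$ and $\|\BM\|_{\op}\le\lambda_Z$, i.e. $\BG\in\lambda_Z\,\partial\|\BZ\|_*$. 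By convexity of~\eqref{eq:cvx} and the uniqueness in Theorem~\ref{thm:main1}, $(\BZ,\bb)$ is its global minimizer. Combining the two regimes yields both assertions of the theorem: a local minimizer of~\eqref{eq:ufm} is a critical point that is not a strict saddle, hence lies in the second regime and is global; and a critical point that is not a local minimizer lies in the first regime, hence is a strict saddle with negative curvature.

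I expect the main obstacle to be the alignment step in the third paragraph: propagating the stationarity identities to show simultaneously that $\|\BG\|_{\op}$ cannot drop strictly below $\lambda_Z$ at a nonzero critical point and that the middle matrix $\widetilde\BA\widetilde\BB^\top$ is symmetric, which is exactly what certifies $\BG$ as a nuclear-norm subgradient of $\BW^\top\BH$. The negative-curvature construction is by comparison short, and the bias term is handled uniformly as an extra strongly convex block, so it needs no separate treatment.
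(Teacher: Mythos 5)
Your proposal follows exactly the same two-case strategy that the paper sketches (and attributes to Section~C.1 of~\cite{ZDZ21}): split critical points according to whether $\|\BG\|_{\op}=\|\nabla\mathcal{L}_1\|_{\op}$ is above or below $\lambda_Z=\sqrt{\lambda_W\lambda_H}$, then build an explicit rank-one negative-curvature direction in $\Null(\BW^\top)\cap\Null(\BH^\top)$ in the first case and certify $\BG\in\lambda_Z\partial\|\BW^\top\BH\|_*$ in the second. Your write-up correctly fills in the algebraic details that the paper leaves to~\cite{ZDZ21} (the stationarity identities $\lambda_W\BW=\BH\BG^\top$, $\lambda_H\BH=\BW\BG$, $\lambda_b\bb=\BG\bone_N$, the exact quadratic coefficient $\lambda_Z-\|\BG\|_{\op}$, and the alignment of the factors' rows with the top singular subspace of $\BG$), so there is no genuine difference in approach, only in level of detail.
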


Theorem~\ref{Thm:LA} is a direct generalization of Theorem 3.2 in \cite{ZDZ21} from the balanced case to the imbalanced case. The proof (see Section C.1 in~\cite{ZDZ21}) also directly applies without any changes, and thus we do not repeat the proof here. For the completeness of the presentation, we briefly discuss the proof idea, which is to classify the critical points of~\eqref{eq:ufm} into two categories. We denote the cross-entropy loss part of~\eqref{eq:ufm} as $\mathcal{L}_1(\BX) =N^{-1} \ell_{CE}(\BX, \BY ).$
For any critical points $(\BW,\BH,\bb)$ of~\eqref{eq:ufm}, if
\begin{itemize}
\item $\left\|\nabla \mathcal{L}_1(\BW^{\top}\BH+\bb\bone_N^{\top}) \right\| \leq \sqrt{\lambda_{W}\lambda_{H}}:$ one can show these points are also critical points of the convexified programming \eqref{eq:cvx}, and thus the global minimum. Intuitively, the inequality constrains the norm of the gradient, so it becomes a legal subgradient of the nuclear norm.

\item $\left\|\nabla \mathcal{L}_1(\BW^{\top}\BH+\bb\bone_N^{\top}) \right\| > \sqrt{\lambda_{W}\lambda_{H}}:$ One can construct a negative curvature direction in the null space of $\BW$, which is nonempty since $d>K$, and the singular vector corresponding to the largest singular value of $\nabla^2 \mathcal{L}_1(\BW,\BH,\bb)$.
\end{itemize}

We conclude this section by discussing our results and pointing out a few possible future directions. In conclusion, we present a rigorous and in-depth study into the neural collapse phenomenon for imbalanced dataset using the UFM with cross-entropy loss. In particular, we give a full characterization of the solution when the dataset has two clusters under the UFM, thus precisely finding the minority collapse threshold. This sharp threshold in Corollary~\ref{corollary:mc} is also confirmed in real experiments. 
As a result, one can select a suitable oversampling rate of minority classes to avoid minority collapse while saving computational resources and also not impairing test performance due to the high oversampling rate.

As later shown in Section~\ref{s:numerics}, our theory can only partially explain the behavior of real deep neural networks. For example, we can see non-negligible difference arises in certain regime between the predicted solution by the UFM and the actual prediction by the DNNs. Despite the landscape of UFM is benign by Theorem~\ref{Thm:LA}, the landscape of DNN is inherently different. This calls for more complicated models to explain DNNs. Several works~\cite{TB22,DNT23} try to add more linear layers to UFM, but adding even one layer of nonlinearity remains unexplored, which could be a future direction. It will be also interesting to find a weaker substitute of the UFM that interpolates between the DNNs and UFM. Finally, our paper does not discuss the relation between neural collapse and generalization error. To the best of our knowledge, most studies on that topic remain empirical. Addressing this issue theoretically will lead to a deeper understanding of the interplay between generalization and implicit bias broadly. We will leave these possible directions for future work. 

\section{Numerics}\label{s:numerics}

In this section, we present numerical results that confirm our theory and also give new insights. Our code is available on \href{https://github.com/WanliHongC/Neural-Collapse}{Github} and is adapted from the code by~\cite{FHLS21}. In the next four subsections, we show (a) the neural collapse phenomena arising from the imbalanced dataset; (b) the block structure of the mean prediction matrix $\bar{\BZ}$, and the difference between the feature mean and the exact solution obtained from solving~\eqref{eq:ufm}; (c) the sharp threshold of the minority collapse; and (d) the asymptotic behavior of $\bar{\BZ}$ as the sample size grows to infinity with fixed imbalance ratio.

We first briefly describe the training details.
To make the experiments and settings in~\eqref{eq:ufm} consistent, we place activation regularization on the last layer before the classification layer to model the regularization on features for every network and dataset we have trained.
All the networks, if not specified, are trained with a diminishing stepsize, as adopted in \cite{FHLS21}: the initial learning rate is 0.1 for the first 1/6 epochs; and after the first 1/6 epochs, we divide the learning rate by 10, i.e., learning rate equals 0.01, and train for another 1/6 epochs. After that, we set the learning rate as $10^{-3}$ for the rest of the training process.
All the networks are trained by SGD with momentum 0.9, and batch size 128. Additionally, except for networks trained in subsection~\ref{ss:num_nc3} where we turn off the weight decay of SGD, we set the weight decay of SGD to be 5e-4 during training. Since we are comparing the minority collapse threshold against regularization parameters in~\ref{ss:num_nc3}, we turn the weight decay off to make the regularization effect exact.

In Section~\ref{ss:num_nc1} and~\ref{ss:num_nc2}, we train three neural network including VGG11, VGG13, and ResNet18 on Fashion MNIST (FMNIST) and Cifar10. To validate our theory under different imbalance levels, we pick the following two choices of parameters, denoted by Dataset$_1$ and Dataset$_2$.
The whole dataset (either FMNIST or Cifar 10) contains three giant clusters $A,B$, and $C$. For each group (e.g. $A$), it contains $k_A$ classes and each class contains $n_A$ samples. In other words, there are in total $k_A+k_B+k_C$ classes and $k_An_A + k_Bn_B+k_Cn_C$ samples. The sample size of each class in the same cluster is the same.
\begin{enumerate}
    \item Dataset$_1$: $k_A=4,k_B=k_C=3,n_A=5000,n_B=4000,n_C=3000$
    \item Dataset$_2$: $k_A=k_C=4,k_B=2,n_A=5000,n_B=3000,n_C=1000$
\end{enumerate}
From the settings above, we can see Dataset$_2$ is more imbalanced than Dataset$_1$.
For the regularization parameters, we use $\lambda_W = 10^{-3}, \lambda_H = 10^{-6}, \lambda_b = 10^{-2}$ and train each model for 1000 epochs. 

In Section~\ref{ss:num_nc3} and~\ref{ss:num_nc4}, the experiments are used to verify Theorem~\ref{thm:biasf}, and Theorem~\ref{thm:limit} and Corollary~\ref{corollary:mc} respectively. Therefore, we only consider two giant clusters $A$ and $B$, with the number of classes and within-class sample size equal to $(k_A,n_A)$ and $(k_B,n_B)$ respectively. The regularization parameters $\lambda_W$ and $\lambda_H$ are set as $\lambda_W = 10\lambda_Z$ and $\lambda_H = \lambda_Z/10$ for each given $\lambda_Z.$
We will provide the specific parameter settings in each section. The network is ResNet18 and it is trained on Cifar10 by using SGD for $2000$ epochs.

\subsection{Collapse of feature and prediction vectors}\label{ss:num_nc1}

We first show the collapse of within-class feature vectors, as predicted by Theorem~\ref{thm:main1}(a). To quantify the level of within-class collapse, we compute the within-class and between-class covariance:  
\[
\BSigma_W := \frac{1}{N} \sum_{k=1}^K\sum_{i=1}^{n_k} (\bh_{ki}-\bar{\bh}_k) (\bh_{ki}-\bar{\bh}_k)^{\top},~~ \BSigma_{B} := \frac{1}{K}\sum_{k=1}^K (\bar{\bh}_k-\bh_G)(\bar{\bh}_k-\bh_G)^{\top}
\]
where 
\[
\bh_G := \frac{1}{N}\sum_{k=1}^{K} \sum_{i=1}^{n_k} \bh_{ki},\quad \bar{\bh}_k := \frac{1}{n_k}\sum_{i=1}^{n_k} \bh_{ki},~~1\leq k\leq K,
\]
are the total and within-class means respectively.
The level of within-class collapse is measured by
\begin{equation}\label{eq:sampleconvergencemetric}
\mathcal{NC}_1 := \frac{1}{K} \Tr\left({\BSigma_W\BSigma_B^{\dagger}}\right).
\end{equation}
It is easy to see that if the within-class collapse occurs, ${\cal NC}_1$ should be very small, as $\BSigma_W$ is close to 0.

Figure~\ref{fig:collapse} plots the change of $\log {\cal NC}_1$ against the epochs.  We can see after training 1000 epochs, $\log {\cal NC}_1$ is near $-10$ across all networks and datasets except VGG11 on the Cifar10 dataset which attains $\log {\cal NC}_1 \approx -6$.  This is strong evidence of the within-class collapse, and it confirms our Theorem~\ref{thm:main1}.

\begin{figure}[!htb]
    \centering
       \includegraphics[width=0.9\linewidth]{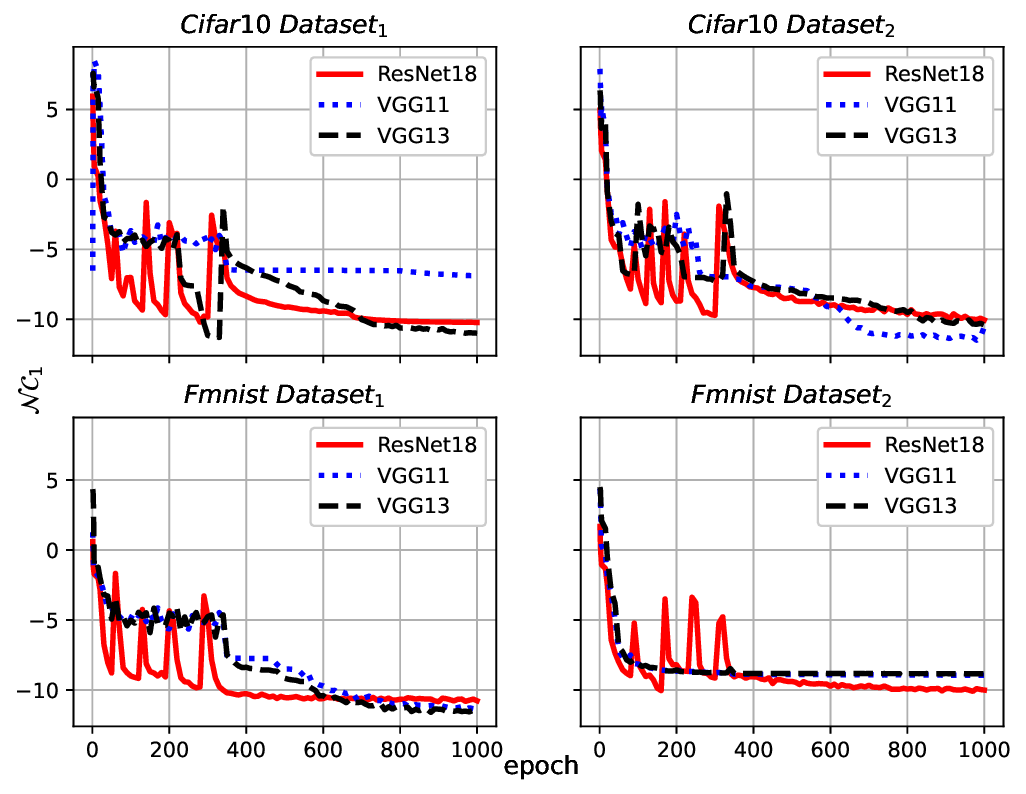}
    \caption{Plot of $\log {\cal NC}_1$ v.s. epochs: $x$-axis is the epoch number and y-axis is $\log {\cal NC}_1$. Datasets: Cifar10 and FMNIST with two sets of parameters Dataset$_1$ and Dataset$_2$. Network:  ResNet18 (red straight line), VGG11 (blue dotted line), and VGG13 (black dashed line).}
    \label{fig:collapse}
\end{figure}

\subsection{Block structure of the mean prediction and features}\label{ss:num_nc2}
In this subsection, we will illustrate the block structure of the mean prediction matrix to confirm Theorem \ref{thm:main1}(b). We also compare the difference of the mean prediction matrix $\bar{\BZ} = [\BW^{\top}\bar{\bh}_k]_{1\leq k\leq K}$ and the solution $\bar{\BZ}^*$ to the unconstrained feature model with $\lambda_Z =  \sqrt{\lambda_W\lambda_H}$. The datasets and networks are exactly the same as those in Section~\ref{ss:num_nc1}. 

In Figure~\ref{FIG:Block Structure}, we plot the final mean prediction matrix $\bar{\BZ}$ with the $k$-th column being $\bar{\bz}_k = \BW^{\top}\bar{\bh}_k$ over the $12$ experiments computed in Figure~\ref{fig:collapse}. The entries in $\bar{\BZ}$ of the largest magnitude show up on the diagonal. To show a stronger contrast in the plot, we apply min-max standardization across all the mean prediction matrices.
The white dashed lines separate giant clusters into $3\times 3$ blocks which match the setting of Dataset$_1$ and Dataset$_2.$  We see all the entries in each off-diagonal block, and all the off-diagonal entries in each diagonal block share a very similar magnitude in their own block. This indicates the block structure of the mean prediction matrix $\bar{\BZ}$ and also that of the mean feature vectors. 

\begin{figure}[!htb]
    \centering
       \includegraphics[width=175mm]{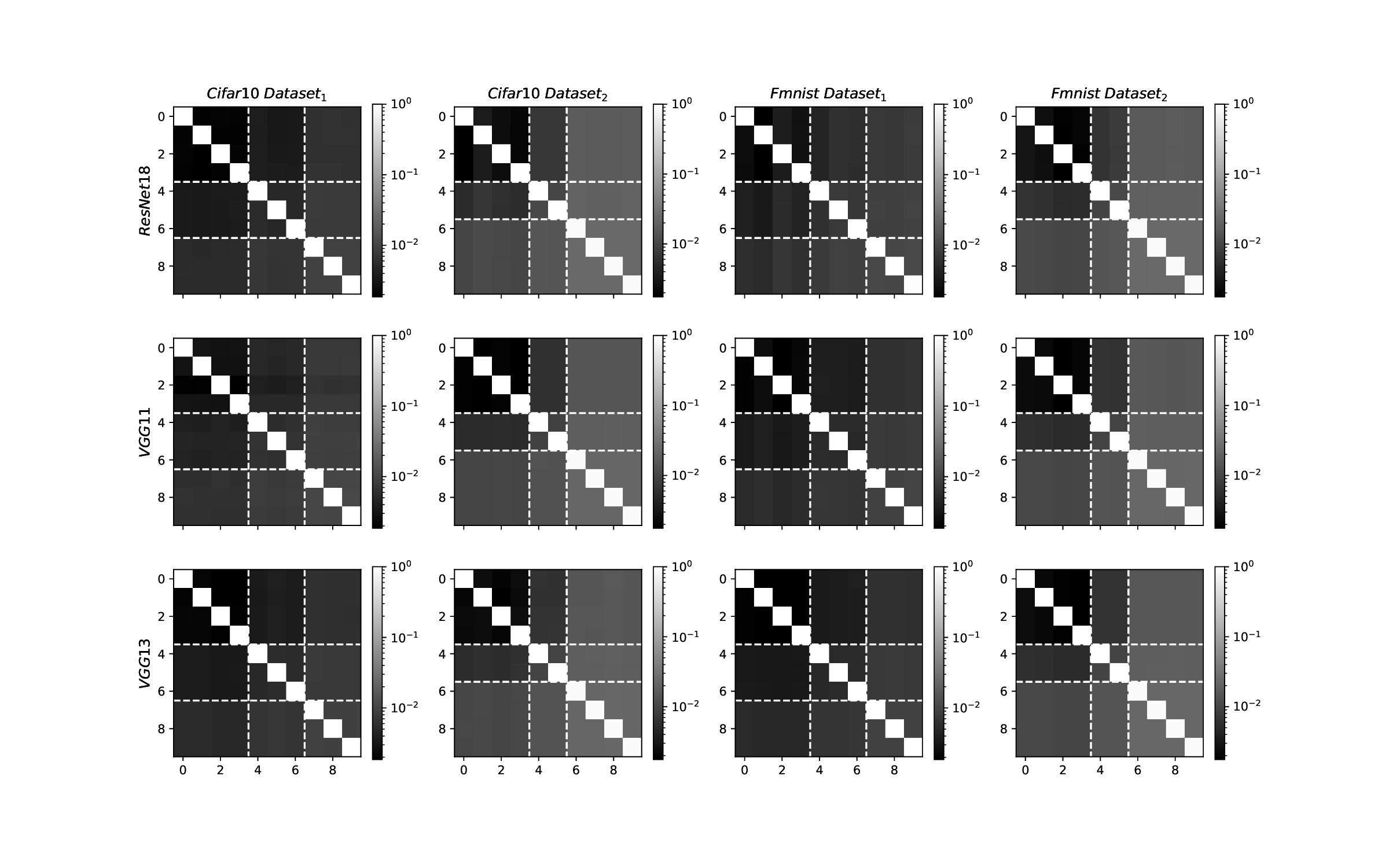}
    \caption{Standardized (over $12$ matrices) mean prediction matrix $\bar{\BZ}$ for all 12 experiments in Figure~\ref{fig:collapse}. The white dashed lines separate clusters $A,B$ and $C$ in Dataset$_1$ and Dataset$_2$. }
    \label{FIG:Block Structure}
\end{figure}

Figure~\ref{FIG:process_diff} and~\ref{FIG:final_diff}, we select two experiments to show the difference between the trained predictions $\bar{\BZ}$ and $\bb$ and the solution $\bar{\BZ}^*$ and $\bb^*$ to~\eqref{eq:ufm}. 
For VGG13 trained on Dataset$_1$, Figure~\ref{FIG:process_diff} (Left) shows a decreasing trend of the relative error between $\bar{\BZ}$ and $\bar{\BZ}^*$ which stabilizes around $0.04$ under both Frobenius and supreme norm. The bias difference is relatively higher and stabilizes around $0.2$; for VGG11 trained on Dataset$_2$, the relative error is higher compared to the previous one, possibly because the Dataset$_2$ is more imbalanced. 
Despite the relative error is approximately 0.1, the final mean prediction matrix shown in Figure~\ref{FIG:final_diff} implies that $\bar{\BZ}$ and $\bar{\BZ}^*$ share a similar block structure.

\begin{figure}[!htb]
    \centering
       \includegraphics[width=0.9\linewidth]{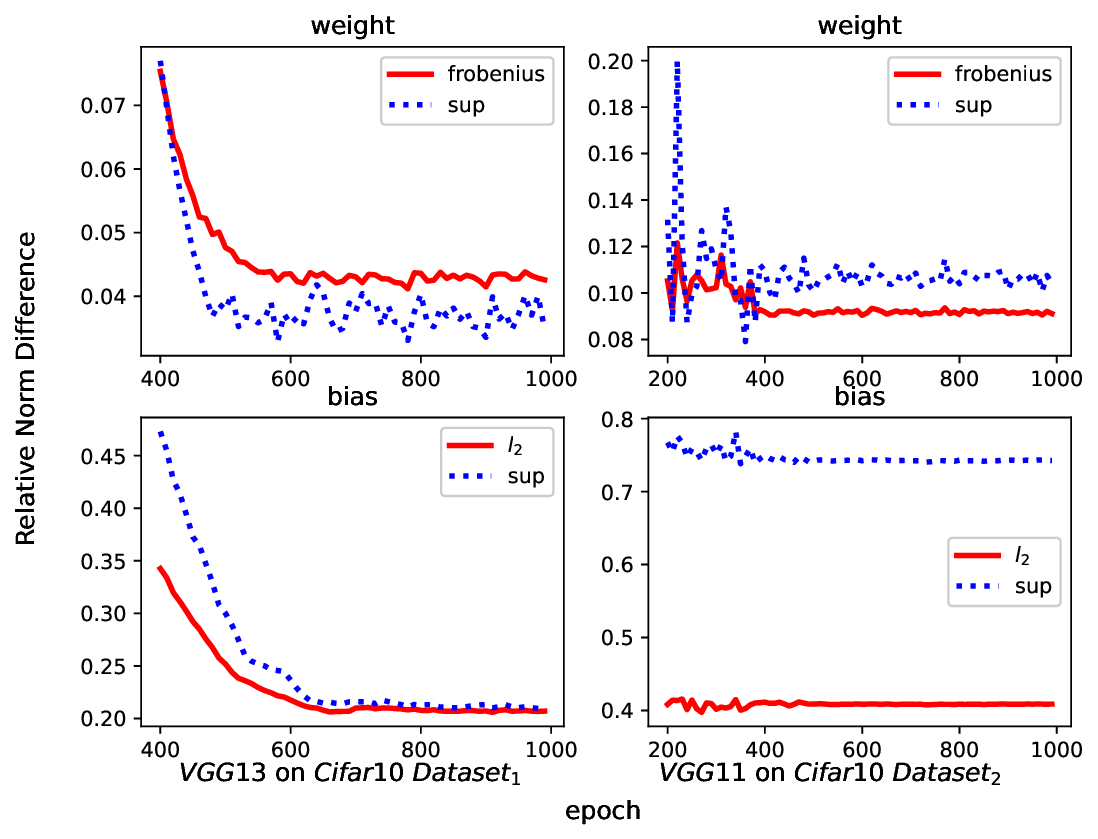}
    \caption{Relative error $\|\bar{\BZ}-\bar{\BZ}^*\|_{F}/\|\bar{\BZ}^*\|_{F}$ ($\|\bb-\bb^*\|_2/\|\bb^*\|_2$) and $\|\bar{\BZ}-\bar{\BZ}^*\|_{\infty}/\|\bar{\BZ}^*\|_{\infty}$ ($\|\bb-\bb^*\|_{\infty}/\|\bb^*\|_{\infty}$) v.s. the epoch for VGG13 on Cifar10 with Dataset$_1$ (Left) and VGG11 on Cifar10 with Dataset$_2$ (Right). The starting epoch numbers are chosen to be $400$ and $200$ when $\mathcal{NC}_1$ has reaches a low level.}
    \label{FIG:process_diff}
\end{figure}

\begin{figure}[!htb]
    \centering
       \includegraphics[width=110mm]{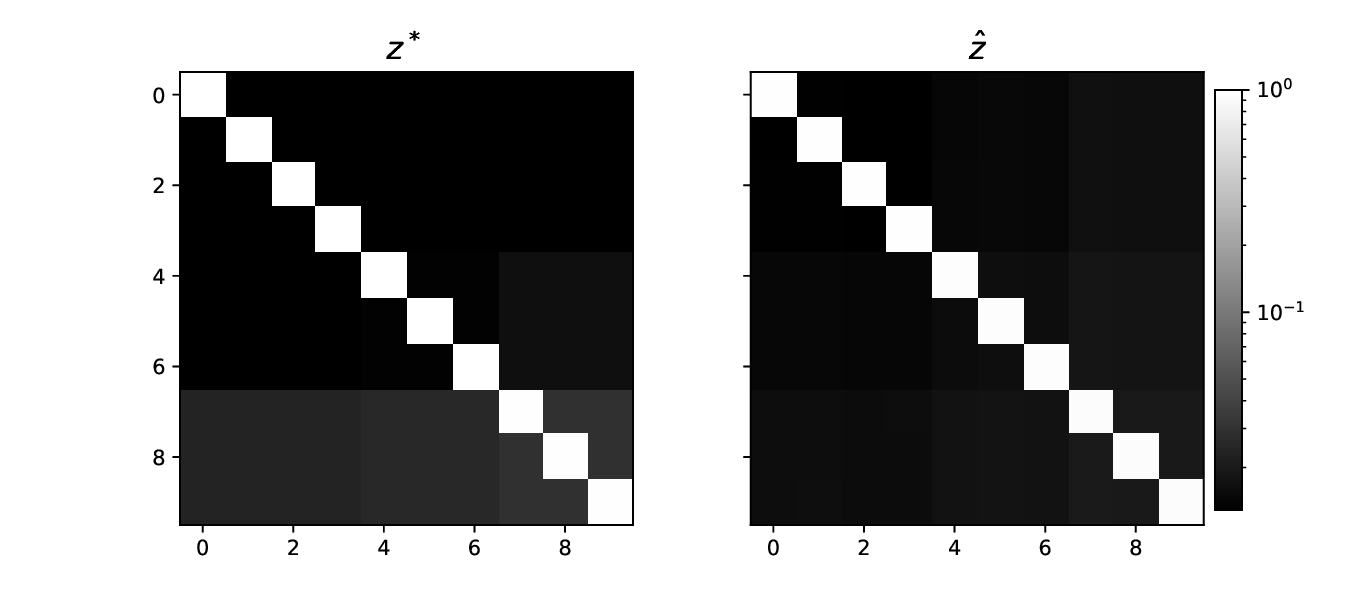} \\
       \includegraphics[width=110mm]{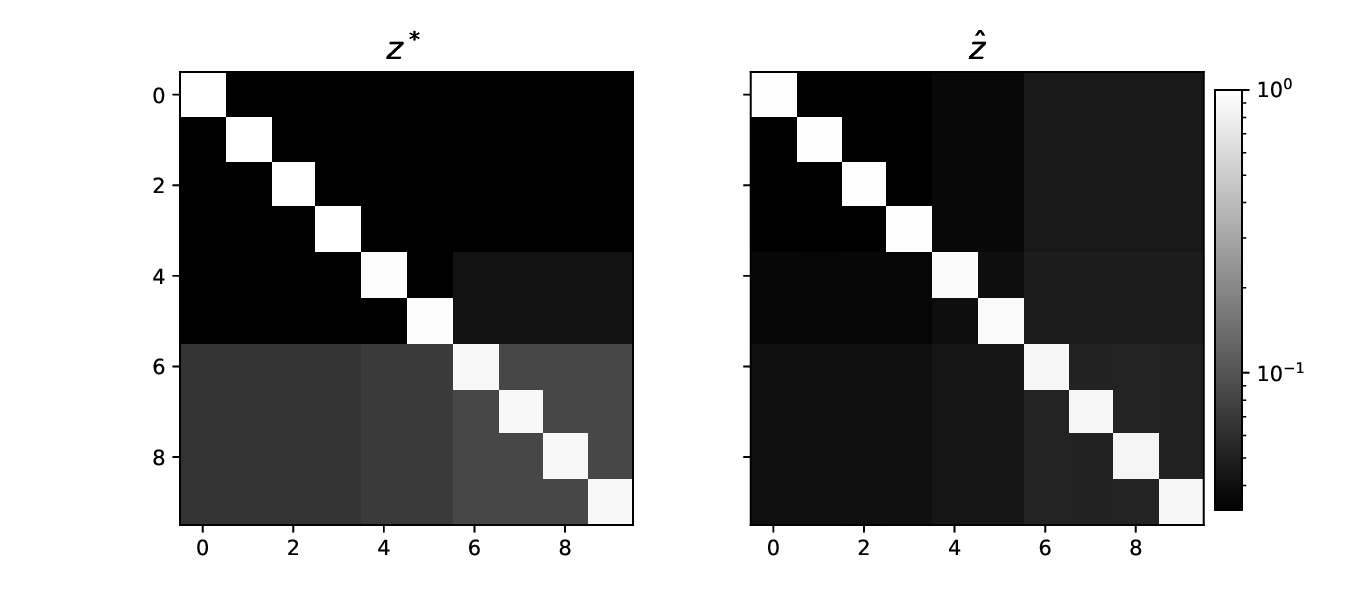}
    \caption{Comparison of the min-max standardized final mean prediction matrix $\bar{\BZ}$ and $\bar{\BZ}^*$. Top: VGG13 on Cifar10 with Dataset$_1$; Bottom: VGG11 on Cifar10, with Dataset$_2$.}
    \label{FIG:final_diff}

\end{figure}

\subsection{Minority collapse}\label{ss:num_nc3}
Theorem~\ref{thm:biasf} and Corollary \ref{corollary:mc} show the sharp threshold on $\lambda_Z$ so that the prediction made by the neural network on the minority classes collapses to a single vector. 
For the experiments below, we adopt a slightly different learning rate scheme. To prevent the features and weights from vanishing due to the large regularization terms, we initialize $\lambda_H$ and $\lambda_W$ to be ten times smaller for the first $1/6$ epochs with stepsize $0.1$. Then we set back the regularization parameters and keep training for the next $1/6$ epochs with the stepsize $0.1$. For the next 1/3 epochs, we set the learning rate as $0.01$. After that, we keep the learning rate equal to $10^{-3}$ for the rest.

We design two types of experiments to verify our theoretical findings. 
The first type fixes $n_A=500$ and $n_B=100$ for a given pair of $(k_A,k_B)$, and $\lambda_b = 0.01$. Then we vary $\lambda_Z$ and run ResNet18 on Cifar10 dataset  for each $\lambda_Z$. For $k_A = k_B = 5$, the results are shown in Figure~\ref{fig:mc1}: it implies that the minority collapse occurs at $\lambda_Z = 0.0033$, which matches $\sqrt{n_B}/N = 1/300$ where $N = k_An_A + k_Bn_B = 3000.$ However, our Theorem~\ref{thm:biasf} fails to predict the threshold beyond which all the predictions become constant: the theoretical threshold is $\sqrt{500}/3000 \approx0.075$ while Figure~\ref{fig:mc1} shows the complete collapse for some $\lambda_Z\leq 0.069$ which is strictly smaller than 0.075.
For $k_A = 3$ and $k_B = 7$, Theorem~\ref{thm:biasf} predicts the minority and complete collapse occur at approximately $\lambda_Z=0.0045$ and $0.0102$ respectively. Figure~\ref{fig:mc3} implies the empirical threshold for minority collapse matches our theoretical prediction while that for the complete collapse is between 0.0086 and 0.0094, strictly smaller than 0.0102.

\begin{figure}[!htb]
    \centering
       \includegraphics[width=160mm]{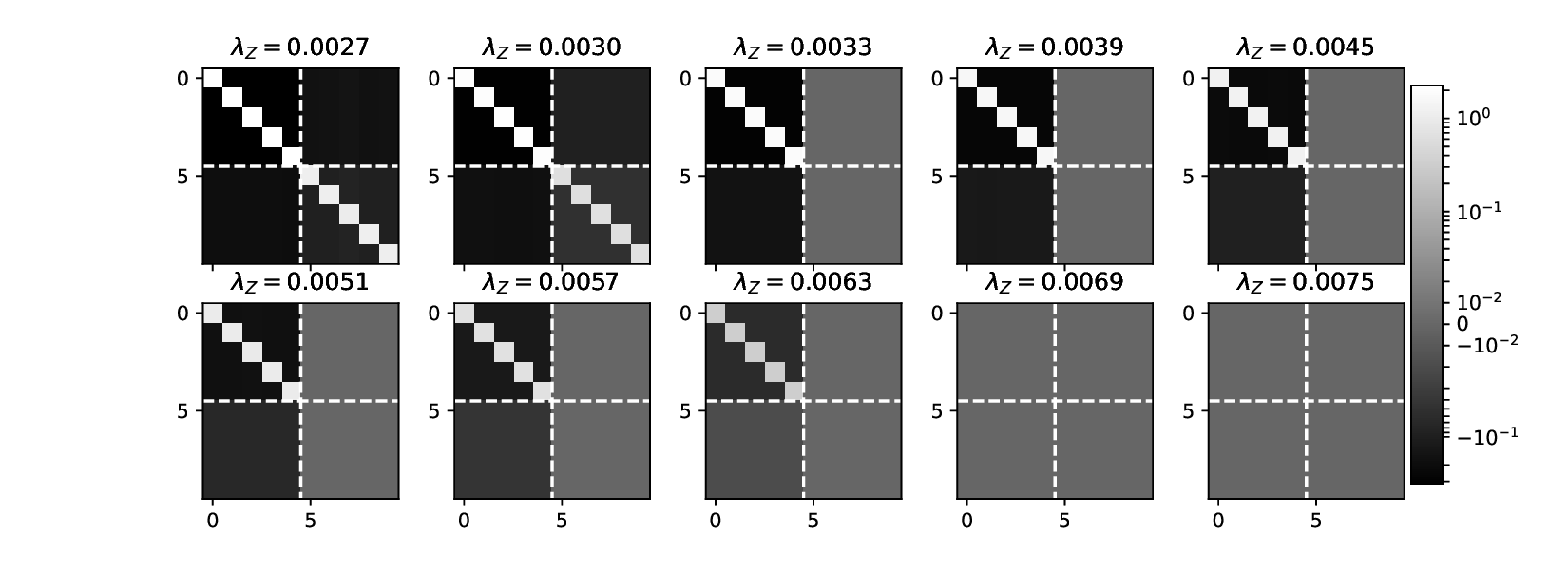}
    \caption{Plot for the mean prediction matrix $\bar{\BZ}$ for $10$ classes with $k_A=k_B=5$ v.s. varying $\lambda_Z$. The white dashed lines separate clusters majority group $A$ and minority group $B$. }
    \label{fig:mc1}

\end{figure}

\begin{figure}[!htb]
    \centering
       \includegraphics[width=160mm]{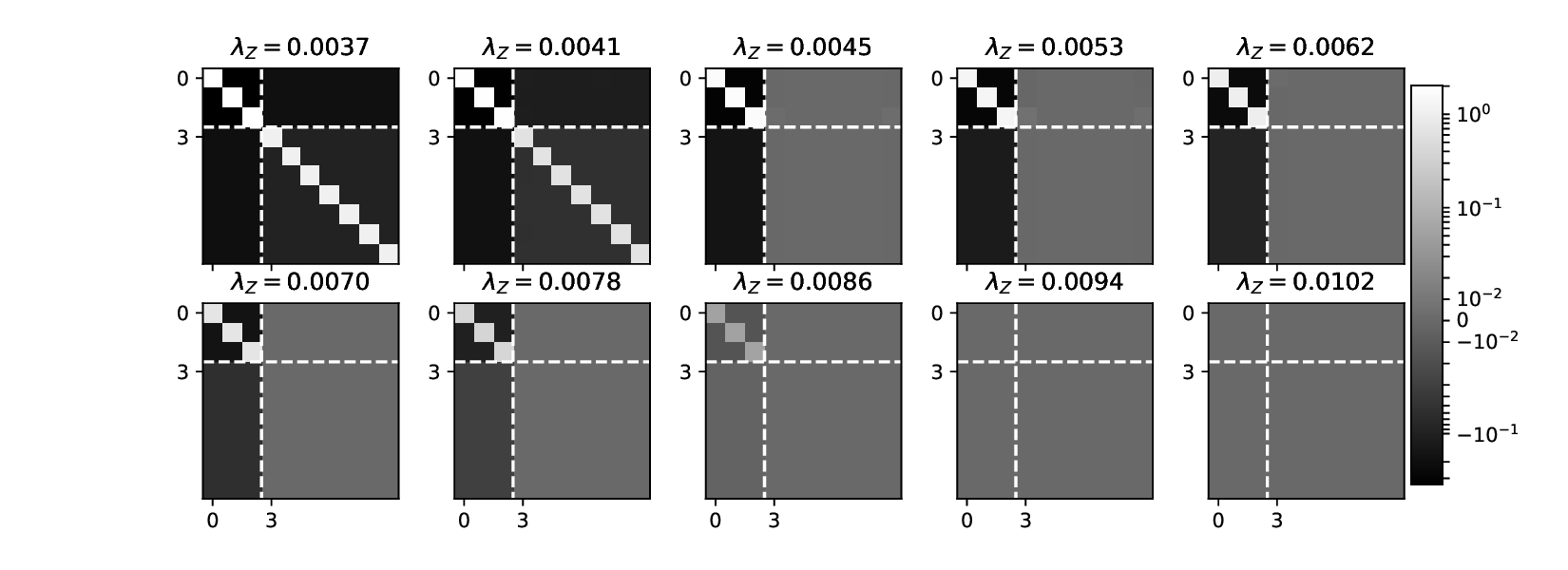}
    \caption{Plot for the mean prediction matrix $\bar{\BZ}$ for $10$ classes with $k_A=3$ and $k_B=7$ v.s. varying $\lambda_Z$. The white dashed lines separate clusters majority group $A$ and minority group $B$. }
    \label{fig:mc3}

\end{figure}
    
In the second type, we fix $\lambda_Z = 0.005,$ $\lambda_b = 0.01$, and $n_B = 100.$ Then we let $n_A$ increase from 100 to 1400, and compute the mean prediction matrix for each set of parameters. For $k_A =k_B=5$, our theory predicts the threshold of $n_A$ for minority and complete collapse are $n_A \approx 300$ and $1392$ respectively. Our numerical experiments in Figure~\ref{fig:mc2} confirm the threshold for minority collapse but the theory overestimates the threshold for complete collapse.
Similar phenomena are also observed for $k_A = 3$ and $k_B=7$ in which the thresholds for $n_A$ are 433 and 3964 respectively, as shown in Figure~\ref{fig:mc4}. 

\begin{figure}[!htb]
    \centering
       \includegraphics[width=160mm]{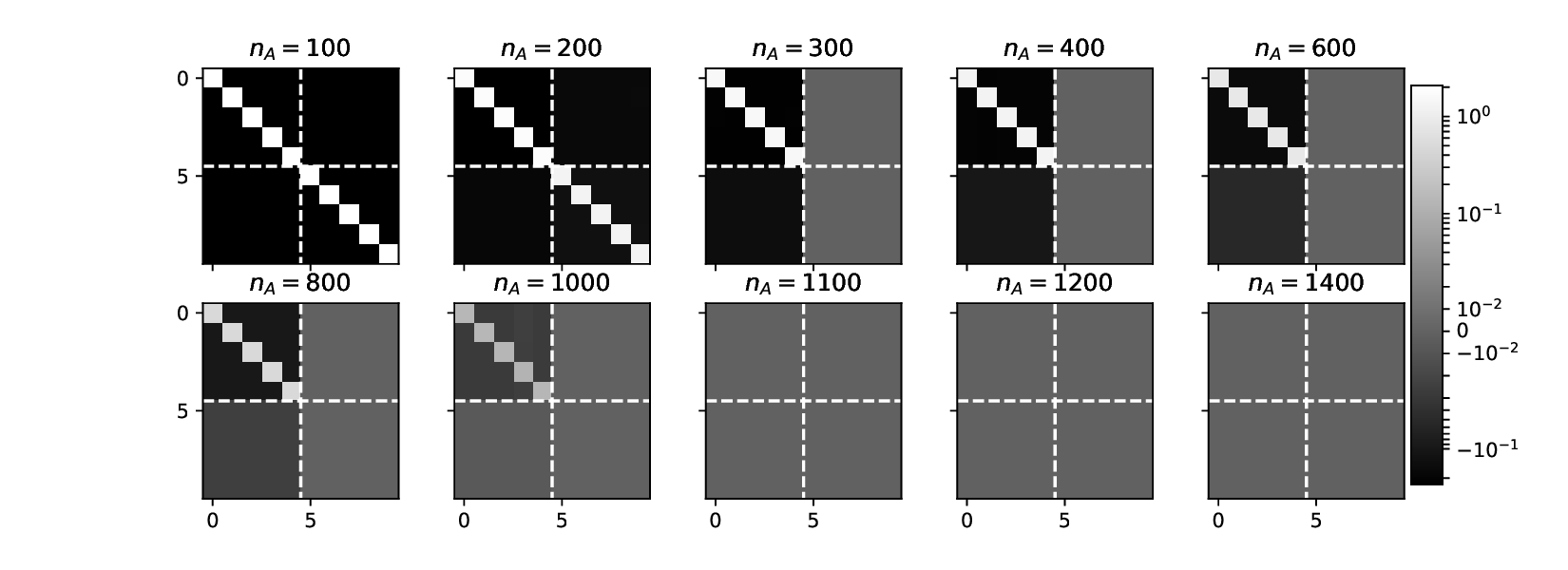}
    \caption{Plot for the mean prediction matrix $\bar{\BZ}$ for $10$ classes with $k_A=k_B=5$ v.s. $n_A$. }
    \label{fig:mc2}

\end{figure}

\begin{figure}[!htb]
    \centering
       \includegraphics[width=160mm]{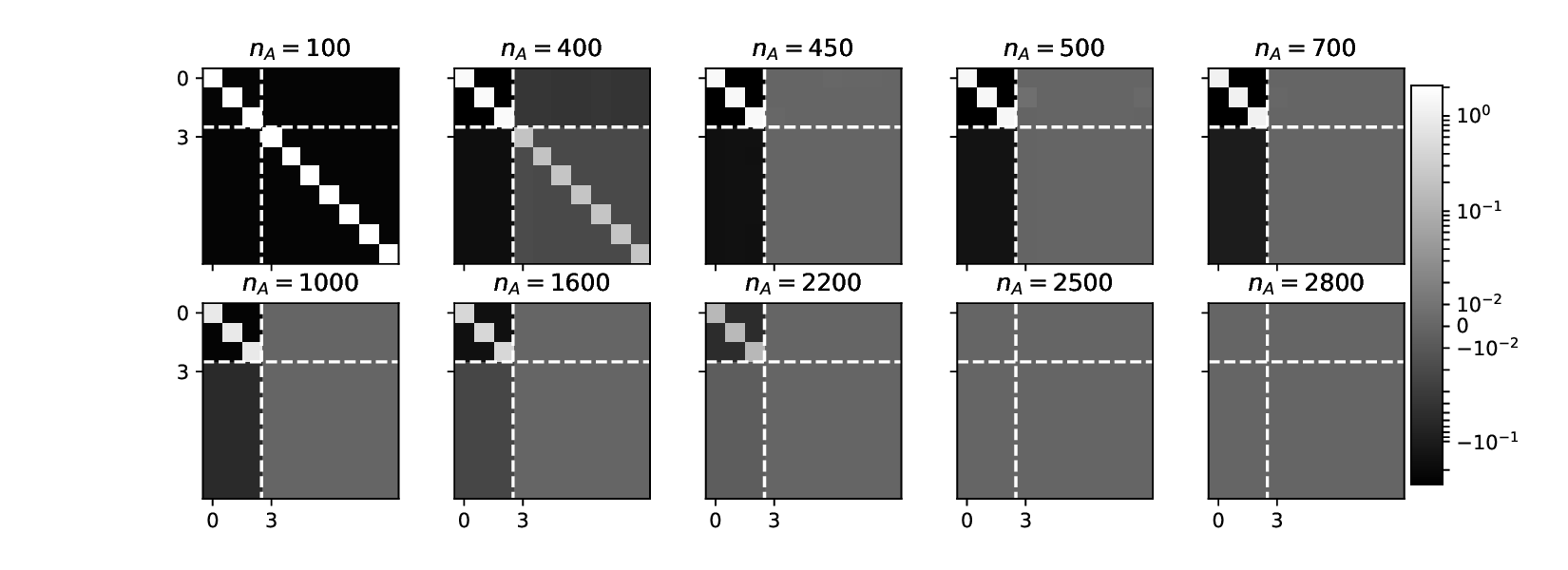}
    \caption{Plot for the mean prediction matrix $\bar{\BZ}$ for $10$ classes with $k_A=3$ and $k_B=7$ v.s. $n_A$. }
    \label{fig:mc4}

\end{figure}

Based on the Figure~\ref{fig:mc1}-\ref{fig:mc4}, we make the following main observations:  (i) the threshold of minority collapse $\lambda_Z = \sqrt{n_B}/N$ matches the empirical experiments. However, the theoretical threshold for complete collapse $\lambda_Z = \sqrt{n_A}/N$ tends to overestimate; (ii) all four figures confirm the block structure of the mean prediction matrix that is characterized by cases (a), (b), and (d) in Theorem~\ref{thm:biasf}. For case (c), we can take a look at the 8th subfigure ($\lambda_Z=0.0063$) in Figure~\ref{fig:mc1}. The right blocks and lower left block have a much smaller magnitude compared with the upper left blocks. However, the entries in the lower left blocks (the order is $10^{-3}$) still are much larger than those on the right block (the order is $10^{-7}$). 
Therefore, we do not see a strong signal of the case (c) for $\lambda_Z$ before the complete collapse occurs.


\subsection{Convergence to the ETF}\label{ss:num_nc4}
In this section, we will carry out some experiments for Theorem~\ref{thm:limit}. For the parameters, we fix parameters $k_A = 5,k_B = 5, r=n_A/n_B=2, N\lambda_Z=0.1$, and $\lambda_b = 0.01$. We train ResNet18 on the Cifar10 dataset with different $n_A$: $n_A$ ranges from $500$ to $5500$. For each set of parameters, we run 2000 epochs and compute the pairwise correlation (i.e., cosine angle) for mean prediction vectors $\bar{\BZ}$, i.e., 
\[
\widehat{\bf\Theta} := \diag(\bar{\BZ}^{\top}\bar{\BZ})^{-1/2}\bar{\BZ}^{\top}\bar{\BZ} \diag(\bar{\BZ}^{\top}\bar{\BZ})^{-1/2}.
\]

Due to the block structure of $\bar{\BZ}$, the variance of angles in each block of $\widehat{\bf \Theta}$ is quite small, and thus here, we only plot the mean within-class correlation for $A$ and $B$ respectively, and the mean correlation between $A$ and $B$.
Here $K = k_A +k_B = 10$, and the pairwise correlation is $-1/9$ for the ETF. Figure~\ref{fig:limit} shows a clear convergence of all the three groups of mean correlation toward $-1/9$ as $n_A$ increases, i.e., the mean prediction vectors slowly converge to an ETF. This validates our result in Theorem~\ref{thm:limit}, i.e., the impact created by the imbalance in data size on the prediction of neural networks diminishes as the number of training samples increases.
\begin{figure}[!htb]
    \centering
       \includegraphics[width=0.7\linewidth]{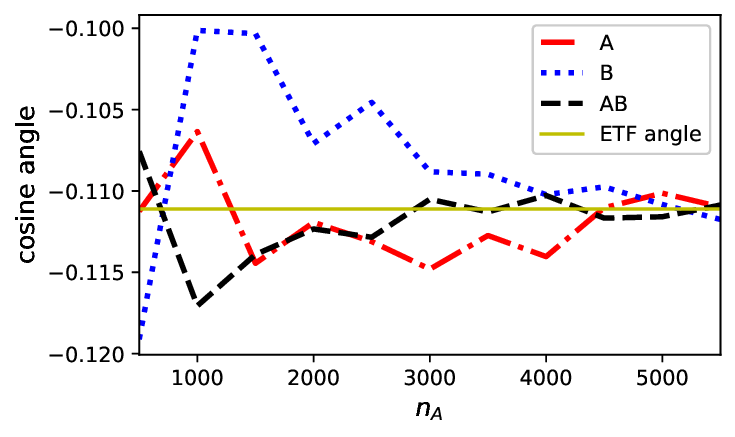}
    \caption{$y$-axis plots the mean pairwise correlation for mean prediction vector $\bar{\BZ}$ within the cluster $A$ (red dot-dashed) and $B$ (blue dotted), and between clusters $A$ and $B$ (black dashed); $x$-axis is the size of $n_A$ between 500 and 5500. The pairwise correlation of the ETF for $10$ classes is denoted by the yellow straight line.}
    \label{fig:limit}
\end{figure}


\section{Proof}\label{s:proof}

\subsection{Basic facts and optimality condition}
This subsection establishes important lemmas that will be used for proving our main theorems.
\begin{lemma}\label{lem:varphi}
Define
\begin{equation}\label{def:varphi}
\varphi(\BZ,\bb) = \frac{1}{N}\ell_{CE}(\BZ+\bb\bone_N^{\top},\BY) = \frac{1}{N}\sum_{k=1}^K\sum_{i=1}^{n_k} \ell_{CE}(\bz_{ki}+\bb,\be_k)
\end{equation}
and then $\varphi(\BZ,\bb)$ is strongly convex in the direction $(\BDelta_Z,\BDelta_b)\in \RR^{K\times N}\oplus\RR^K$ that belongs to $\{(\BDelta_Z,\BDelta_b): \bone_K^{\top}(\BDelta_Z + \BDelta_b\bone_N^{\top}) =0 \}$.
\end{lemma}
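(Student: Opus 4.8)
The plan is to reduce the claim to the strong convexity of the single-sample cross-entropy function $\bz \mapsto \ell_{CE}(\bz, \be_k)$ along directions orthogonal to $\bone_K$, and then assemble the pieces. First I would compute the Hessian of one cross-entropy term. Writing $\ell_{CE}(\bz,\be_k) = \log\sum_{\ell} e^{z_\ell} - z_k$, the gradient is $\bsigma(\bz) - \be_k$ where $\bsigma(\bz)$ is the softmax vector, and the Hessian is the symmetric PSD matrix $\BH_{\text{sm}}(\bz) := \diag(\bsigma(\bz)) - \bsigma(\bz)\bsigma(\bz)^{\top}$, which is independent of $k$. The key linear-algebra fact is that $\BH_{\text{sm}}(\bz)$ is strictly positive definite on the subspace $\bone_K^{\perp}$: indeed $\BH_{\text{sm}}(\bz)\bone_K = \bzero$, and for any $\bv$ with $\bone_K^{\top}\bv = 0$ one has $\bv^{\top}\BH_{\text{sm}}(\bz)\bv = \sum_\ell \sigma_\ell v_\ell^2 - (\sum_\ell \sigma_\ell v_\ell)^2 = \Var_{\sigma}(\bv) > 0$ unless $\bv$ is constant, which combined with $\bone_K^{\top}\bv=0$ forces $\bv = \bzero$.

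Next comes the only genuine subtlety: the modulus of strong convexity must be uniform, but $\BH_{\text{sm}}(\bz)$ degenerates as $\|\bz\| \to \infty$ (some softmax coordinate tends to $0$ or $1$). The resolution is that $\varphi$ is coercive — because of this, any minimizing sequence, or more precisely any sublevel set $\{(\BZ,\bb): \varphi(\BZ,\bb) \le M\}$ relevant to the optimization, is bounded, so one only needs strong convexity on a bounded region, on which $\BH_{\text{sm}}$ is uniformly bounded below on $\bone_K^{\perp}$ by continuity and compactness. Alternatively, and more cleanly for the way the lemma is phrased, I would state the conclusion as: for every bounded set $S$ there is $\mu_S > 0$ such that $\varphi$ is $\mu_S$-strongly convex on $S$ along the admissible directions; this is what is actually used later when we localize around the (bounded) minimizer. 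The hard part is precisely this uniformity/coercivity bookkeeping — the pointwise positive-definiteness is routine, but one must be careful that the lemma as used downstream only requires strong convexity on the relevant bounded set, or alternatively restrict attention to directions that keep $\BZ + \bb\bone_N^{\top}$ in a compact set.

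Finally I would assemble: $\varphi(\BZ,\bb) = \frac{1}{N}\sum_{k,i} \ell_{CE}(\bz_{ki} + \bb, \be_k)$, so its Hessian acting on a direction $(\BDelta_Z, \BDelta_b)$ is $\frac{1}{N}\sum_{k,i} (\bdelta_{ki} + \BDelta_b)^{\top} \BH_{\text{sm}}(\bz_{ki}+\bb)(\bdelta_{ki} + \BDelta_b)$ where $\bdelta_{ki}$ is the corresponding column of $\BDelta_Z$. Under the constraint $\bone_K^{\top}(\BDelta_Z + \BDelta_b \bone_N^{\top}) = \bzero$, each vector $\bdelta_{ki} + \BDelta_b$ lies in $\bone_K^{\perp}$, so each summand is $\ge \mu \|\bdelta_{ki} + \BDelta_b\|^2$ with $\mu > 0$ the uniform lower bound from the previous step. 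It remains to check that $\sum_{k,i} \|\bdelta_{ki} + \BDelta_b\|^2$ controls $\|\BDelta_Z\|_F^2 + \|\BDelta_b\|^2$ (up to a constant depending on $N$): expanding, $\sum_{k,i}\|\bdelta_{ki}+\BDelta_b\|^2 = \|\BDelta_Z\|_F^2 + 2\langle \BDelta_Z \bone_N, \BDelta_b\rangle + N\|\BDelta_b\|^2$, and one bounds this below by a positive multiple of $\|\BDelta_Z\|_F^2 + \|\BDelta_b\|^2$ — either directly via a quadratic-form argument, or by noting that on the admissible subspace the map $(\BDelta_Z,\BDelta_b)\mapsto (\bdelta_{ki}+\BDelta_b)_{k,i}$ has trivial kernel (if all $\bdelta_{ki} = -\BDelta_b$ then $\BDelta_Z = -\BDelta_b\bone_N^{\top}$, and the constraint $\bone_K^{\top}(\BDelta_Z+\BDelta_b\bone_N^{\top})=\bzero$ is automatic, so one actually needs to be slightly more careful here and quotient out this kernel, which corresponds exactly to the gauge freedom $\BZ \mapsto \BZ - \bb_0\bone_N^{\top}$, $\bb\mapsto \bb+\bb_0$ with $\bone_K^\top\bb_0 = 0$ that leaves $\varphi$ invariant). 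This last point — identifying the exact null space of the Hessian and confirming it is the expected gauge direction — is where I would be most careful, but it is elementary once spelled out.
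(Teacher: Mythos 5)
Your computation of the Hessian and the observation that the softmax Laplacian $\diag(\bsigma)-\bsigma\bsigma^{\top}$ is PSD with kernel $\bone_K$ are exactly the paper's argument, and your assembly of the per-sample quadratic forms is the same decomposition the paper uses. Where you depart is in the two ``subtleties'' you raise at the end, and both are resolved in the paper by simply reading the claimed conclusion more carefully rather than strengthening it.

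First, the paper never claims a modulus of strong convexity that is uniform over all $(\BZ,\bb)$: the proof explicitly writes the lower bound as $\lambda(\BZ,\bb)\,\|\BDelta_Z+\BDelta_b\bone_N^{\top}\|_F^2$ with $\lambda(\BZ,\bb) = N^{-1}\min_{k,i}\lambda_2\bigl(\diag(\bp_{ki})-\bp_{ki}\bp_{ki}^{\top}\bigr) > 0$ allowed to depend on the base point. Your coercivity/compactness excursion is therefore unnecessary; local positive definiteness of the restricted quadratic form at each $(\BZ,\bb)$ is all that is asserted and all that is later used (to deduce uniqueness of the within-class means once the regularizers fix the gauge). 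Second, and more importantly, the paper's strong-convexity conclusion is stated with respect to the seminorm $\|\BDelta_Z+\BDelta_b\bone_N^{\top}\|_F$, i.e. exactly the quantity $\sum_{k,i}\|\bdelta_{ki}+\BDelta_b\|^2$ that falls directly out of your Hessian computation. You are right that this does \emph{not} control $\|\BDelta_Z\|_F^2 + \|\BDelta_b\|^2$ — its kernel is the gauge direction $\BDelta_Z = -\BDelta_b\bone_N^{\top}$ you identified — but the lemma does not claim that it does. There is nothing to quotient or to patch: the lemma already states the seminorm version, and the downstream application (in the proof of Theorem~3.1(a)) uses it only to conclude $\BZ + \bb\bone_N^{\top} = \bar{\BZ}\BY + \bb\bone_N^{\top}$, with $\BZ$ and $\bb$ then pinned down separately by the $\|\BZ\|_*$ and $\|\bb\|^2$ regularizers. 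So the gap you flagged is a gap in the stronger statement you set out to prove, not in the paper's.
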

\begin{proof}
The proof is straightforward, and it suffices to show the quadratic form 
\[
[\BDelta_Z, \BDelta_b] : \nabla_{\BZ,\bb}^2 \varphi(\BZ,\bb) : [\BDelta_Z,\BDelta_b] \geq \lambda(\BZ,\bb) \cdot \left\| \BDelta_Z + \BDelta_b\bone_N^{\top}\right\|_F^2,
\]
for every $(\BDelta_Z,\BDelta_b)$ satisfying $\bone_K^{\top}(\BDelta_Z + \BDelta_b\bone_N^{\top}) = 0$
where $\lambda(\BZ,\bb)$ is a strictly positive number that only depends on $(\BZ,\bb)$. For ease of notation, define
\begin{equation}\label{def:pki}
\bp_{ki} := \frac{\exp(\bz_{ki} +\bb)}{\lag \exp(\bz_{ki} +\bb), \bone_K\rag},~~~\BP = [\bp_{ki}]_{1\leq i\leq n_k,1\leq k\leq K}\in\RR^{K\times N}
\end{equation}
as the probability vector associated with $\bz_{ki}+\bb.$
The gradient of $\varphi$ is
\[
\frac{\pa \varphi}{\pa\bz_{ki}} = \frac{1}{N}(\bp_{ki} - \be_k),~~~\frac{\pa \varphi}{\pa \bb} = \frac{1}{N} \sum_{k,i} (\bp_{ki} - \be_k)
\] 
whose matrix form is
\[
\frac{\pa \varphi}{\pa \BZ} = \frac{1}{N} (\BP - \BY),~~~\frac{\pa \varphi}{\bb} = \frac{1}{N} (\BP - \BY)\bone_N.
\]
The corresponding Hessian is 
\begin{align*}
& \frac{\pa^2 \varphi}{\pa \bz_{ki}^2} = \frac{1}{N}\left( \diag(\bp_{ki}) - \bp_{ki}\bp_{ki}^{\top}\right),~~~\frac{\pa^2 \varphi}{\pa \bz_{ki}\pa\bz_{k'i'}} = 0,~~\forall (k,i)\neq (k',i'), \\
& \frac{\pa^2\varphi}{\pa \bb^2} = \frac{1}{N} \sum_{k,i} \left(\diag(\bp_{ki}) - \bp_{ki}\bp_{ki}^{\top}\right),~~~
\frac{\pa^2\varphi}{\pa\bz_{ki}\pa \bb} = \frac{1}{N} (\diag(\bp_{ki}) - \bp_{ki}\bp_{ki}^{\top}).
\end{align*} 
Note that $\bp_{ki}\bp_{ki}^{\top}$ is a positive matrix and thus the associated Laplacian $\diag(\bp_{ki}) - \bp_{ki}\bp_{ki}^{\top}$ is positive semidefinite with its second smallest eigenvalue strictly positive. Let $\BDelta_{Z,ki}$ be the difference in the variable $\bz_{ki}$, and then the quadratic form equals
\begin{align*}
& [\BDelta_Z, \BDelta_b] : \nabla_{\BZ,\bb}^2 \varphi(\BZ,\bb) : [\BDelta_Z, \BDelta_b] \\
& =  \frac{1}{N} \sum_{k,i} (\BDelta_{Z,ki} + \BDelta_b)^{\top} \left(\diag(\bp_{ki}) - \bp_{ki}\bp_{ki}^{\top}\right) (\BDelta_{Z,ki} + \BDelta_b) \\
& \geq  \frac{1}{N} \sum_{k,i} \lambda_2(\diag(\bp_{ki}) - \bp_{ki}\bp_{ki}^{\top}) \| \BDelta_{Z,ki} + \BDelta_b\|^2 \\
& \geq  \frac{1}{N} \min_{k,i} \lambda_2(\diag(\bp_{ki}) - \bp_{ki}\bp_{ki}^{\top}) \cdot \| \BDelta_Z + \BDelta_b \bone_N^{\top}\|_F^2 
\end{align*}
where the first inequality follows from the fact that $\left(\diag(\bp_{ki}) - \bp_{ki}\bp_{ki}^{\top}\right)\bone_K = 0$ and $\lag \bone_K, \BDelta_{Z,ki} +\bb\rag = 0$ for all $1\leq i\leq n_k$ and $1\leq k\leq K$.

\end{proof}

\begin{lemma}[\bf Optimality condition]\label{lem:opt} 
The first-order optimality condition of ${\cal L}(\BZ,\bb)$ in~\eqref{eq:cvx} is
\begin{equation}\label{cond:1st}
N^{-1}(\BY - \BP)\in\lambda_Z\pa \|\BZ\|_*,~~~N^{-1}(\BY - \BP)\bone_N = \lambda_b \bb
\end{equation}
where $\BP = [\bp_{ki}]_{1\leq i\leq  n_k,1\leq k\leq K}$, $\bp_{ki}$ are defined in~\eqref{def:pki}, and $\pa \|\BZ\|_*$ stands for the subdifferential of the nuclear norm at $\BZ.$
In particular, the global minimizer $(\BZ,\bb)$ satisfies $\bone_K^{\top}\BZ = 0$ and $\bone_K^{\top}\bb = 0.$
\end{lemma}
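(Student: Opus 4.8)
Since \eqref{eq:cvx} is a convex program whose objective $\mathcal{L}(\BZ,\bb)$ is the sum of the smooth convex term $\varphi(\BZ,\bb)$ of Lemma~\ref{lem:varphi}, the nonsmooth convex term $\lambda_Z\|\BZ\|_*$, and the smooth convex term $\tfrac{\lambda_b}{2}\|\bb\|^2$, a pair $(\BZ,\bb)$ minimizes $\mathcal{L}$ globally if and only if $0\in\pa\mathcal{L}(\BZ,\bb)$. As $\varphi$ and $\tfrac{\lambda_b}{2}\|\bb\|^2$ are everywhere differentiable, the Moreau--Rockafellar sum rule splits the subdifferential across the two blocks as $\pa\mathcal{L}(\BZ,\bb)=\bigl(\nabla_{\BZ}\varphi+\lambda_Z\,\pa\|\BZ\|_*\bigr)\times\bigl(\nabla_{\bb}\varphi+\lambda_b\bb\bigr)$. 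The first step is then purely mechanical: substitute the gradients $\nabla_{\BZ}\varphi=N^{-1}(\BP-\BY)$ and $\nabla_{\bb}\varphi=N^{-1}(\BP-\BY)\bone_N$ from the proof of Lemma~\ref{lem:varphi}, and set the $\BZ$-component into the subdifferential and the $\bb$-component to zero; this reproduces \eqref{cond:1st} verbatim.

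For the ``in particular'' claim I would use that every column of $\BP$ (a probability vector) and of $\BY$ (a one-hot vector) sums to one, so $\bone_K^{\top}(\BY-\BP)=0$. Plugging this into the second relation of \eqref{cond:1st} gives $\lambda_b\,\bone_K^{\top}\bb=N^{-1}\bone_K^{\top}(\BY-\BP)\bone_N=0$, and $\lambda_b>0$ yields $\bone_K^{\top}\bb=0$. For $\bone_K^{\top}\BZ=0$, observe that the first relation of \eqref{cond:1st} exhibits $G:=(N\lambda_Z)^{-1}(\BY-\BP)$ as a subgradient of $\|\cdot\|_*$ at $\BZ$ with $\bone_K^{\top}G=0$. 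Writing a compact SVD $\BZ=\BU\BSigma\BV^{\top}$ and invoking the standard description $\pa\|\BZ\|_*=\{\BU\BV^{\top}+\BW:\ \BU^{\top}\BW=0,\ \BW\BV=0,\ \|\BW\|\le 1\}$ (as in \cite{RFP10}), write $G=\BU\BV^{\top}+\BW$. Right-multiplying $\bone_K^{\top}G=0$ by $\BV$ and using $\BV^{\top}\BV=\I$ and $\BW\BV=0$ gives $\bone_K^{\top}\BU=0$, whence $\bone_K^{\top}\BZ=\bone_K^{\top}\BU\BSigma\BV^{\top}=0$.

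The argument is largely bookkeeping; the only genuinely nontrivial input is the explicit form of the subdifferential of the nuclear norm, and the one point that requires care is that the orthogonality $\bone_K^{\top}\BZ=0$ must be read off from the \emph{specific} subgradient $G$ forced by stationarity rather than from an arbitrary element of $\pa\|\BZ\|_*$. (Existence of a minimizer is not at stake here, since $\mathcal{L}$ is coercive in $(\|\BZ\|_*,\|\bb\|)$ because $\ell_{CE}\ge 0$; uniqueness, which this lemma does not assert, will follow later from the strong convexity in Lemma~\ref{lem:varphi} combined with the block-structure analysis.)
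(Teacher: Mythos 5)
Your derivation of the first-order condition \eqref{cond:1st} is essentially the paper's: both split $\pa\mathcal{L}$ via the Moreau--Rockafellar rule and substitute the gradients of $\varphi$ from Lemma~\ref{lem:varphi}. Where you diverge is in the ``in particular'' part. The paper proves $\bone_K^\top\BZ=0$ and $\bone_K^\top\bb=0$ by a translation-invariance (variational) argument: since $\varphi$ is unchanged under $\BZ\mapsto\BZ+\bone_K\bv^\top$, $\bb\mapsto\bb+\mu\bone_K$ while $\|\bb-\mu\bone_K\|^2$ and $\|\BZ-\bone_K\bv^\top\|_*$ are both minimized at the centered representatives, any minimizer must already be centered. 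You instead read the constraints directly off the stationarity system: $\bone_K^\top(\BY-\BP)=0$ because both matrices have columns summing to one, so the second condition in \eqref{cond:1st} forces $\lambda_b\bone_K^\top\bb=0$; and for $\BZ$, you exploit the explicit description of $\pa\|\cdot\|_*$ to show that the specific subgradient $G=(N\lambda_Z)^{-1}(\BY-\BP)$ annihilating $\bone_K$ on the left forces $\bone_K^\top\BU=0$ and hence $\bone_K^\top\BZ=0$. Both routes are correct. Your KKT-based argument is more direct and makes the orthogonality visibly a \emph{consequence} of the optimality system rather than of a separate symmetry observation; the paper's argument, on the other hand, requires no structural knowledge of the nuclear-norm subdifferential and makes the invariance $\ell_{CE}(\BZ+\bone_K\bv^\top+\bb\bone_N^\top,\BY)=\ell_{CE}(\BZ+\bb\bone_N^\top,\BY)$ explicit, which is also what underlies the restricted strong convexity of Lemma~\ref{lem:varphi} used later for uniqueness. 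Either proof serves the downstream results.
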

\begin{proof}
Consider ${\cal L}(\BZ,\bb) = N^{-1}\ell_{CE}(\BZ+\bb\bone_N^{\top},\BY) + \lambda_Z\|\BZ\|_* + \lambda_b\|\bb\|^2/2$ in~\eqref{eq:cvx}.
Then its gradient (subgradient) is
\[
\frac{\pa L}{\pa \BZ} = N^{-1} \left(\BP - \BY\right) + \lambda \pa \|\BZ\|_*,~~\frac{\pa \varphi}{\pa \bb} = N^{-1}\left( \BP - \BY\right)\bone_N +\lambda_b \bb.
\]
Therefore, $(\BZ,\bb)$ is a global minimizer if
\[
N^{-1}(\BY - \BP)\in \lambda_Z \pa\|\BZ\|_*,~~~N^{-1}(\BY - \BP)\bone_N = \lambda_b \bb 
\]
where $\pa \|\BZ\|_*$ is the subdifferential of nuclear norm at $\BZ.$

For any $\BZ$ and $\bb$, we notice that $\BZ + \bone_K\bv^{\top}$ and $\bb+\mu\bone_K$ does not change $\varphi(\BZ,\bb)$ for any $\bv\in\RR^N$ and $\mu\in\RR$, i.e., 
\[
\varphi(\BZ+ \bone_K\bv^{\top},\bb+\mu\bone_K) = \varphi(\BZ,\bb).
\] 
Note that
\[
\|\bb\|^2 \geq \min_{\mu} \|\bb - \mu\bone_K\|^2 = \| \bb - \bone_K \lag \bone_K,\bb\rag/K\|^2
\]
and
\[
\min_{\bv\in\RR^N} \|\BZ - \bone_K\bv^{\top} \|_* = \left\| (\I_K - \BJ_K/K)\BZ\right\|_* 
\]
where the subdifferential of $\|\BZ - \bone_K\bv^{\top} \|_*$ is $(\pa \|\BZ - \bone_K\bv^{\top}\|_*)^{\top}\bone_K$ and 
\[
0 \in (\pa \|\BZ - \bone_K\bv^{\top}\|_*)^{\top}\bone_K \Big|_{\bv = \BZ^{\top}\bone_K/K}
\]
since the column space of $(\I_K - \BJ_K/K)\BZ$ is perpendicular to $\bone_K.$ Therefore, the global minimizer must satisfy $\bone_K^{\top} \BZ =0$ and $\bone_K^{\top}\bb = 0$.
\end{proof}

By considering the exact form of $\|\BZ\|_{*}$, the optimality condition in Lemma~\ref{lem:opt} can be expressed explicitly as given by the next corollary.

\begin{corollary}\label{Cor:Op}
Assume $\bz_{ki} = \bar{\bz}_k$ for $1\leq i\leq n_k,1\leq k\leq K$ and $\lag \bar{\bz}_k, \bone_K\rag = 0$, and then it holds that
\begin{equation}\label{cond:1stred}
\begin{aligned}
N^{-1}(\I_K - \bar{\BP}) & =\lambda_Z \left( \left[\left(\bar{\BZ}\BD\bar{\BZ}^{\top} \right)^{\dagger}\right]^{1/2}\bar{\BZ} + \bar{\BR} \right), \\
 N^{-1}(\I_K - \bar{\BP})\bn &= \lambda_b \bb
\end{aligned}
\end{equation}
where $\bar{\BZ} = [\bar{\bz}_1,\cdots,\bar{\bz}_K]\in\RR^{d\times K}$, $\bar{\BR}$ satisfies $\bar{\BR}\BD\bar{\BZ}^{\top} = 0$, $ \bar{\BZ}^{\top}\bar{\BR} = 0$, $\|\bar{\BR}\BD^{1/2}\| \leq 1$, and $\BD$ is defined in~\eqref{def:D}. In particular, if $\bar{\BZ}$ is of rank $K-1$, then
\[
N^{-1}(\I_K - \bar{\BP}) =\lambda_Z \left[\left(\bar{\BZ}\BD\bar{\BZ}^{\top} \right)^{\dagger}\right]^{1/2}\bar{\BZ}.
\]
\end{corollary}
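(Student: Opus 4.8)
The plan is to derive Corollary~\ref{Cor:Op} directly from Lemma~\ref{lem:opt} by specializing the subdifferential of the nuclear norm to a prediction matrix of the collapsed form $\BZ = \bar{\BZ}\BY$. First I would observe that under the collapse hypothesis $\bz_{ki} = \bar{\bz}_k$ with $\langle\bar{\bz}_k,\bone_K\rangle = 0$, the probability vectors $\bp_{ki}$ in~\eqref{def:pki} depend only on $k$, so $\BP = \bar{\BP}\BY$ for some $\bar{\BP} = [\bar{\bp}_1,\dots,\bar{\bp}_K]$ with columns summing to $1$. Consequently $\BY - \BP = (\I_K - \bar{\BP})\BY$ (using $\BY = [\be_1\bone_{n_1}^{\top},\dots]$ so the $k$-th block contributes $\be_k - \bar{\bp}_k$ times $\bone_{n_k}^{\top}$), and since $\BY\BY^{\top} = \BD$, one gets $(\BY - \BP)\bone_N = (\I_K - \bar{\BP})\bn$ with $\bn = \BY\bone_N = (n_1,\dots,n_K)^{\top}$. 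Plugging into the second optimality equation of~\eqref{cond:1st} immediately yields the second line of~\eqref{cond:1stred}.

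For the first line, the key step is to compute $\pa\|\BZ\|_*$ at $\BZ = \bar{\BZ}\BY$. I would write an SVD of the ``reduced'' matrix $\bar{\BZ}\BD^{1/2} = \bar{\BU}\bar{\BSigma}\bar{\BV}^{\top}$; then since $\BY = \BD^{1/2}\cdot(\BD^{-1/2}\BY)$ and $\BD^{-1/2}\BY$ has orthonormal rows (its Gram matrix is $\BD^{-1/2}\BY\BY^{\top}\BD^{-1/2} = \I_K$, wait — it has orthonormal rows in the sense $(\BD^{-1/2}\BY)(\BD^{-1/2}\BY)^{\top} = \I_K$), the nonzero singular values of $\BZ$ coincide with those of $\bar{\BZ}\BD^{1/2}$ and the singular subspaces transform accordingly. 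The standard characterization $\pa\|\BZ\|_* = \{\BU_1\BV_1^{\top} + \BR : \BU_1^{\top}\BR = 0,\ \BR\BV_1 = 0,\ \|\BR\|\le 1\}$ then translates, after pulling the right singular vectors back through $\BD^{-1/2}\BY$, into the statement that $N^{-1}(\I_K - \bar{\BP})$ equals $\lambda_Z$ times $\bar{\BU}\bar{\BV}^{\top}\BD^{-1/2}\BY$ acting on columns — but since both sides are constant on the $\Gamma$-blocks it reduces to the per-class identity involving $[(\bar{\BZ}\BD\bar{\BZ}^{\top})^{\dagger}]^{1/2}\bar{\BZ}$. Indeed $\bar{\BU}\bar{\BV}^{\top}\BD^{-1/2} = \bar{\BU}\bar{\BSigma}^{-1}\bar{\BU}^{\top}\bar{\BU}\bar{\BSigma}\bar{\BV}^{\top}\BD^{-1/2}\BD^{-1/2}$; using $\bar{\BZ}\BD\bar{\BZ}^{\top} = \bar{\BU}\bar{\BSigma}^2\bar{\BU}^{\top}$ one identifies $\bar{\BU}\bar{\BSigma}^{-1}\bar{\BU}^{\top} = [(\bar{\BZ}\BD\bar{\BZ}^{\top})^{\dagger}]^{1/2}$ and $\bar{\BU}\bar{\BSigma}\bar{\BV}^{\top} = \bar{\BZ}\BD^{1/2}$, so the product is $[(\bar{\BZ}\BD\bar{\BZ}^{\top})^{\dagger}]^{1/2}\bar{\BZ}$, with the remainder term $\bar{\BR}$ absorbing the arbitrary part $\BR$ of the subdifferential and inheriting exactly the orthogonality and norm constraints listed (after rescaling $\BR = \bar{\BR}\BD^{1/2}$, so $\|\bar{\BR}\BD^{1/2}\|\le 1$, $\bar{\BZ}^{\top}\bar{\BR} = 0$ from the left-null condition and $\bar{\BR}\BD\bar{\BZ}^{\top} = 0$ from the right-null condition).

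The rank-$(K-1)$ case is then an easy corollary: the full-rank-on-the-centered-subspace hypothesis forces the singular subspace of $\bar{\BZ}\BD^{1/2}$ to be the entire orthogonal complement of $\bone_K$ (recall $\bone_K^{\top}\bar{\BZ} = 0$ from Lemma~\ref{lem:opt}), so there is no room for a nontrivial $\bar{\BR}$ orthogonal to both the column and row spaces while also annihilated appropriately — more precisely $\bar{\BR}$ must have column space orthogonal to that of $\bar{\BZ}$, but $\bar{\BZ}$ already spans $\bone_K^{\perp}$ and $\I_K - \bar{\BP}$ maps into $\bone_K^{\perp}$ as well (columns of $\I_K - \bar{\BP}$ are centered since $\bar{\bp}_k$ sums to $1$), forcing $\bar{\BR} = 0$.

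I expect the main obstacle to be the bookkeeping in translating the nuclear-norm subdifferential through the non-orthogonal factor $\BY$: one must be careful that $\BY$ does not have orthonormal columns (it has rank $K$ but $\BY^{\top}\BY$ is a block-diagonal matrix of all-ones blocks, not the identity), so the singular vectors of $\BZ = \bar{\BZ}\BY$ on the right live in the column space of $\BY^{\top}$, and the correct substitution is via $\BD^{-1/2}$, which is precisely why the weighted quantities $\bar{\BZ}\BD\bar{\BZ}^{\top}$ and $\BD^{1/2}$ appear rather than $\bar{\BZ}\bar{\BZ}^{\top}$. A clean way to handle this without picking coordinates is to verify the claimed expression satisfies the defining conditions of $\lambda_Z\pa\|\BZ\|_*$ directly: check that $[(\bar{\BZ}\BD\bar{\BZ}^{\top})^{\dagger}]^{1/2}\bar{\BZ}\BY$ has the same column space and row space as $\BZ$, that its action matches $\BU_1\BV_1^{\top}$, and that $\bar{\BR}\BY$ satisfies the orthogonality/norm bounds — this sidesteps explicit SVD manipulations and is the route I would ultimately write up.
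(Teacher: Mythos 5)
Your proposal follows the same route as the paper's proof: pass from $\BP$ to $\bar{\BP}$ via $\BY$, factor $\BZ=\bar{\BZ}\BY=(\bar{\BZ}\BD^{1/2})(\BD^{-1/2}\BY)$ with $\BD^{-1/2}\BY$ having orthonormal rows, read off the nuclear-norm subdifferential from the SVD of $\bar{\BZ}\BD^{1/2}$, translate the $\BU^{\top}\BR=0$, $\BR\BV=0$, $\|\BR\|\le 1$ conditions into $\bar{\BZ}^{\top}\bar{\BR}=0$, $\bar{\BR}\BD\bar{\BZ}^{\top}=0$, $\|\bar{\BR}\BD^{1/2}\|\le 1$, and kill $\bar{\BR}$ in the rank-$(K-1)$ case by noting columns of $\bar{\BR}$ must lie along $\bone_K$ while everything else lives in $\bone_K^{\perp}$. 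Two small slips to fix when writing it up: the displayed identity should read $\bar{\BU}\bar{\BV}^{\top}\BD^{-1/2}=\bar{\BU}\bar{\BSigma}^{-1}\bar{\BU}^{\top}\,\bar{\BU}\bar{\BSigma}\bar{\BV}^{\top}\BD^{-1/2}$ (you have an extra $\BD^{-1/2}$, which would give $\BD^{-1}$ on the right and make the line false as written, even though the conclusion is correct); and the remainder in $\pa\|\BZ\|_*$ is $\BR=\bar{\BR}\BY$, not $\BR=\bar{\BR}\BD^{1/2}$ — the factor $\BD^{1/2}$ enters only through $\|\bar{\BR}\BY\|=\|\bar{\BR}\BD^{1/2}\|$ because $\BD^{-1/2}\BY$ is a partial isometry.
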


\begin{proof}
Under assumption, $\bz_{ki} = \bar{\bz}_k,~1\leq i\leq n_k$ and $\bar{\bz}_k^{\top}\bone_K = 0$, we have $\BZ = \bar{\BZ}\BY$ and $\BP = \bar{\BP} \BY$ where $\bar{\BZ}$ and $\BY$ are defined in~\eqref{def:barZS} and $\bar{\BP} = [\bar{\bp}_1,\cdots,\bar{\bp}_K]$ with $\bar{\bp}_k$ as the probability vector w.r.t. $\bar{\bz}_k +\bb.$
Then~\eqref{cond:1st} reduces to
\begin{equation}\label{eq:1stv1}
N^{-1} (\I_K - \bar{\BP})\BY \in \lambda_Z \pa \| \bar{\BZ}\BY \|_*
\end{equation}
Here we let $\BZ =\BU\BSigma\BV^{\top}$ be the SVD of $\BZ$.
\[
\pa \| \bar{\BZ}\BY  \|_* = \left\{ \BU\BV^{\top} + \BR: \|\BR\|\leq 1,~\BU^{\top}\BR = 0,~\BR\BV = 0 \right\}
\]
where 
\[
\BU\BV^{\top} = \left[\left(\bar{\BZ}\BD\bar{\BZ}^{\top} \right)^{\dagger}\right]^{1/2} \bar{\BZ}\BY
\]
and $^{\dagger}$ denotes the Moore-Penrose pseudo-inverse. 
Then~\eqref{eq:1stv1} becomes
\[
N^{-1} (\I_K - \bar{\BP})\BY = \lambda_Z \left( \left[\left(\bar{\BZ}\BD\bar{\BZ}^{\top} \right)^{\dagger}\right]^{1/2}\bar{\BZ}\BY + \BR  \right)
\]
where $\BR$ is in the form of
\[
\BR = \bar{\BR}\BY,~~\bar{\BR} = [\bar{\br}_1,\cdots,\bar{\br}_K]
\]
such that
\begin{align*}
& \bar{\BR}\BY\left(\bar{\BZ}\BY\right)^{\top} = \bar{\BR}\BD\bar{\BZ}^{\top} = 0, \quad \bar{\BZ}^{\top}\bar{\BR} = 0,~~~\|\bar{\BR}\BD^{1/2}\| \leq 1
\end{align*}
where $\BD$ is defined in~\eqref{def:D} and $\BY\BY^{\top} = \BD.$
This leads to
\[
N^{-1} (\I_K - \bar{\BP}) = \lambda_Z \left( \left[\left(\bar{\BZ}\BD\bar{\BZ}^{\top} \right)^{\dagger}\right]^{1/2}\bar{\BZ} + \bar{\BR} \right)
\]
where $\bar{\BR}\BD\bar{\BZ}^{\top} = 0$ and $ \bar{\BZ}^{\top}\bar{\BR} = 0$.

In particular, if $\bar{\BZ}$ is of rank $K-1,$ then each column of $\bar{\BR}$ is parallel to $\bone_K$ since $\bone_K^{\top}\bar{\BZ} = 0$. Since $\bar{\BP}$ is a positive left-stochastic matrix with $\bar{\BP}^{\top} \bone_K= \bone_K$, and thus $\I_K-\bar{\BP}$ is of rank $K-1$.
Note that $\bone_K$ is also in the left null space of $\I_K- \bar{\BP}$, then $\bar{\BR} = 0.$ It means
\[
N^{-1}(\I_K - \bar{\BP}) =\lambda_Z \left[\left(\bar{\BZ}\BD\bar{\BZ}^{\top} \right)^{\dagger}\right]^{1/2} \bar{\BZ}.
\]
For $\bb$, it is straightforward to have
\[
N^{-1}(\BY - \BP)\bone_N = N^{-1}\sum_{k=1}^K n_k (\be_k - \bar{\bp}_k) = N^{-1}(\I_K - \bar{\BP})\bn= \lambda_b \bb.
\]
\end{proof}


\begin{lemma}[\bf Invariance under permutation]\label{lem:perm}
\quad
\begin{enumerate}[(a)]
\item Suppose $\BX$ satisfies
\[
\BPi \BX\BPi' = \BX
\]
for any permutation $\BPi$ and $\BPi'$, then $\BX= c\BJ$ for some $c$. \item Suppose $\BX$ satisfies
\[
\BPi \BX\BPi^{\top} = \BX, 
\]
then $\BX = a\I + c\BJ$ for some $a$ and $c.$  
\end{enumerate}
\end{lemma}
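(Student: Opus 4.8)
The plan is to convert each invariance hypothesis into a pointwise statement about the entries of $\BX$ and then invoke the transitivity of the symmetric group on indices, respectively on ordered pairs of indices.

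\textbf{Part (a).} First I would fix the convention that the permutation matrix $\BPi$ attached to $\pi\in S_K$ has entries $(\BPi)_{ik}=\bone_{\{i=\pi(k)\}}$, so that a one-line computation gives $(\BPi\BX\BPi')_{ij}=X_{\pi^{-1}(i),\,\pi'(j)}$. The hypothesis $\BPi\BX\BPi'=\BX$ for all $\pi,\pi'$ then reads $X_{\pi^{-1}(i),\pi'(j)}=X_{ij}$ for all indices and all permutations. Given any two positions $(i,j)$ and $(i',j')$, I would choose $\pi$ with $\pi^{-1}(i)=i'$ and $\pi'$ with $\pi'(j)=j'$; such permutations always exist, and they force $X_{i'j'}=X_{ij}$. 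Hence all entries of $\BX$ are equal to a common constant $c$, i.e. $\BX=c\BJ$.

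\textbf{Part (b).} With the same convention one has $(\BPi^\top)_{lj}=\bone_{\{l=\pi^{-1}(j)\}}$, whence $(\BPi\BX\BPi^\top)_{ij}=X_{\pi^{-1}(i),\,\pi^{-1}(j)}$. The hypothesis becomes $X_{\sigma(i),\sigma(j)}=X_{ij}$ for every $\sigma\in S_K$, i.e. $\BX$ is constant on the orbits of the diagonal action of $S_K$ on ordered pairs. For $K\ge 2$ there are precisely two such orbits, the diagonal $\{(i,i)\}$ and the off-diagonal $\{(i,j):i\ne j\}$ (the case $K=1$ being trivial). Writing $\alpha$ for the common diagonal value and $c$ for the common off-diagonal value, I conclude $\BX=(\alpha-c)\I+c\BJ$, which is of the required form $a\I+c\BJ$ with $a=\alpha-c$.

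The argument is elementary and I do not anticipate a genuine obstacle; the only point demanding care is pinning down a single consistent convention for how a permutation matrix acts, so that the index bookkeeping in $(\BPi\BX\BPi')_{ij}$ and $(\BPi\BX\BPi^\top)_{ij}$ is correct. The structural reason the two conclusions differ is already visible in this plan: in (a) the left and right permutations vary independently, so the joint orbit of any $(i,j)$ is the entire index grid and $\BX$ must be constant, whereas in (b) the two permutations are tied together, leaving exactly the two orbits that generate the two-parameter family $a\I+c\BJ$.
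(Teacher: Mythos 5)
Your proof is correct and takes essentially the same route as the paper's: both reduce the matrix identities to pointwise constraints on entries and then use transitivity of the symmetric group (the paper spelling out (b) via transpositions, you via the two-orbit decomposition of the diagonal action, which is the same idea). Your index computation $(\BPi\BX\BPi')_{ij}=X_{\pi^{-1}(i),\pi'(j)}$ with the stated convention is accurate, and the orbit count for $K\ge 2$ is right.
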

\begin{proof}
For (a), we have
\[
\BPi\BX = \BX,~~~\BX\BPi' = \BX
\]
holds for any permutation matrix. Then $\BX$ is invariant under either row or column permutation. Therefore, $\BX$ is a constant matrix.

For (b), we first pick the permutation matrix $\BPi_{\ell,\ell'}$ that only exchanges $\ell$- and $\ell'$-th entries. Then $\BPi_{\ell,\ell'} = \BPi_{\ell',\ell}$ and
\[
\BPi_{\ell,\ell'} \BX \BPi_{\ell,\ell'} = \BX.
\]
Now let $\be_k$ be any one-hot vector, and it holds
\[
X_{\ell'\ell'} = \be_{\ell'}^{\top}\BX\be_{\ell'}=\be_{\ell}^{\top}\BPi_{\ell,\ell'} \BX \BPi_{\ell,\ell'}\be_{\ell} = \be_{\ell}^{\top}\BX \be_{\ell}  = X_{\ell\ell},
\] 
where $\BPi_{\ell,\ell'}\be_{\ell'} = \be_{\ell}$. This implies that the diagonal entries of $\BX$ are the same.

For any $k\neq \ell$ or $\ell'$, $\BPi_{\ell,\ell'}\be_k = \be_k$ holds and
\[
X_{k\ell} = \be_k^{\top}\BX\be_{\ell} = \be_k^{\top}\BPi_{\ell,\ell'} \BX \BPi_{\ell,\ell'} \be_{\ell} = \be_k^{\top}\BX \be_{\ell'} = X_{k\ell'}.
\]
Similarly, it also holds $X_{\ell k} = X_{\ell'k}$.
Therefore, $X_{\ell\ell'}$ is the same for $\ell\neq \ell'$.
\end{proof}

\subsection{Proof of Theorem~\ref{thm:main1}}

\paragraph{Theorem~\ref{thm:main1}(a)} The proof follows from a convenient~\emph{permutation argument}.
Recall~\eqref{eq:cvx} equals
\begin{equation}\label{OBJ_ORI}
\begin{aligned}
{\cal L}(\BZ,\bb) & = \frac{1}{N}\sum_{k=1}^K\sum_{i=1}^{n_k} \ell_{CE}(\bz_{ki}+\bb,\be_k) + \lambda_Z \|\BZ\|_* + \frac{\lambda_b}{2} \|\bb\|^2
\end{aligned}
\end{equation}
Note that it is strongly convex in $(\BZ,\bb)$ in the restricted direction, as shown in Lemma~\ref{lem:varphi}. Then the sublevel set of $\{(\BZ,\bb): {\cal L}(\BZ,\bb) \leq c\}$ is a compact set when restricted on those directions, for any $c$ if $\lambda_Z>0$ and $\lambda_b > 0$. Therefore, the existence of a global minimizer is guaranteed. 
Suppose $(\BZ,\bb)$ is a global minimizer to ${\cal L}(\BZ,\bb)$ and we know from Lemma~\ref{lem:opt} that $\bone_K^{\top}\BZ= 0$ and $\bone_K^{\top}\bb = 0$.
Then by using Jensen's inequality, we have
\begin{align*}
\frac{1}{n_k} \sum_{i=1}^{n_k} \ell_{CE}(\bz_{ki}+\bb,\be_k) \geq \ell_{CE}\left( \frac{1}{n_k} \sum_{i=1}^{n_k}(\bz_{ki}+\bb),\be_k   \right) = \ell_{CE}(\bar{\bz}_k+\bb,\be_k) 
\end{align*}
where $\{\bz_{ki}\}_{i=1}^{n_k}$ belong to the same class and $\bar{\bz}_k = n_k^{-1}\sum_{i=1}^{n_k}\bz_{ki}.$

Let $\blkdiag(\BPi_{n_1},\cdots,\BPi_{n_k})$ be a block-diagonal permutation matrix where each $\BPi_{n_k}$ is any $n_k\times n_k$ permutation. Then
\[
\| \BZ\blkdiag(\BPi_{n_1},\cdots,\BPi_{n_k})\|_* = \|\BZ\|_*.
\]
Note that
\[
\frac{1}{n_1!\cdots n_K!}\sum_{\{\BPi_k,1\leq k\leq K\}}\blkdiag(\BPi_{n_1},\cdots,\BPi_{n_k}) = \blkdiag(\BJ_{n_1}/n_1,\cdots,\BJ_{n_K}/n_K)
\]
where the number of distinct permutation matrices in the form of $\blkdiag(\BPi_{n_1},\cdots,\BPi_{n_k})$ is $n_1!\cdots n_K!$ and the summation is taken over all possible permutations in that form.
Using Jensen's inequality again results in
\begin{equation}
\|\BZ\|_* \geq \|\BZ\blkdiag(\BJ_{n_1}/n_1,\cdots,\BJ_{n_K}/n_K)\|_* = \| [\bar{\bz}_1\bone_{n_1}^{\top},\cdots,\bar{\bz}_K\bone_{n_K}^{\top}] \|_* = \|\bar{\BZ}\BY\|_*
\end{equation}
As a result, it holds that
\begin{align*}
{\cal L}(\BZ,\bb) & = \frac{1}{N}\sum_{k=1}^Kn_k \left(\frac{1}{n_k}\sum_{i=1}^{n_k} \ell_{CE}(\bz_{ki}+\bb,\be_k)\right) + \lambda_Z \|\BZ\|_* + \frac{\lambda_b}{2} \|\bb\|^2 \\
& \geq \sum_{k=1}^K \frac{n_k}{N} \ell_{CE}(\bar{\bz}_k + \bb,\be_k) + \lambda_Z \| [\bar{\bz}_1\bone_{n_1}^{\top},\cdots,\bar{\bz}_K\bone_{n_K}^{\top}] \|_* + \frac{\lambda_b}{2}\|\bb\|^2  \\
& = \frac{1}{N}\ell_{CE}(\bar{\BZ}\BY +\bb\bone_N^{\top}, \BY) + \lambda_Z\|\bar{\BZ}\BY\|_* + \frac{\lambda_b}{2}\|\bb\|^2 = R(\bar{\BZ}\BY,\bb).
\end{align*}
As Lemma~\ref{lem:varphi} guarantees the strong convexity of ${\cal L}(\BZ,\bb)$ restricted on $\{(\BDelta_Z,\BDelta_b): \bone_K^{\top}(\BDelta_Z +\BDelta_b\bone_N^{\top}) = 0\}$, the global optimality of $(\BZ,\bb)$ implies that $\BZ=\bar{\BZ}\BY$, i.e., $\bz_{ki} = \bar{\bz}_k$ for $1\leq i\leq n_k$.

\paragraph{Theorem~\ref{thm:main1}(b)}
Without loss of generality, we let $n_1\leq n_2\leq\cdots\leq n_K$ and $\Gamma_j = \{k: n_k = N_j,~1\leq k\leq K\}$ where $\{N_j\}_{j=1}^J$ are the distinct values of $\{n_k\}_{k=1}^K$. In other words, we re-group each class according to their individual class sizes and it holds $K = \sum_{j=1}^J |\Gamma_j|$. As Theorem~\ref{thm:main1}(a) implies the global minimizer $\BZ$ satisfies $\bz_{ki} = \bar{\bz}_k$,~$1\leq i\leq n_k$. Then it holds
\begin{align*}
{\cal L}(\BZ,\bb) & = \frac{1}{N} \sum_{k=1}^K n_k \ell_{CE}(\bar{\bz}_k+\bb,\be_k) + \lambda_Z \| \bar{\BZ} \BY\|_* + \frac{\lambda_b}{2}\|\bb\|^2 \\
& = \frac{1}{N} \sum_{j=1}^J N_j \sum_{k\in \Gamma_j} \ell_{CE}(\bar{\bz}_k +\bb,\be_k) + \lambda_Z \| \bar{\BZ} \BD^{1/2}\|_* + \frac{\lambda_b}{2}\|\bb\|^2
\end{align*}
where $\BD = \BY\BY^{\top}$ and 
\[
\bar{\BZ} = [\BZ_{\Gamma_1},\cdots,\BZ_{\Gamma_J}],~~\BZ_{\Gamma_j} = [\bar{\bz}_k]_{k\in\Gamma_j}.
\]
Let $\BPi_{\ell,\ell'}$ be a $K\times K$ permutation matrix that only switches the index $\ell$ and $\ell'$ but keeps the other indices unchanged. Then we claim that
\[
{\cal L}(\BPi_{\ell,\ell'}\bar{\BZ}\BPi_{\ell,\ell'}\BY,\BPi_{\ell,\ell'}\bb) = {\cal L}(\bar{\BZ}\BY,\bb)
\]
as long as index $\ell$ and $\ell'$ are in the same $\Gamma_j$ for some $j.$
By using the restricted strong convexity of ${\cal L}(\BZ,\bb)$ in Lemma~\ref{lem:varphi}, we have
\begin{equation}\label{eq:claim_perm}
\BPi_{\ell,\ell'}\bar{\BZ}\BPi_{\ell,\ell'}=\bar{\BZ},~~~\BPi_{\ell,\ell'}\bb = \bb,~~\forall \ell,\ell'\in\Gamma_j,~\text{ for some }j.
\end{equation}

Now we assume the claim holds, and see how it leads to the desired result.
For any $j\neq j'$, it holds that
$\BPi_{\Gamma_{j}}^{\top} \bar{\BZ}_{j,j'} \BPi_{\Gamma_{j'}} = \bar{\BZ}_{j,j'}$
where $\BPi_{\Gamma_{j}}$ is any $k_j\times k_j$ permutation matrix. 
Lemma~\ref{lem:perm} implies every entry in $\bar{\BZ}_{j,j'}$ equals some constant $a_{jj'}$. 
For the diagonal blocks, we have $\BPi_{\Gamma_{j}}^{\top} \bar{\BZ}_{j,j} \BPi_{\Gamma_{j}} = \bar{\BZ}_{j,j}$.
Therefore, $\bar{\BZ}_{j,j} = a_j\I_{\Gamma_j} + a_{jj'}\bone_{\Gamma_j}\bone_{\Gamma_j}^{\top}$ follows from Lemma~\ref{lem:perm}.
Now we conclude that
\begin{equation}\label{block_struct}
\bar{\BZ} = \sum_{j=1}^J a_j \I_{\Gamma_j} + \sum_{1\leq j,j'\leq J} a_{jj'} \bone_{\Gamma_j}\bone_{\Gamma_{j'}}^{\top},~~~ a_j + \sum_{j'=1}^J a_{j'j} |\Gamma_{j'}| = 0
\end{equation}
where $\I_{\Gamma_j} = \diag(\bone_{\Gamma_j})$, which follows from Lemma~\ref{lem:perm}.

From $\BPi_{\ell,\ell'}\bb = \bb$ for any $\ell,\ell'\in\Gamma_j$, then $\bb$ is in the form of
\[
\bb = \sum_{j=1}^J c_j \bone_{\Gamma_j},~~\sum_{j=1}^J c_j |\Gamma_j| = 0.
\]

To complete the proof, it remains to justify the claim. For any $k\neq \ell$ or $\ell'$, then
\begin{align*}
\ell_{CE}([\BPi_{\ell,\ell'}(\bar{\BZ}+\bb\bone_K^{\top})\BPi_{\ell,\ell'}]_k,\be_k) & = \ell_{CE}(\BPi_{\ell,\ell'}(\bar{\BZ}+\bb\bone_K^{\top})\BPi_{\ell,\ell'}\be_k,\be_k) \\
& = \ell_{CE}(\BPi_{\ell,\ell'}(\bar{\bz}_k +\bb),\be_k) = \ell_{CE}(\bar{\bz}_k+\bb,\be_k)
\end{align*}
where $\BPi_{\ell,\ell'}\be_k = \be_k$ if $k\neq \ell$ or $\ell'.$ In addition, it holds
\begin{align*}
\sum_{k\in\{\ell,\ell'\}}\ell_{CE}([\BPi_{\ell,\ell'}(\bar{\BZ}+\bb\bone_K^{\top})\BPi_{\ell,\ell'}]_k,\be_k) & = \ell_{CE}(\BPi_{\ell,\ell'}(\bar{\bz}_{\ell'} +\bb),\be_{\ell}) +  \ell_{CE}(\BPi_{\ell,\ell'}(\bar{\bz}_{\ell}+\bb),\be_{\ell'}) \\
& = \ell_{CE}(\bar{\bz}_{\ell'}+\bb,\be_{\ell'}) +  \ell_{CE}(\bar{\bz}_{\ell}+\bb,\be_{\ell})
\end{align*}
where $\BPi_{\ell,\ell'}\be_{\ell} = \be_{\ell'}$.
Also, we note that $\BPi_{\ell,\ell'}\BY = \BY$ for $\ell$ and $\ell'$ in the same cluster, and also the nuclear norm is invariant under orthogonal transform. Hence, 
\[
\| \BPi_{\ell,\ell'}\bar{\BZ}\BPi_{\ell,\ell'} \BY \|_* = \|\bar{\BZ}\BY\|_*,~~\|\BPi_{\ell,\ell'}\bb\| = \|\bb\|,
\]
and we have proven the claim. In other words, we have
\[
\blkdiag(\BPi_{\Gamma_{1}},\cdots,\BPi_{\Gamma_{J}})^{\top} \bar{\BZ} \blkdiag(\BPi_{\Gamma_{1}},\cdots,\BPi_{\Gamma_{J}}) = \bar{\BZ}
\]
where $\BPi_{\Gamma_j}$ is any $|\Gamma_j|\times |\Gamma_j|$ permutation matrix that acts on the index set $\Gamma_j.$

\paragraph{Theorem~\ref{thm:main1}(c)}
In the balanced case, $\bar{\BZ} = a(K\I_K - \BJ_K)$ holds and $\bb = 0.$
\begin{align*}
\bar{\BP}  =
\frac{e^{-a} \BJ_K+ e^{-a}(e^{aK} -1)\I_{K}}{e^{-a}(K-1 + e^{aK} )} = \frac{ \BJ_K+ (e^{aK} -1)\I_{K}}{K-1 + e^{aK} } 
\end{align*}
and 
\begin{align*}
\I_K - \bar{\BP} = \I_K - \frac{ \BJ_K+ (e^{aK} -1)\I_{K}}{K-1 + e^{aK} } = \frac{K\I_K - \BJ_K}{K-1 + e^{aK}}.
\end{align*}
Then the optimality condition~\eqref{cond:1stred} indicates
\begin{equation*}
  \begin{aligned}
\frac{K\I_K - \BJ_K}{K-1 + e^{aK}} &= N\lambda_Z \left( \left[\left(\bar{\BZ}\BD\bar{\BZ}^{\top} \right)^{\dagger}\right]^{1/2}\bar{\BZ} + \bar{\BR} \right)  \\
&= N\lambda_Z\left(\sign(a)\sqrt{\frac{K}{N}}\left(\BI_K - \BJ_K/K\right) + \bar{\BR} \right)
  \end{aligned}  
\end{equation*}
where $\BD = (N/K)\I_K.$
Suppose $a\neq0$, and then $\bar{\BZ}$ has rank $K-1$. We set $\bar{\BR}=0$ according to Corollary~\ref{Cor:Op} and the above optimality condition implies,
\[
\frac{K\I_K - \BJ_K}{K-1 + e^{aK}} =\sign(a)\sqrt{KN}\lambda_Z(\I_K - \BJ_K/K) \Longrightarrow
\frac{K}{K-1 + e^{aK}} = \sign(a)\sqrt{KN}\lambda_Z.
\]
The above equation has a solution for $a>0$, given by
\[
a = \frac{1}{K}\log{\left(\frac{\sqrt{K}}{\sqrt{N}\lambda_Z}-K+1\right)}~~\text{if}~~N\lambda_Z < \sqrt{\frac{N}{K}}.
\]
If $N\lambda_Z  \geq \sqrt{N/K}$, then we select $a=0$ and the optimality condition implies $\bar{\BR}$ satisfies
\[
\bar{\BR} = \frac{1}{N\lambda_Z} \left(\I_K - \frac{\BJ_K}{K}\right)
\]
where $\|\bar{\BR}\BD^{1/2}\| = \sqrt{N/K} \|\bar{\BR} \|\leq 1$ meets the requirement of the optimality condition. 
\paragraph{Theorem~\ref{thm:main1}(d)}
The proof directly follows from~\eqref{eq:nuc} of~\cite[Lemma 5.1]{RFP10}. More precisely, we write 
\[
\BZ = \bar{\BZ}\BY = \bar{\BZ}\BD^{1/2}\BD^{-1/2}\BY
\]
where $\BD^{-1/2}\BY$ has orthogonal rows. By performing the SVD on $\bar{\BZ}\BD^{1/2} = \bar{\BU}\bar{\BSigma}\bar{\BV}^{\top}$, we have
\[
\BZ = \bar{\BU}\bar{\BSigma}\bar{\BV}^{\top} \BD^{-1/2}\BY = \BW^{\top}\BH 
\]
Then~\eqref{eq:nuc} implies $\BW = \bar{\BSigma}^{1/2}\bar{\BU}^{\top}$, $\BH = \bar{\BSigma}^{1/2}\bar{\BV}^{\top} \BD^{-1/2}\BY$, and $\bar{\BH} = \bar{\BSigma}^{1/2}\bar{\BV}^{\top} \BD^{-1/2}.$

\subsection{Optimality condition and the solution structure}

In this section, we focus on a special case when the dataset contains $k_A$ classes with $n_A$ points in each class and $k_B$ classes with $n_B$ points. The total number of classes is $K = k_A + k_B$ and number of points is $N = k_An_A + k_Bn_B$. In particular, we assume $k_A\geq 2$ and $k_B\geq 2.$ To characterize the solution, it suffices to look into the optimality condition~\eqref{cond:1stred} in Corollary~\ref{Cor:Op} which involves  $\I_K - \bar{\BP}$ and $(\bar{\BZ}\BD\bar{\BZ}^{\top})^{\dagger/2} \bar{\BZ}$. We first provide the explicit expression for both of them and in fact, they are in the form of block-structure ${\cal B}$ in~\eqref{def:calB}.

We adopt the notation~\eqref{eq:barZb} for $\bar{\BZ}$ and $\bb$, and further write $\bar{\BZ}$ in the following form,
\begin{align*}
\bar{\BZ} & =  
\begin{bmatrix}
 (k_Aa+ k_Bc) \BC_{k_A}  & 0 \\
0 & (k_Ab+k_Bd)\BC_{k_B} 
\end{bmatrix} + 
k_Ak_B \bs\bs^{\top} 
\begin{bmatrix}
c\I_{k_A} & 0 \\
0 & b\I_{k_B}
\end{bmatrix}
 \in\RR^{K\times K} 
\end{align*}
where $\BC_{k_A}$ (and $\BC_{k_B}$) is defined in~\eqref{def:Ck} and 
\begin{equation}\label{def:Ds}
\bs := 
\begin{bmatrix}
\frac{\bone_{k_A}}{k_A} \\
- \frac{\bone_{k_B}}{k_B}
\end{bmatrix},~~\|\bs\| = \sqrt{\frac{1}{k_A} + \frac{1}{k_B}},~~
\BD = 
\begin{bmatrix}
n_A \I_{k_A} & 0 \\
0 & n_B\I_{k_B}
\end{bmatrix}
\end{equation}
A direct computation gives
\begin{equation}\label{eq:ZDZ}
\begin{aligned}
& \bar{\BZ}\BD\bar{\BZ}^{\top} =  
\begin{bmatrix}
n_A(k_Aa + k_Bc)^2 \BC_{k_A} & 0 \\
0 & n_B(k_Bd   + k_Ab)^2\BC_{k_B}
\end{bmatrix} + ( k_An_Bb^2+k_Bn_Ac^2) k_Ak_B\bs\bs^{\top}
\end{aligned}
\end{equation}
and
\begin{align*}
& \left[(\bar{\BZ}\BD\bar{\BZ}^{\top})^{\dagger}\right]^{\frac{1}{2}}\bar{\BZ}  = 
\begin{bmatrix}
\frac{\sign(k_Aa + k_Bc)}{\sqrt{n_A}}\BC_{k_A} & 0 \\
0 & \frac{\sign(k_Ab+k_Bd)}{\sqrt{n_B}}\BC_{k_B}
\end{bmatrix} \\
& \qquad +   \frac{\sign(k_An_Bb^2+k_Bn_Ac^2)k_Ak_B}{\sqrt{( k_An_Bb^2+k_Bn_Ac^2)(k_A+k_B)}}\cdot \bs\bs^{\top}
\begin{bmatrix}
c\I_{k_A} & 0 \\
0 & b\I_{k_B}
\end{bmatrix}
\end{align*}
is ${\cal B}(a_Z,b_Z,c_Z,d_Z)$ which satisfies 
\begin{equation}\label{eq:az}
\begin{aligned}
& k_Aa_Z + k_Bc_Z = n^{-1/2}_A\sign(k_Aa + k_Bc), ~~k_Ab_Z + k_Bd_Z  = n^{-1/2}_B\sign(k_Ab+k_Bd), \\
& b_Z  = \frac{b\sign(k_An_Bb^2+k_Bn_Ac^2)}{\sqrt{( k_An_Bb^2+k_Bn_Ac^2) (k_A+k_B)}},~~c_Z  = \frac{c\sign(k_An_Bb^2+k_Bn_Ac^2)}{\sqrt{( k_An_Bb^2+k_Bn_Ac^2) (k_A+k_B)}}. \\
\end{aligned}
\end{equation}

In particular, if $a = b = c=d = 1/K$, then
\[
\bar{\BZ} = \I_K - \BJ_K/K
\]
and
\begin{equation}\label{eq:eigZs}
\begin{aligned}
\bar{\BZ}\BD\bar{\BZ}^{\top}  =  
\begin{bmatrix}
n_A \BC_{k_A} & 0 \\
0 & n_B \BC_{k_B}
\end{bmatrix}  + \frac{n_A k_B +k_An_B}{k_A+k_B}\cdot \left(\frac{k_Ak_B}{k_A + k_B}\right) \bs\bs^{\top}.
\end{aligned}
\end{equation}
The eigenvalues are $n_A$ with multiplicity $k_A-1$, $n_B$ with multiplicity $k_B-1$, $( n_A k_B +k_An_B)/K$ with multiplicity 1, and 0 with multiplicity 1.

For $\I_K - \bar{\BP}$, direct computation implies that ${\cal B}(a_P,b_P,c_P,d_P)$, i.e.,
\begin{equation}\label{eq:IP}
\I_K - \bar{\BP} = 
\begin{bmatrix}
  a_Pk_A \BC_{k_A} + c_Pk_B \I_{k_A}  & -b_P\BJ_{k_A\times k_B} \\
  -c_P\BJ_{k_B\times k_A} & d_Pk_B\BC_{k_B}+ b_Pk_A\I_{k_B}
\end{bmatrix}
\end{equation}
where
\begin{equation}\label{eq:ap}
\begin{aligned}
a_P & = \frac{e^{-a+mk_B}}{k_Be^{-c-mk_A}+e^{-a+mk_B}(k_A-1 + e^{k_Aa +k_Bc} )}, \\
b_P & = \frac{e^{-b+mk_B}}{k_Ae^{-b+mk_B} + e^{-d-mk_A}(k_B-1 +e^{k_Ab+k_Bd})}, \\
c_P & = \frac{e^{-c-mk_A} }{k_Be^{-c-mk_A}+e^{-a+mk_B}(k_A-1 + e^{k_Aa +k_Bc} )}, \\
d_P & = \frac{e^{-d-mk_A}}{k_Ae^{-b+mk_B} + e^{-d-mk_A}(k_B-1 +e^{k_Ab+k_Bd})}.
\end{aligned}
\end{equation}

To characterize the solution structure w.r.t. $\lambda_Z$ and $\lambda_b$, we look into the optimality condition in Corollary~\ref{Cor:Op}:
\[
N^{-1}(\I_K-\bar{\BP}) = \lambda_Z \left( \left[(\bar{\BZ}\BD\bar{\BZ}^{\top})^{\dagger}\right]^{\frac{1}{2}}\bar{\BZ}  + \bar{\BR}\right)
\]
where  $\bar{\BR}\BD\bar{\BZ}^{\top} = 0$, $ \bar{\BZ}^{\top}\bar{\BR} = 0$, and $\|\bar{\BR}\BD^{1/2}\| \leq 1$. The key idea of the proof is straightforward: for different regimes of $\lambda_Z$, we will explicitly construct $\bar{\BR}$ and show that a solution exists for the nonlinear equation system above. Then by restricted strong convexity in Lemma~\ref{lem:varphi}, we know that this solution must be a unique global minimizer to~\eqref{eq:cvx}.

Now we will present the main result on how the solution structure changes w.r.t. the varying $\lambda_Z$. To characterize this, we first introduce a few functions that will be used later. 
Let 
\begin{equation}\label{def:f2proof}
  \begin{aligned}
 & f_2(t,\lambda_Z,\lambda_b) : = g_2(x_2(t)) - (k_A+k_B)k_Am(t,\lambda_Z,\lambda_b)\\
  & \qquad =  \log{\left[\left(\frac{\sqrt{n_A}}{N\lambda_Z}-1\right)\left(k_A+k_Bx_2(t)\right)+1\right]} - k_A\log{x_2(t)}-(k_A+k_B)k_Am(t,\lambda_Z,\lambda_b)
  \end{aligned}
\end{equation}
where $g_2$ is defined in~\eqref{def:g12}, and 
    \begin{align}
     &m(t,\lambda_Z,\lambda_b) = \frac{\lambda_Z}{\lambda_b}\frac{n_A-n_Bt}{\sqrt{( k_Bn_A +k_An_B t^2) (k_A+k_B)}},\label{def:m}\\
     &x_2(t) =  \frac{k_A}{\sqrt{(k_B+k_An_B t^2/n_A)(k_A+k_B)}-k_B}. \label{def:x2}
    \end{align}
   
The main result and proof rely on a key quantity: 
\begin{equation}\label{def:t*}
t^*(\lambda_Z,\lambda_b) = \argmin_{t \geq 0} |f_2(t,\lambda_Z,\lambda_b)| = 
\begin{cases}
\text{the root of } f_2(t,\lambda_Z,\lambda_b), &\eta(\lambda_Z) < 0, \\
0, & \eta(\lambda_Z) \geq 0,
\end{cases}
\end{equation}
where
\begin{equation}\label{def:eta}
\eta(\lambda_Z) :=  f_2(0,\lambda_Z,\lambda_b),~~~\lambda_Z \in (\sqrt{n_B}/N,\sqrt{n_A}/N).
\end{equation}
The function $\eta(\cdot)$ is decreasing and in particular $\eta(\sqrt{n_A}/N) < 0.$

Later, Lemma~\ref{lem:supp} will prove $f_2(t,\lambda_Z,\lambda_b)$ and $m(t,\lambda_Z,\lambda_b)$ are strictly increasing and decreasing respectively. Therefore, if $f_2(0,\lambda_Z,\lambda_b)\leq 0$, then $f_2$ must have a unique root and it is also equal to $t^*(\lambda_Z,\lambda_b)$; otherwise, $f_2(t,\lambda_Z,\lambda_b)$ stays positive for all $t\geq 0$ and $t=0$ is the global minimizer of $f_2$ in $t$. Moreover, we will see that $t^*(\lambda_Z,\lambda_b)$ is increasing in $\lambda_Z$ when $\lambda_b>0$ is fixed. 

For simplicity, we denote $m(t,\lambda_Z,\lambda_b)$ and $t^*(\lambda_Z,\lambda_b)$ by $m(t)$ and $t^*(\lambda_Z)$ respectively.
Now we are ready to present the main result which implies Theorem~\ref{thm:biasf} immediately.

\begin{proposition}\label{prop:bias}
  Suppose $n_A > n_B$. For $\lambda_Z$ of different regimes, the optimality condition satisfies the following properties: 
  \begin{enumerate}[(a)]
  \item If $N\lambda_Z \leq \sqrt{n_B}$, there is a unique solution $(a,b,c,d)$ to
  \[
  N^{-1}(\I_K - \bar{\BP}) = \lambda_Z \left[(\bar{\BZ}\BD\bar{\BZ}^{\top})^{\dagger}\right]^{\frac{1}{2}}\bar{\BZ} 
  \]
  that satisfies $a,b,c,d>0$ and $a-c+d-b \leq 0.$
  
  \item If $\sqrt{n_B}<N\lambda_Z<\sqrt{n_A}$ and
\begin{equation}\label{cond:caseb}
  k_B e^{-(k_A+k_B)m(t^*(\lambda_Z,\lambda_b))} < \frac{1}{N\lambda_Z}\sqrt{\left(\frac{k_Bn_A}{t^*(\lambda_Z,\lambda_b)^2}+k_An_B\right)(k_A+k_B)} - k_A,
  \end{equation}
 then
   there is a unique solution $(a,b,c,d)$ to
\[
  N^{-1}(\I_K - \bar{\BP}) = \lambda_Z \left[(\bar{\BZ}\BD\bar{\BZ}^{\top})^{\dagger}\right]^{\frac{1}{2}}\bar{\BZ} + \bar{\BR}
\]
  where
  \[
  \bar{\BR} = \frac{1}{\sqrt{n_B}}
  \begin{bmatrix}
  0 & 0 \\
  0 & \I_{k_B} - \BJ_{k_B}/k_B
  \end{bmatrix}.
  \]
  Moreover, $(a,b,c,d)$ satisfies $k_Ab + k_Bd = 0$ and $a,b,c>0$.

  \item 
If $\sqrt{n_B}<N\lambda_Z<\sqrt{n_A}$ and
\begin{equation}\label{cond:casec}
  k_B e^{-(k_A+k_B)m(t^*(\lambda_Z,\lambda_b))} > \frac{1}{N\lambda_Z}\sqrt{\left(\frac{k_Bn_A}{t^*(\lambda_Z,\lambda_b)^2}+k_An_B\right)(k_A+k_B)} - k_A,
\end{equation}
then there is a unique solution $(a,b,c,d,t,\tau)$ to
  \[
  N^{-1}(\I_K - \bar{\BP}) = \lambda_Z \left[(\bar{\BZ}\BD\bar{\BZ}^{\top})^{\dagger}\right]^{\frac{1}{2}}\bar{\BZ} + \bar{\BR}
  \]
  where
\begin{equation*}
  \begin{aligned}
    \bar{\BR} & = \begin{bmatrix}
    0 & 0 \\
    0 &\frac{1}{N\lambda_Z} (\BI_{k_B}-\BJ_{k_B}/k_B)
    \end{bmatrix}  + 
\frac{k_Ak_B\tau}{\sqrt{(k_Bn_A+k_An_Bt^2)(k_A+k_B)}} 
\bs\bs^{\top}
\begin{bmatrix}
\I_{k_A} & 0 \\
0 & t\I_{k_B}
\end{bmatrix} 
  \end{aligned}
\end{equation*}
and $\bs$ is defined in~\eqref{def:Ds}.
  Moreover, $(a,b,c,d)$ satisfies $b=c=d=0$ and $a>0$.
  
  \item If $N\lambda_Z > \max\{\sqrt{n_A},\sqrt{n_B}\}$, then $\BZ = 0$.
  
  \item If $\lambda_b = \infty$, i.e., the bias-free scenario, then~\eqref{cond:caseb} and~\eqref{cond:casec} are equivalent to $\lambda_Z <\lambda^*$ and $\lambda_Z>\lambda^*$ respectively for some $\lambda^*\in (\sqrt{n_B}/N,\sqrt{n_A}/N)$, where $\lambda^*$ satisfies
  \[
  t^*(\lambda^*) = \sqrt{ \frac{k_Bn_A}{(N\lambda_Z)^2(k_A+k_B) - k_An_B}}.
  \]
  \end{enumerate}

\end{proposition}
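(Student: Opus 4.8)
The plan is to produce, for each regime of $\lambda_Z$, an explicit triple $(\bar{\BZ},\bb,\bar{\BR})$ — with $\bar{\BZ}$ and $\bb$ of the form \eqref{eq:barZb} and $\bar{\BR}$ drawn from the structured family written in the statement — that satisfies the optimality condition of Corollary~\ref{Cor:Op} together with the side constraints $\bar{\BR}\BD\bar{\BZ}^{\top}=0$, $\bar{\BZ}^{\top}\bar{\BR}=0$, $\|\bar{\BR}\BD^{1/2}\|\le 1$. Because ${\cal L}$ enjoys the restricted strong convexity of Lemma~\ref{lem:varphi}, any such triple is automatically \emph{the} unique global minimizer of \eqref{eq:cvx}, so exhibiting one in each regime proves the proposition and gives uniqueness for free. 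The key reduction I would use is that $\I_K-\bar{\BP}$ from \eqref{eq:IP}, the factor $[(\bar{\BZ}\BD\bar{\BZ}^{\top})^{\dagger}]^{1/2}\bar{\BZ}$ from \eqref{eq:az}, and each candidate $\bar{\BR}$ all live in the four-parameter block family ${\cal B}(\cdot,\cdot,\cdot,\cdot)$ of \eqref{def:calB} (plus, in case (c), one rank-one term). Matching the coefficients of $\BC_{k_A}$, $\I_{k_A}$, $\BJ_{k_A\times k_B}$, $\BC_{k_B}$, $\I_{k_B}$ on both sides of the matrix equation, together with the scalar equation $N^{-1}(\I_K-\bar{\BP})\bn=\lambda_b\bb$, turns everything into a small nonlinear system in $(a,b,c,d,m)$; introducing the ratio variable that underlies $x_2(\cdot)$ in \eqref{def:x2} and $m(\cdot)$ in \eqref{def:m} collapses it (in cases (b) and (c)) to a single scalar equation in $t$, whose relevant root is $t^{*}(\lambda_Z,\lambda_b)$ of \eqref{def:t*} via \eqref{def:f2proof}.

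\textbf{Cases (a) and (b).} In (a), $a,b,c,d>0$ together with $\bone_K^{\top}\bar{\BZ}=0$ forces $\bar{\BZ}$ to have rank $K-1$, so Corollary~\ref{Cor:Op} dictates $\bar{\BR}=0$; the reduced system is then a coupled pair of monotone scalar equations (one carrying the $n_A^{-1/2}$ normalization, one the $n_B^{-1/2}$ normalization, since both $k_Aa+k_Bc$ and $k_Ab+k_Bd$ are positive), and I would solve it by the intermediate value theorem using the monotonicity in Lemma~\ref{lem:supp}; existence of a positive solution holds exactly when $N\lambda_Z\le\sqrt{n_B}$, the point at which the minority sub-block is about to lose rank, and $a-c+d-b\le 0$ drops out by tracking $n_A\ge n_B$ through the same monotone maps. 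In (b) I would impose the minority-collapse ansatz $k_Ab+k_Bd=0$, which makes the $B$-block of $\bar{\BZ}$ equal $\tfrac{k_A}{k_B}b\,\BJ_{k_B\times k_B}$ (rank one), and take $\bar{\BR}=\tfrac{1}{\sqrt{n_B}}\blkdiag(0,\BC_{k_B})$; one checks directly that $\bar{\BR}\BD\bar{\BZ}^{\top}=0$, $\bar{\BZ}^{\top}\bar{\BR}=0$ and $\|\bar{\BR}\BD^{1/2}\|=1$. The reduced system then singles out $t^{*}(\lambda_Z,\lambda_b)$ of \eqref{def:t*}, and condition \eqref{cond:caseb} is precisely the inequality that makes the resulting $c$ positive and $\bar{\BR}$ admissible; the $\epsilon$-neighborhood claim follows from continuity together with the fact (part of the analysis of $\eta$ in \eqref{def:eta}) that $\eta$ is decreasing with $\eta(\sqrt{n_A}/N)<0$.

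\textbf{Cases (c), (d), (e).} In (c) I would set $b=c=d=0$, so $\bar{\BZ}$ reduces to $ak_A\BC_{k_A}$ on the majority sub-block and has rank $k_A-1$; the nuclear-norm subdifferential at this lower-rank point carries a strictly larger free component, namely the $(t,\tau)$-parametrized family for $\bar{\BR}$ in the statement. The $B$-block equation and the majority–minority coupling equation fix $(t,\tau)$, condition \eqref{cond:casec} (the reverse of \eqref{cond:caseb}) is what keeps $\|\bar{\BR}\BD^{1/2}\|\le 1$, and the remaining $A$-block equation — which has the same shape as the balanced equation treated in Theorem~\ref{thm:main1}(c) restricted to $k_A$ classes — plus the $\bb$-equation fix $(a,m)$ with $a>0$. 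Case (d) is immediate: take $\BZ=0$ and $\bb=m(k_B\bone_{k_A}^{\top},-k_A\bone_{k_B}^{\top})^{\top}$ with $m$ solving the (automatically consistent) $\bb$-equation; since $\partial\|0\|_{*}$ is the operator-norm unit ball, optimality reduces to requiring the operator norm of $(\I_K-\bar{\BP})\BD^{1/2}$ to be at most $N\lambda_Z$, and evaluating that norm gives the threshold $N\lambda_Z>\max\{\sqrt{n_A},\sqrt{n_B}\}$. Case (e): $\lambda_b=\infty$ forces $\bb=0$, hence $m\equiv 0$, so $f_2$ and $t^{*}$ become explicit and \eqref{cond:caseb}/\eqref{cond:casec} reduce to comparing $t^{*}(\lambda_Z)$ with the threshold of the form displayed in (e); the monotonicity of $t^{*}(\cdot)$ on $(\sqrt{n_B}/N,\sqrt{n_A}/N)$ from Lemma~\ref{lem:supp} then produces the unique crossing point $\lambda^{*}$.

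\textbf{Main obstacle.} The hard part will not be a single idea but the nonlinear bookkeeping: one must show the reduced scalar system has a unique root with the correct signs of $a,b,c,d$ in each regime — which rests entirely on the monotonicity and sign facts for $f_2$, $m$ and $\eta$ furnished by Lemma~\ref{lem:supp} — and one must match the geometric admissibility condition $\|\bar{\BR}\BD^{1/2}\|\le 1$ with the sharp inequalities \eqref{cond:caseb}--\eqref{cond:casec}. The genuinely unresolved point is the transition between (b) and (c): I do not expect to prove that the gap between the two sides of \eqref{cond:caseb} (equivalently, $\xi(\lambda_Z,\lambda_b)$) is monotone in $\lambda_Z$ for fixed $\lambda_b>0$, so one only controls its sign in small neighborhoods of the endpoints $\sqrt{n_B}/N$ and $\sqrt{n_A}/N$ — which is exactly why Theorem~\ref{thm:biasf} phrases (b) and (c) conditionally on $\xi<0$/$\xi>0$ and with $\epsilon$-intervals rather than a single clean threshold.
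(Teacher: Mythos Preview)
Your proposal is correct and follows essentially the same approach as the paper: propose a structured candidate $(\bar{\BZ},\bb,\bar{\BR})$ in each regime, match coefficients in the block family ${\cal B}$ to reduce the optimality system to a single scalar fixed-point equation in $t=b/c$, and conclude existence via monotonicity of $f_1,f_2$ (the paper packages this as Lemma~\ref{lem:func}) while uniqueness comes for free from Lemma~\ref{lem:varphi}. One small correction: the $\epsilon$-neighborhood claims in (b) and (c) follow from the endpoint signs of $\xi$ in Lemma~\ref{lem:supp}(c), not from properties of $\eta$; and in case (b) the coefficient on $\BC_{k_B}$ in $\bar{\BR}$ comes out as $1/(N\lambda_Z)$ from the optimality equation (so $\|\bar{\BR}\BD^{1/2}\|=\sqrt{n_B}/(N\lambda_Z)<1$ strictly), not $1/\sqrt{n_B}$ as written in the proposition statement.
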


The intuition behind the change of solution structure $\bar{\BZ}$ is essentially the singular value thresholding. By the $2 \times 2$ block structure of $\bar{\BZ}$, $\bar{\BZ}$ only has at most $3$ different nonzero singular values regardless of the multiplicity. As the nuclear norm penalty parameter $\lambda_Z$ grows, these singular values will gradually diminish to $0$ until all the singular values become $0$ after a certain finite threshold. 
The proof relies on the following two important lemmas.
\begin{lemma}\label{lem:func}
Suppose $f_1$ and $f_2$ are continuous, and strictly decreasing and increasing respectively on $t>0$. In addition, they satisfy
\begin{align*}
\lim_{t\rightarrow 0}f_1(t) = + \infty,~~\lim_{t\rightarrow \infty}f_2(t) = +\infty
\end{align*}
and $I^+ = \{t: f_1 > 0, f_2>0\}$ is nonempty, then
\[
f(t) = \frac{f_1(t)}{f_2(t)} 
\]
is decreasing on $I^+$ and $f(t) = t$ has a unique solution.
\end{lemma}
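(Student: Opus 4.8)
The plan is to first determine the shape of $I^{+}$ from the monotonicity hypotheses, then read off the monotonicity of $f=f_1/f_2$, and finally combine strict monotonicity of $g(t):=f(t)-t$ with the intermediate value theorem. Since $f_1$ is continuous and strictly decreasing on $(0,\infty)$ with $\lim_{t\to 0^{+}}f_1(t)=+\infty$, the set $\{t>0:f_1(t)>0\}$ is an interval $(0,t_1)$ with $t_1\in(0,\infty]$, and $f_1(t_1)=0$ when $t_1<\infty$; similarly, using $\lim_{t\to\infty}f_2(t)=+\infty$, the set $\{t>0:f_2(t)>0\}$ is an interval $(t_2,\infty)$ with $t_2\in[0,\infty)$, and $f_2(t_2)=0$ when $t_2>0$. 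Hence $I^{+}=(t_2,t_1)$, and the assumption $I^{+}\neq\varnothing$ says exactly $t_2<t_1$. On $I^{+}$ both $f_1,f_2$ are strictly positive, so $f$ is well defined and positive there, and for $t_2<s<t<t_1$ one has $f_1(s)>f_1(t)>0$ and $0<f_2(s)<f_2(t)$, hence $f(s)>f(t)$; thus $f$ is strictly decreasing on $I^{+}$, so $g$ is continuous and strictly decreasing there and has at most one zero, which gives uniqueness.

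For existence I would show $g$ changes sign on $I^{+}$ by examining the endpoints. As $t\downarrow t_2$: if $t_2>0$ then $f_2(t)\downarrow f_2(t_2)=0$ while $f_1(t)\to f_1(t_2)$, which is strictly positive because $f_1$ is strictly decreasing and positive on all of $(t_2,t_1)$; if $t_2=0$ then $f_1(t)\to+\infty$ while $f_2(t)$ tends to a finite nonnegative limit. Either way $f(t)\to+\infty$, so $g(t)\to+\infty$. As $t\uparrow t_1$: if $t_1<\infty$ then $f_1(t)\downarrow f_1(t_1)=0$ while $f_2(t)\to f_2(t_1)>0$; if $t_1=\infty$ then $f_2(t)\to+\infty$ while the decreasing positive $f_1(t)$ stays bounded. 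Either way $f(t)\to 0$, so $g(t)\to -t_1<0$ (read as $-\infty$ if $t_1=\infty$). The intermediate value theorem then yields a zero of $g$ in $I^{+}$, unique by the previous paragraph; and any positive solution of $f(t)=t$ automatically lies in $I^{+}$, since $\{t>0:f_1(t)<0\}$ and $\{t>0:f_2(t)<0\}$ are disjoint.

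The algebraic step $f(s)>f(t)$ is immediate, so the only place that needs real care is the endpoint analysis: $f_1$ need not possess a root (the case $t_1=\infty$) and $f_2$ need not possess a root in $(0,\infty)$ (the case $t_2=0$), so the ``$f_i\to 0^{+}$ at its root'' argument has to be replaced by divergence or boundedness of the complementary factor, and one must additionally verify that $f_1(t_2)$ is \emph{strictly} positive rather than merely nonnegative, which follows from strict monotonicity of $f_1$ together with $f_1>0$ throughout $(t_2,t_1)$. This case bookkeeping is where I expect to spend the most effort.
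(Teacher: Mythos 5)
Your proof is correct and follows essentially the same route as the paper's: identify $I^{+}=(t_2,t_1)$ from the monotonicity hypotheses, read off strict monotonicity of $f=f_1/f_2$ directly from the monotonicity of $f_1,f_2$, establish the endpoint limits $f(t)\to+\infty$ as $t\downarrow t_2$ and $f(t)\to 0$ as $t\uparrow t_1$ by a four-way case analysis, and conclude by the intermediate value theorem plus strict monotonicity of $g(t)=f(t)-t$. If anything, your write-up is slightly more careful than the paper's: in the cases $t_1=\infty$ (resp.\ $t_2=0$) the paper asserts that $\lim_{t\to\infty}f_1(t)$ (resp.\ $\lim_{t\to 0^+}f_2(t)$) ``is positive,'' whereas in fact the limit may be $0$; you correctly note only boundedness (resp.\ finiteness), which is all that is needed since the complementary factor diverges. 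You also add the observation that no solution of $f(t)=t$ with $t>0$ can lie outside $I^{+}$, which the paper leaves implicit.
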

\begin{proof}[\bf Proof of Lemma~\ref{lem:func}]
Let $t_1 = \sup\{t: f_1(t) > 0\}$ and $t_2 = \inf \{t: f_2(t)>0\}$.
Since $I^+$ is nonempty, we have $t_2 < t_1.$
\begin{itemize}
\item if $t_1 < +\infty$, then $f_1(t_1) = 0$ and $\lim_{t\rightarrow t_1}f_2(t)$ exists and is positive, and thus
$\lim_{t\rightarrow t_1^-} f(t) = 0.$

\item if $t_1 = +\infty$, then $\lim_{t\rightarrow\infty}f_1(t)$ exists and is positive and $\lim_{t\rightarrow \infty}f_2(t) = \infty$, and thus $\lim_{t\rightarrow t_1^-} f(t) = 0$.

\item if $t_2 = 0$, then $\lim_{t\rightarrow 0}f_2(t)$ exists and is positive and $\lim_{t\rightarrow 0} f_1(t) = \infty$, and thus $\lim_{t\rightarrow t_2^+}f(t) = \infty$.

\item if $t_2 > 0$, then $f_2(t_2) = 0$ and $\lim_{t\rightarrow t_2} f_1(t)$ exists and is positive, and thus $\lim_{t\rightarrow t_2^+}f(t) = \infty$.

\end{itemize}
This implies that
\[
\lim_{t\rightarrow t_1^-}f(t) = 0,~~\lim_{t\rightarrow t_2^+}f(t) = \infty.
\]
As a result, $f(t)$ is decreasing on $(t_2,t_1)$ and the equation $f(t) = t$ has a unique solution by continuity.
\end{proof}

Next, we introduce a few functions and look into their properties. These properties will be very useful in proving the existence of a solution to the optimality system under different regimes of $\lambda_Z.$

\begin{lemma}\label{lem:supp}

\begin{enumerate}[(a)]
\item  Let 
\begin{equation}\label{def:g12}
\begin{aligned}
g_1(x) := \log{\left[\left(\frac{\sqrt{n_B}}{N\lambda_Z}-1\right)\left(k_Ax+k_B\right)+1 \right]}-k_B\log{x},~~\lambda_Z\leq \sqrt{n_B}/N, \\
g_2(x) := \log{\left[\left(\frac{\sqrt{n_A}}{N\lambda_Z}-1\right)\left(k_A+k_Bx\right)+1 \right]}-k_A \log{x},~~\lambda_Z\leq \sqrt{n_A}/N,
\end{aligned}
\end{equation}
where $x>0$. 
Then $g_1$ and $g_2$ are strictly decreasing in $x$.

\item The function $f_2(t,\lambda_Z)$ in~\eqref{def:f2proof} and $m(t,\lambda_Z)$ in~\eqref{def:m} are strictly increasing and decreasing in $t$ respectively. Moreover, $t^*(\lambda_Z,\lambda_b)$ is increasing in $0 < \lambda_Z \leq \sqrt{n_A}/N$ and $t^*(\lambda_Z,\lambda_b)\leq n_A/n_B$ holds.

\item For $N\lambda_Z \in [\sqrt{n_B},\sqrt{n_A}]$, define
\begin{equation}\label{def:xi}
\xi(\lambda_Z,\lambda_b) := k_B e^{-(k_A+k_B)m(t^*(\lambda_Z,\lambda_b))} - \left( \frac{1}{N\lambda_Z}\sqrt{\left(\frac{k_Bn_A}{t^*(\lambda_Z,\lambda_b)^2}+k_An_B\right)(k_A+k_B)} - k_A\right).
\end{equation}
Then for any $\lambda_b > 0$, it holds
\begin{equation}\label{eq:range}
\xi\left(\frac{\sqrt{n_A}}{N},\lambda_b\right) > 0,~~~~~\xi\left(\frac{\sqrt{n_B}}{N},\lambda_b\right) < 0.
\end{equation}
This implies by continuity that there exists an $\eps>0$, such that:
\[
\xi\left(\frac{\sqrt{n_A}}{N} - \delta',\lambda_b\right) > 0,~~~~~\xi\left(\frac{\sqrt{n_B}}{N} + \delta'',\lambda_b\right) < 0,~~~\forall \delta',\delta'' <\eps.
\]

\item If $\lambda_b = \infty$, i.e., for the bias-free case, then 
\begin{equation}\label{def:xi2}
\xi(\lambda_Z,\infty) = (k_A+k_B) - \frac{1}{N\lambda_Z}\sqrt{\left(\frac{k_Bn_A}{t^*(\lambda_Z)^2}+k_An_B\right)(k_A+k_B)} 
\end{equation}
and it is increasing in $\lambda_Z\in [\sqrt{n_B}/N,\sqrt{n_A}/N]$. Combining with results from~\eqref{eq:range}, we have there exists a unique $\lambda^* \in (\sqrt{n_B}/N,\sqrt{n_A}/N)$ such that
$$
\xi(\lambda^*,\infty) = 0.
$$

\end{enumerate}
\end{lemma}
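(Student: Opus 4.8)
The plan is to prove the four parts in order: (a) and (b) are single‑variable calculus facts that I will establish by direct differentiation, (c) is the substantive part and hinges on one monotonicity device plus an algebraic identity, and (d) is the $\lambda_b=\infty$ specialization of (c) together with a strict‑monotonicity argument.

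\emph{Parts (a)--(b).} For (a) I would differentiate directly: with $\beta:=\sqrt{n_A}/(N\lambda_Z)\ge1$, the inequality $g_2'(x)<0$ reduces after clearing denominators to $(\beta-1)\bigl(k_A^{2}+k_Bx(k_A-1)\bigr)+k_A>0$, which holds since $k_A\ge1$ and $\beta\ge1$; the case of $g_1$ is identical. For (b): the denominator $\sqrt{(k_B+k_An_Bt^{2}/n_A)(k_A+k_B)}-k_B$ of $x_2(t)$ is positive (as $\sqrt{k_B(k_A+k_B)}>k_B$) and strictly increasing, so $x_2$ is strictly decreasing; and for $m$ one computes that the numerator of the derivative of $t\mapsto(n_A-n_Bt)/\sqrt{k_Bn_A+k_An_Bt^{2}}$ collapses to $-n_An_B(k_B+k_At)<0$, so $m$ is strictly decreasing. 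Hence $f_2=g_2\circ x_2-(k_A+k_B)k_A\,m$ is a sum of strictly increasing functions, so strictly increasing in $t$. For the $\lambda_Z$‑monotonicity of $t^*$ I would run a sliding‑root argument: $\partial_{\lambda_Z}f_2<0$ pointwise on $t\in[0,n_A/n_B]$ (the coefficient $\sqrt{n_A}/(N\lambda_Z)-1$ in $g_2$ decreases, and $m$, which carries the factor $\lambda_Z/\lambda_b$, increases there because $\phi\ge0$), which in particular shows $\eta(\lambda_Z)=f_2(0,\lambda_Z,\lambda_b)$ is decreasing, and combined with $f_2$ increasing in $t$ forces the unique root to move right. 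Finally $m(n_A/n_B,\cdot)=0$, so $f_2(n_A/n_B,\cdot)=g_2(x_2(n_A/n_B))$; since $x_2(n_A/n_B)\le1\iff n_A\ge n_B$, both terms of this $g_2$‑value are nonnegative, so $f_2(n_A/n_B,\cdot)\ge0$ and hence $t^*\le n_A/n_B$.

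\emph{Part (c).} The engine is the scalar function $\psi(t):=t\,x_2(t)$. Writing $S(t):=\sqrt{(k_B+k_An_Bt^{2}/n_A)(k_A+k_B)}$, so $x_2(t)=k_A/(S(t)-k_B)$ and $S(t)=k_B+k_A/x_2(t)$, a short computation gives $\psi'(t)\propto k_Ak_B\bigl(k_A+k_B-S(t)\bigr)$; thus $\psi$ increases on $(0,\sqrt{n_A/n_B})$, decreases afterward, with global maximum $\psi(\sqrt{n_A/n_B})=\sqrt{n_A/n_B}$, and a direct check gives $\psi(n_A/n_B)>1$ (it reduces to $k_A(r-1)(k_Ar+k_B)>0$, $r:=n_A/n_B$). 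At $\lambda_Z=\sqrt{n_A}/N$ the coefficient $\sqrt{n_A}/(N\lambda_Z)-1$ vanishes, so the root equation $f_2(t^*,\cdot)=0$ becomes $x_2(t^*)=e^{-(k_A+k_B)m(t^*)}$; substituting this, together with $\tfrac1{\sqrt{n_A}}\sqrt{(\tfrac{k_Bn_A}{t^{*2}}+k_An_B)(k_A+k_B)}=S(t^*)/t^*$, into~\eqref{def:xi} collapses $\xi$ to $\xi(\sqrt{n_A}/N,\lambda_b)=\bigl(k_A+k_Bx_2(t^*)\bigr)\bigl(1-1/\psi(t^*)\bigr)$. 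Since $t^*\le n_A/n_B$ gives $m(t^*)\ge0$, hence $x_2(t^*)\le1$, we get $t^*\in[\sqrt{n_A/n_B},n_A/n_B]$, where $\psi>1$, so $\xi(\sqrt{n_A}/N,\lambda_b)>0$. At $\lambda_Z=\sqrt{n_B}/N$: if $t^*=0$ then $\xi=-\infty<0$; otherwise $f_2(t^*,\cdot)=0$ gives $e^{-(k_A+k_B)m(t^*)}=x_2(t^*)/[(\sqrt r-1)(k_A+k_Bx_2(t^*))+1]^{1/k_A}<x_2(t^*)$, while the algebraic identity $k_A+k_Bx_2=x_2S$ turns the target inequality $k_Bx_2(t^*)\le\tfrac1{\sqrt{n_B}}\sqrt{(\tfrac{k_Bn_A}{t^{*2}}+k_An_B)(k_A+k_B)}-k_A$ into exactly $\psi(t^*)^{2}\le n_A/n_B$, which is the maximality of $\psi$; chaining these two facts yields $\xi(\sqrt{n_B}/N,\lambda_b)<0$. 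Continuity of $t^*(\cdot,\lambda_b)$ (immediate from the sliding‑root structure) and of $\xi$ (with the convention $\xi=-\infty$ on $\{t^*=0\}$, an open condition near $\sqrt{n_B}/N$) then produces the stated one‑sided $\eps$‑neighborhoods.

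\emph{Part (d) and the main obstacle.} For $\lambda_b=\infty$ one has $m\equiv0$, so~\eqref{def:xi} reduces to $\xi(\lambda_Z,\infty)=k_A+k_B-\tfrac1{N\lambda_Z}\sqrt{(\tfrac{k_Bn_A}{t^{*}(\lambda_Z)^{2}}+k_An_B)(k_A+k_B)}$, which is~\eqref{def:xi2}; both $1/(N\lambda_Z)$ and, via $t^*$ increasing from (b), the square‑root factor are decreasing in $\lambda_Z$, so $\xi(\cdot,\infty)$ is strictly increasing. The $\lambda_b=\infty$ endpoint signs come from the (c)‑computations with $m\equiv0$ (at $\sqrt{n_A}/N$: $x_2(t^*)=1$, so $t^*=\sqrt{n_A/n_B}$ and $\xi=(k_A+k_B)(1-\sqrt{n_B/n_A})>0$; at $\sqrt{n_B}/N$ the same reduction gives $\xi<0$), so the intermediate value theorem plus strict monotonicity yields a unique $\lambda^*\in(\sqrt{n_B}/N,\sqrt{n_A}/N)$ with $\xi(\lambda^*,\infty)=0$, and solving that equation for $t^*$ gives $t^*(\lambda^*)=\sqrt{k_Bn_A/\bigl((N\lambda^*)^{2}(k_A+k_B)-k_An_B\bigr)}$. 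The step I expect to be hardest is the $\lambda_Z=\sqrt{n_B}/N$ side of (c): there $t^*$ has no closed form, and the sign of $\xi$ only emerges after simultaneously invoking the transcendental root identity for $f_2$ and the identity $k_A+k_Bx_2=x_2S$ to reduce everything to $\psi(t^*)\le\sqrt{n_A/n_B}$; recognizing that $\psi(t)=t\,x_2(t)$ is unimodal with maximum exactly $\sqrt{n_A/n_B}$ (attained at $t=\sqrt{n_A/n_B}$) is the insight that makes both endpoint signs drop out at once.
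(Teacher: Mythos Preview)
Your proposal is correct and follows essentially the same route as the paper: the derivative checks in (a)--(b), the implicit/sliding-root argument for $t^*$ increasing, the endpoint sign computations in (c) via the factorization $\xi=(k_A+k_Bx_2(t^*))(1-1/(t^*x_2(t^*)))$ at $\lambda_Z=\sqrt{n_A}/N$ and the bound $e^{-(k_A+k_B)m(t^*)}<x_2(t^*)$ at $\lambda_Z=\sqrt{n_B}/N$, and the product-of-decreasing-factors argument for (d). The one pleasant difference is your unification of both endpoints through the single observation that $\psi(t)=t\,x_2(t)$ is unimodal with global maximum $\sqrt{n_A/n_B}$ at $t=\sqrt{n_A/n_B}$: the paper proves the bound $t\,x_2(t)\le\sqrt{n_A/n_B}$ only for the left endpoint and handles the right endpoint separately via an AM--GM inequality, whereas your unimodality of $\psi$ (together with $\psi(n_A/n_B)>1$) dispatches both signs from the same lemma---a modest but genuine streamlining.
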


\begin{proof}[\bf Proof of Lemma~\ref{lem:supp}]
\noindent(a) Consider the first-order derivative of $g_1(x)$, we have
\[
g_1'(x) = \frac{k_A\left(\frac{\sqrt{n_B}}{N\lambda_Z}-1\right)}{\left(\frac{\sqrt{n_B}}{N\lambda_Z}-1\right)\left(k_Ax+k_B\right)+1} - \frac{k_B}{x} < \frac{1}{x} - \frac{k_B}{x}\leq 0 
\]
provided that $x>0$ and $N\lambda_Z<\sqrt{n_B}$, and similarly $g_2'(x)<0$ holds for $x>0$ if $N\lambda_Z < \sqrt{n_A}.$
\\
\\
\noindent (b) 
We first note that $f_2(t,\lambda_Z,\lambda_b)  = g_2(x_2(t)) - (k_A+k_B)k_Am(t).$
Since $x_2(t)$ in~\eqref{def:x2} is strictly decreasing w.r.t. $t$ in its domain and so is $g_2$, as shown in Lemma~\ref{lem:supp}(a), we conclude $g_2(x_2(t))$ is strictly increasing. 
For $m(t)$, we notice that
\begin{equation}\label{eq:mderi}
\begin{aligned}
m'(t) &= -\frac{\lambda_Z}{\lambda_b}\frac{n_B(k_Bn_A+n_Bk_At^2)(k_A+k_B)+k_An_Bt(n_A-n_Bt)(k_A+k_B)}{\left[(k_Bn_A+n_Bk_At^2)(k_A+k_B) \right]^{\frac{3}{2}}}\\
&=-\frac{\lambda_Z}{\lambda_b}n_Bn_A(k_A+k_B) \frac{k_B+k_At}{\left[(k_Bn_A+n_Bk_At^2)(k_A+k_B) \right]^{\frac{3}{2}}} < 0
\end{aligned}
\end{equation}
This implies $m$ is decreasing in $t$, and then $f_2(t)$ is increasing in $t$. This proves the first half of the argument.

Now we investigate the monotonicity of $t^*$ w.r.t. $\lambda_Z.$ The idea is to apply implicit differentiation on $f_2$ at $t^*(\lambda_Z,\lambda_b)$. We have 
\begin{equation}\label{eq:impdiff}
\frac{\pa f_2}{\pa t}\cdot \frac{\diff t^*(\lambda_Z,\lambda_b)}{\diff \lambda_Z} + \frac{\pa f_2}{\pa \lambda_Z} = 0 \Longrightarrow \frac{\diff t^*(\lambda_Z,\lambda_b)}{\diff \lambda_Z} = -\frac{\pa f_2/\pa \lambda_Z}{\pa f_2 /\pa t}
\end{equation}
where we already know $\partial f_2/\partial t > 0$ from (b). Therefore it suffices to check the monotonicity of $f_2$ in $\lambda_Z$ at $t^*(\lambda_Z,\lambda_b)$. 
Notice for any fixed $t < n_A/n_B~ (m(t)>0)$ and $\lambda_b>0$, we have that $f_2(t,\lambda_Z,\lambda_b)$ is decreasing in $\lambda_Z$. Finally, we claim $t^*(\lambda_Z,\lambda_b) < n_A/n_B$, so at $t^*(\lambda_Z,\lambda_b)$, $\partial f_2/\partial \lambda_Z <0$. Therefore by~\eqref{eq:impdiff}, we have $\partial t^*(\lambda_Z,\lambda_b) / \partial \lambda_Z > 0$.

Now we are left with proving the claim $t^*(\lambda_Z,\lambda_b) < n_A/n_B$, which follows from the following simple argument. By plugging $t = n_A/n_B$ into $f_2$, it holds that
\[
m(n_A/n_B) = 0,~~~x_2(n_A/n_B) < x_2(\sqrt{n_A/n_B}) = 1
\]
where $n_A/n_B>1$ and $x_2$ is decreasing. Therefore, by the fact $g_2$ is an increasing function of $x$, we have
$$
f_2\left(\frac{n_A}{n_B}\right) = g_2\left(x_2\left(\frac{n_A}{n_B}\right)\right) > g_2\left(x_2\left(\sqrt{\frac{n_A}{n_B}}\right)\right) = g_2(1) >0,
$$
and by the monotonicity of $f_2$, $t^*(\lambda_Z,\lambda_b)$ must be smaller than $n_A/n_B.$ 
\\
\\
\noindent(c) We evaluate the value of $\xi$ at $\lambda_Z = \sqrt{n_A}/N$ and $\lambda_Z = \sqrt{n_B}/N$ separately.
\\
\\
\textbf{Right endpoint: at $\lambda_Z = \sqrt{n_A}/N$}. Note that
\begin{align*}
f_2(\sqrt{n_A/n_B}, \sqrt{n_A}/N,\lambda_b) & = -(k_A+k_B)k_Am(\sqrt{n_A/n_B}, \sqrt{n_A}/N,\lambda_b) < 0, \\
f_2(n_A/n_B, \sqrt{n_A}/N,\lambda_b) & > 0,
\end{align*}
where $x_2(\sqrt{n_A/n_B}) = 1$. This means a root exists within the interval $(\sqrt{n_A/n_B},n_A/n_B)$ for $f_2$ with $\lambda_Z = \sqrt{n_A}/N$. Denote $t^* = t^*(\lambda_Z,\lambda_b)$, $x_2^* = x_2(t^*(\lambda_Z,\lambda_b))$, and $\xi(\lambda_Z) = \xi(\lambda_Z,\lambda_b)$.
By definition, the root $t^*$ satisfies
\begin{align}\label{eq:x2eq}
& x_2(t^*) = e^{-(k_A+k_B)m(t^*)} = \frac{k_A}{\sqrt{(k_B+n_B k_At^{*2} /n_A)(k_A+k_B)}-k_B} \nonumber \\
& \Longleftrightarrow t^*\sqrt{\left(\frac{k_B}{t^{*2}}+\frac{n_B  k_A}{n_A}\right)(k_A+k_B)} = k_A e^{(k_A+k_B)m(t^*)} + k_B = \frac{k_A}{x_2^*} + k_B. 
\end{align}
Then applying~\eqref{eq:x2eq} gives
\begin{align*}
\xi\left(\frac{\sqrt{n_A}}{N}\right) & = k_B e^{-(k_A+k_B)m(t^*)}  - \left( \sqrt{\left(\frac{k_B}{t^{*2}}+\frac{k_An_B}{n_A}\right)(k_A+k_B)} - k_A\right) \\
& =  k_A+k_B e^{-(k_A+k_B)m(t^*)} - \frac{1}{t^*} (k_A e^{(k_A+k_B)m(t^*)} + k_B) \\
& = \left( k_A+k_B e^{-(k_A+k_B)m(t^*)} \right) \left( 1 - \frac{e^{(k_A+k_B)m(t^*)}}{t^*}\right).
\end{align*}
Note that $1<\sqrt{n_A/n_B} < t^*< n_A/n_B$:
\begin{align*}
e^{(k_A+k_B)m(t^*)} & = \frac{\sqrt{(k_B+n_B t^{*2} k_A/n_A)(k_A+k_B)}-k_B}{k_A}  \leq \frac{1}{2} \left(  \frac{n_B}{n_A} t^{*2}  + 1  \right) <\frac{1}{2}(t^*+1) < t^*
\end{align*}
where for the first inequality, we use $\sqrt{xy} \leq (x+y)/2$ for $x,y>0$ and for next two, we use $t^*<n_A/n_B$ and $t^*>1$ respectively. Therefore, $\xi\left(\sqrt{n_A}/N\right) > 0$ holds.
\\
\\
\textbf{Left endpoint: at $\lambda_Z = \sqrt{n_B}/N$}. There are two possible cases. If $f_2(0,\sqrt{n_B}/N,\lambda_b)\geq 0$, then $t^* = 0$ and $\xi(\sqrt{n_B}/N) = -\infty$ holds automatically. Otherwise, $t^* > 0$ and it satisfies
\begin{align*}
& 0<\frac{1}{k_A}\log{\left[\left(\sqrt{\frac{n_A}{n_B}}-1\right)\left(k_A+k_Bx_2^* \right)+1\right]} = \log{x_2^*} + (k_A+k_B)m(t^*) \Longrightarrow e^{-(k_A+k_B)m(t^*)} < x_2^*.
\end{align*}
Then it holds that
\begin{align*}
& \xi\left(\frac{\sqrt{n_B}}{N}\right)  = k_B e^{-(k_A+k_B)m(t^*)} - \left( \sqrt{ \frac{n_A}{n_B}} \sqrt{\left(\frac{k_B}{t^{*2}}+\frac{k_An_B}{n_A}\right)(k_A+k_B)} - k_A\right) \\
& < k_A +  k_Bx_2^* -  \sqrt{ \frac{n_A}{n_B}}\cdot \frac{1}{t^*} \left( \frac{k_A}{x_2^*} +k_B\right) \\
& \leq (k_A +  k_B  x_2^*) \left(1 - \sqrt{ \frac{n_A}{n_B}}\cdot \frac{1}{t^*x_2^*}\right) \leq 0
\end{align*}
where the second line follows from~\eqref{eq:x2eq}, and the last line uses
\[
tx_2(t) = \frac{k_A t}{\sqrt{(k_B+k_An_B t^2/n_A)(k_A+k_B)}-k_B} \leq \frac{k_At}{k_A\sqrt{n_B/n_A}t} \leq \sqrt{\frac{n_A}{n_B}},~~\forall t>0,
\]
since 
\[
\sqrt{\left(k_B+\frac{k_An_Bt^2}{n_A} \right)(k_A+k_B)}-k_B \geq k_B \left(\sqrt{\left(1+\frac{k_An_Bt^2}{k_Bn_A}\right)\left(1+\frac{k_A}{k_B}\right)}-1\right) \geq k_A \sqrt{\frac{n_B}{n_A}}t
\]
where the last inequality follows from $(1+x)(1+y) \geq (1+\sqrt{xy})^2$ for $x,y \geq 0$.
\\
\\
\noindent(d) For any $\lambda_Z \in [\sqrt{n_B}/N,\sqrt{n_A}/N]$ and $t\in [0,n_A/n_B]$, we know that $m(t,\lambda_Z,\lambda_b)$ is uniformly bounded. As $\lambda_b\rightarrow\infty$, we have
\[
\lim_{\lambda_b\rightarrow\infty} \sup_{0\leq t\leq n_A/n_B, \sqrt{n_B}\leq N\lambda_Z\leq \sqrt{n_A}} |m(t,\lambda_Z,\lambda_b)| = 0.
\]
As a result, as $\lambda_b\rightarrow\infty$, 
\[
\lim_{\lambda_b\rightarrow\infty} \xi(\lambda_Z,\lambda_b) = (k_A+k_B) - \frac{1}{N\lambda_Z}\sqrt{\left(\frac{k_Bn_A}{t^*(\lambda_Z)^2}+k_An_B\right)(k_A+k_B)}.
\]
Note that $t^*(\lambda_Z)$ is increasing in $\lambda_Z$ (proven in (b)), and so is $\xi(\lambda_Z,\infty)$. From (c), we know that there exists a $\lambda^*$ satisfying $\sqrt{n_B}\leq N\lambda^*\leq \sqrt{n_A}$ such that for any $\lambda_Z< \lambda^*$, $\xi(\lambda_Z,\infty) < 0$ holds, and for any $\lambda_Z>\lambda^*$, $\xi(\lambda_Z,\infty) > 0$ holds. This $\lambda^*$ is the zero to $\xi(\lambda_Z,\infty)$, i.e.,
\[
t^*(\lambda^*) = \sqrt{ \frac{k_Bn_A}{(N\lambda_Z)^2(k_A+k_B) - k_An_B}}.
\]
\end{proof}

\subsection{Proof of Proposition~\ref{prop:bias}}
The idea is straightforward, according to our parametrization of $\bar{\BZ}$ and $\bb$ in~\eqref{eq:barZb}, we have $\bone^{\top}(\bar{\BZ},\bb)=0$, which is necessary for being the global minimizer by Lemma~\ref{lem:opt}. 
By strong convexity on the set $\{(\bar{\BZ},\bb)|\bone^{\top}~(\bar{\BZ}+\bb\bone_N^{\top})=0 \}$ given by Lemma~\ref{lem:varphi}, it suffices to verify the first order optimality equation has a solution in each regime of $\lambda_Z$ which is automatically unique and global by strong convexity. 
Therefore the key ideas of the proof are similar for all four cases. We first propose a candidate solution for $\bar{\BZ}$ and certificate $\bar{\BR}$ such that they are admissible according to Corollary~\ref{Cor:Op}, i.e.,
\begin{equation}\label{eq:canstur}
  \bar{\BR}\BD\bar{\BZ}^{\top} = 0,~~ \bar{\BZ}^{\top} \bar{\BR} = 0, ~~\|\bar{\BR}\BD^{1/2} \| \leq 1.
\end{equation} 
Then we derive the corresponding optimality equation system and prove the existence of a solution. In the following proof, we will let $\lambda_{b} > 0$ be any positive number and the bias vector $\bb$ is in the form of~\eqref{eq:barZb}.
\begin{proof}[\bf Proof of Proposition~\ref{prop:bias}(a)]We consider $N\lambda_Z < \sqrt{n_B}$.\\
\textbf{Solution structure: }In this case, we propose the structure:
\begin{align*}
  \bar{\BZ} = \mathcal{B}(a,b,c,d):
      & = \begin{bmatrix}
     a(k_A\I_{k_A} -\BJ_{k_A\times k_A}) + k_Bc \I_{k_A}  & -b\BJ_{k_A\times k_B} \\
     -c\BJ_{k_B\times k_A} & d(k_B\BI_{k_B}-\BJ_{k_B \times k_B}) + k_Ab\BI_{k_B}
     \end{bmatrix}, ~~~\bar{\BR} = 0.
\end{align*}\\
Notice conditions~\eqref{eq:canstur} are trivially satisfied as $\bar{\BR}$ being zero.\\
\\
\textbf{Optimality system:} Note that $\I_K -\bar{\BP}$ and $\left[(\bar{\BZ}\BD\bar{\BZ}^{\top})^{\dagger}\right]^{\frac{1}{2}}\bar{\BZ} $ are in the block structure with ${\cal B}(a_P,b_P,c_P,d_P)$ in~\eqref{eq:ap} and ${\cal B}(a_Z,b_Z,c_Z,d_Z)$ in~\eqref{eq:az}. 
By comparing the coefficients, we need to ensure
\begin{equation*}
\begin{aligned}
k_Aa_P+k_Bc_P &= N\lambda_Z(k_Aa_Z+k_Bc_Z),~~k_Ab_P+k_Bd_P = N\lambda_Z(k_Ab_Z+k_Bd_Z),\\
~~b_P &=N\lambda_Z b_Z > 0,~~c_P = N\lambda_Z c_Z > 0.
\end{aligned}
\end{equation*}
Then the equations above along with the second equation in~\eqref{cond:1stred}, $N^{-1}(\BI_K-\bar{\BP})\bn = \lambda_b\bb$, yield to the following system,
\begin{equation}\label{EQ:OP:Original}
\begin{aligned}
& \frac{k_Ae^{-a+mk_B}+k_Be^{-c-mk_A}}{k_Be^{-c-mk_A}+e^{-a+mk_B}(k_A-1 + e^{k_Aa +k_Bc} )} = \frac{N\lambda_Z}{\sqrt{n_A}}, \\
& \frac{ k_Ae^{-b+mk_B} + k_B e^{-d-mk_A}}{k_Ae^{-b+mk_B} + e^{-d-mk_A}(k_B-1 +e^{k_Ab+k_Bd})} = \frac{N\lambda_Z}{\sqrt{n_B}}, \\
& \frac{e^{-c-mk_A}}{k_Be^{-c-mk_A}+e^{-a+mk_B}(k_A-1 + e^{k_Aa +k_Bc} )} = \frac{N\lambda_Z c}{\sqrt{( k_An_Bb^2+k_Bn_Ac^2) (k_A+k_B)}}, \\
& \frac{e^{-b+mk_B}}{k_Ae^{-b+mk_B} + e^{-d-mk_A}(k_B-1 +e^{k_Ab+k_Bd})} =   \frac{N\lambda_Z b}{\sqrt{( k_An_Bb^2+k_Bn_Ac^2) (k_A+k_B)}}, \\
& m = \frac{\lambda_Z}{\lambda_b}\frac{n_Ac-n_Bb}{\sqrt{(k_An_Bb^2+k_Bn_Ac^2) (k_A+k_B)}}.
\end{aligned}
\end{equation}
Our goal now is to show that there exists a solution to~\eqref{EQ:OP:Original} with $b > 0$ and $c>0.$
Let $t = b/c$, and then combining the four equations in~\eqref{EQ:OP:Original} gives
\begin{equation}\label{EQ:Reduction:OPW}
\begin{aligned}
\text{eq.1 in}~\eqref{EQ:OP:Original}~~~ &c = \frac{g_2(e^{a-c-m(k_A+k_B)})}{k_A+k_B} - mk_A, \\
\text{eq.2 in}~\eqref{EQ:OP:Original}~~~ &b = \frac{g_1(e^{d-b+m(k_A+k_B)})}{k_A+k_B} + mk_B, \\
\text{eq.1 and 3 in}~\eqref{EQ:OP:Original}~~~ & x_2(t):=e^{a-c-m(k_A+k_B)} = \frac{k_A}{\sqrt{(k_B+k_An_B t^2/n_A)(k_A+k_B)}-k_B}, \\
\text{eq.2 and 4 in}~\eqref{EQ:OP:Original}~~~& x_1(t):=e^{d-b+m(k_A+k_B)} = \frac{k_B}{\sqrt{(k_A+n_A k_B t^{-2} /n_B)(k_A+k_B)}-k_A}, \\
\text{eq.5 in}\eqref{EQ:OP:Original}~~~&m = \frac{\lambda_Z}{\lambda_b}\frac{n_A-n_Bt}{\sqrt{(k_Bn_A +k_An_Bt^2) (k_A+k_B)}} 
\end{aligned}
\end{equation}
where $g_1$ and $g_2$ are defined in~\eqref{def:g12}.
Here 
\begin{equation}\label{def:x12case1}
x_1(t) := e^{d-b+m(k_A+k_B)},~~~x_2(t) := e^{a-c-m(k_A+k_B)}
\end{equation}
respectively, and $x_1(t)$ and $x_2(t)$ are monotonic increasing and decreasing function of $t$ on $(t>0)$ respectively in~\eqref{EQ:Reduction:OPW}. Using the fact $(1+x)(1+y) \geq (1+\sqrt{xy})^2$ for any $x\geq 0$ and $y\geq 0$ leads to
\begin{equation}\label{def:x12case1est}
x_1(t) \leq \frac{k_B}{k_A}\sqrt{\frac{n_B}{n_A}}t,~~x_2(t)\leq \frac{k_A}{k_B}\sqrt{\frac{n_A}{n_B}}t^{-1}.
\end{equation}

Finally, we simplify the equations~\eqref{EQ:Reduction:OPW} into 
\begin{equation}
\begin{aligned}
b &= \frac{g_1(x_1(t))}{k_A+k_B} + mk_B, ~~c = \frac{g_2(x_2(t))}{k_A+k_B} - mk_A.
\end{aligned}
\end{equation}
Now, the goal is to show the existence of solution set $(a,b,c,d,m)$ for system~\eqref{EQ:Reduction:OPW}.\\
\\
\textbf{Existence of a solution to~\eqref{EQ:Reduction:OPW}:} Observe that the existence of $(a,d,m)$ fully relies on the existence of $(b,c)$ through eq.3-5 in~\eqref{EQ:Reduction:OPW}. Notice, the RHS of both eq.$1$ and $2$ only depends on $t:=b/c$, so the existence of $(b,c)$ is equivalent to whether the following single-variable equation has a solution for $t>0$:
\begin{equation}\label{EQ:t}
t:=\frac{b}{c} = \frac{g_1(x_1(t))+(k_A+k_B)k_Bm}{g_2(x_2(t))-(k_A+k_B)k_Am}=:\frac{f_1(t)}{f_2(t)}
\end{equation}
It remains to show~\eqref{EQ:t} has a solution. The argument will rely on the monotonicity of $f_1$ and $f_2$, and Lemma~\ref{lem:func}.

Note that $g_1,g_2$ and $m$ are decreasing in $t$, as shown in Lemma~\ref{lem:supp}.
Therefore, $f_1(t)$ and $f_2(t)$ are monotonically decreasing and increasing respectively on $(0,\infty)$. 
Note that
\[
\underset{t \to 0^+}{\lim} m(t) = \frac{\lambda_Z}{\lambda_b}\sqrt{\frac{n_A}{k_B(k_A+k_B)}},\quad \underset{t \to +\infty}{\lim} m(t) = -\frac{\lambda_Z}{\lambda_b}\sqrt{\frac{n_B}{k_A(k_A+k_B)}}.
\]
and
\[
\lim_{t\rightarrow 0^+}f_1(t) = \infty,~~~\lim_{t\rightarrow\infty} f_2(t) = \infty.
\]

Lemma~\ref{lem:func} indicates that it remains to see if $\{t: f_1(t)> 0,f_2(t)>0\}$ is empty. 
Due to the monotonicity of $f_1$ and $f_2$, it suffices to show that it is impossible to have $f_1(t) < 0$ and $f_2(t) < 0$ for some $t$. We will prove this is impossible by contradiction. Now, we assume $k_1f_1(t) + k_B f_2(t) \leq 0$ holds.
Note that
\begin{align*}
& k_1f_1(t) + k_B f_2(t) = k_1 g_1(x_1(t)) + k_2g_2(x_2(t)) \leq 0  \\
& \qquad \Longleftrightarrow  k_A \log\left[\left(\frac{\sqrt{n_B}}{N\lambda_Z}-1\right) \left(k_Ax_1(t)+k_B \right) +1\right] \\
& \qquad \qquad + k_B\log\left[\left(\frac{\sqrt{n_A}}{N\lambda_Z}-1\right) \left(k_B x_2(t) + k_A\right)+1 \right] \leq k_Ak_B \log x_1(t)x_2(t)
\end{align*}
where $x_1(t)x_2(t) \leq 1$ follow from~\eqref{def:x12case1est}.
This leads to a contradiction, as the left-hand side is strictly positive. Therefore, $k_1f_1(t) + k_B f_2(t) > 0$ holds which implies that $\{ t:f_1(t) > 0,f_2(t)>0 \}$ is nonempty. Using Lemma~\ref{lem:func} finishes the proof, as it implies a root exists for~\eqref{EQ:t}.
\end{proof}

\begin{proof}[\bf Proof of Proposition~\ref{prop:bias}(b)]
We consider $\sqrt{n_B} < N\lambda_Z < \sqrt{n_A}$ and $\lambda_Z$ satisfies~\eqref{cond:caseb}. 
\textbf{Solution structure: }In this case, we propose the structure
\begin{equation}\label{EQ:struc2}
  \bar{ \BZ} 
   = \begin{bmatrix}
   a(k_A\I_{k_A} -\BJ_{k_A\times k_A}) + k_Bc \I_{k_A}  & -b\BJ_{k_A\times k_B} \\
   -c\BJ_{k_B\times k_A} &-d \BJ_{k_B}
   \end{bmatrix},   ~~~\bar{\BR} = \begin{bmatrix}
     0 & 0 \\
    0 & \alpha(\BI_{k_B}-\BJ_{k_B}/k_B)
   \end{bmatrix}
 \end{equation}
In other words, $\bar{\BZ} = {\cal B}(a,b,c,d)$ with $k_Ab+k_Bd=0$ and $\bar{\BR}$ is ${\cal B}(0,0,0,\alpha/k_B)$. We have $\bar{\BR}\BD\bar{\BZ}^{\top} = \bar{\BZ}^{\top}\bar{\BR}=0$ and $\|\bar{\BR}\BD^{1/2}\|\leq 1$ holds as long as $\alpha \sqrt{n_B} \leq 1$.\\
\\
\textbf{Optimality system: }Based on~\eqref{EQ:struc2}, $[(\bar{\BZ}\BD\bar{\BZ})^{\dagger}]^{1/2}\bar{\BZ}$ in~\eqref{eq:az} reduces to ${\cal B}(a_Z,b_Z,c_Z,d_Z)$ satisfying
\begin{equation}\label{eq:az1}
\begin{aligned}
& k_Aa_Z + k_Bc_Z = n^{-1/2}_A\sign(k_Aa + k_Bc), ~~~k_Ab_Z + k_Bd_Z = 0, \\
& b_Z = \frac{b\sign(k_An_Bb^2+k_Bn_Ac^2)}{\sqrt{( k_An_Bb^2+k_Bn_Ac^2) (k_A+k_B)}}, ~~~c_Z = \frac{c\sign(k_An_Bb^2+k_Bn_Ac^2)}{\sqrt{( k_An_Bb^2+k_Bn_Ac^2) (k_A+k_B)}}, \\
\end{aligned}
\end{equation}
and $\I_K - \bar{\BP}$ in~\eqref{eq:ap} becomes ${\cal B}(a_P,b_P,c_P,d_P)$ with
\begin{equation}\label{eq:ap1}
\begin{aligned}
a_P & = \frac{e^{-a+mk_B}}{k_Be^{-c-mk_A}+e^{-a+mk_B}(k_A-1 + e^{k_Aa +k_Bc} )}, \\
b_P & = \frac{e^{-b+mk_B}}{k_Ae^{-b+mk_B} + k_Be^{-d-mk_A}}, \\
c_P & = \frac{e^{-c-mk_A} }{k_Be^{-c-mk_A}+e^{-a+mk_B}(k_A-1 + e^{k_Aa +k_Bc} )}, \\
d_P & = \frac{e^{-d-mk_A}}{k_Ae^{-b+mk_B} + k_Be^{-d-mk_A}}.
\end{aligned}
\end{equation}

Using the optimality condition $\I_K - \bar{\BP} = N\lambda_Z( (\bar{\BZ}\BD\bar{\BZ}^{\top})^{\dagger/2}\bar{\BZ} + \bar{\BR})$ leads to a normal equation similar to \eqref{EQ:OP:Original}:
\begin{equation*}
  \begin{aligned}
  k_Aa_P+k_Bc_P &= N\lambda_Z(k_Aa_Z+k_Bc_Z), \\
  k_Ab_P+k_Bd_P & = 1 = N\lambda_Z(k_A b_Z + k_B d_Z + \alpha) = N\lambda_Z\alpha,\\
 b_P &=N\lambda_Z b_Z > 0, \\
 c_P &= N\lambda_Z c_Z > 0,
  \end{aligned}
\end{equation*}
where $(\bar{\BZ}\BD\bar{\BZ}^{\top})^{\dagger/2}\bar{\BZ} + \bar{\BR}$ is ${\cal B}(a_Z,b_Z,c_Z,d_Z+\alpha/k_B)$ and $k_A b_Z + k_B d_Z=0$.
By comparing the coefficients, we have
\begin{equation}\label{eq:optbf2}
\begin{aligned}
& \frac{k_Ae^{-a+mk_B}+k_Be^{-c-mk_A}}{k_Be^{-c-mk_A}+e^{mk_A-a}(k_A-1 + e^{k_Aa +k_Bc} )} = \frac{N\lambda_Z}{\sqrt{n_A}}, \\
& \frac{e^{-b+mk_B}}{k_Ae^{-b+mk_B} + k_Be^{-d-mk_A}} =   \frac{N\lambda_Z b}{\sqrt{( k_An_Bb^2+k_Bn_Ac^2) (k_A+k_B)}}, \\
& \frac{e^{-c-mk_A}}{k_Be^{-c-mk_A}+e^{-a+mk_B}(k_A-1 + e^{k_Aa +k_Bc} )} = \frac{N\lambda_Z c}{\sqrt{( k_An_Bb^2+k_Bn_Ac^2) (k_A+k_B)}}, \\
& k_Ab+k_Bd=0,~~1 = N\lambda_Z \alpha,~\\
& m = \frac{\lambda_Z}{\lambda_b}\frac{n_Ac-n_Bb}{\sqrt{( k_An_Bb^2+k_Bn_Ac^2) (k_A+k_B)}}.
\end{aligned}
\end{equation}
Our goal is to show that there exists a solution $(a,b,c,-k_Ab/k_B)$ and $\alpha$ such that the equations above have a unique solution. 
Here $\left\|\bar{\BR} \BD^{1/2}\right\| = \sqrt{n_B}\alpha = \sqrt{n_B}/N\lambda_Z \leq 1$ with $\alpha = 1/N\lambda_Z$ if $\lambda_Z \geq \sqrt{n_B}/N$, so the certificate $\bar{\BR}$ is feasible. By letting $t = b/c$, 
we just need to show the following simplified system derived from~\eqref{eq:optbf2} has a solution for $(a,b,c,d,m)$:
\begin{equation}\label{eq:bf2ops}
  \begin{aligned}
 \text{eq.1 in}~\eqref{eq:optbf2}~~ & c = \frac{g_2(e^{a-c-m(k_A+k_B)})}{k_A+k_B} - mk_A, \\
 \text{eq.4 in}~\eqref{eq:optbf2}~~   & b = \frac{-(d-b)k_B}{k_A+k_B}, \\
  \text{eq.1 and 3 in}~\eqref{eq:optbf2}~~  & x_2(t): = e^{a-c-m(k_A+k_B)} = \frac{k_A}{\sqrt{(k_B+k_An_B t^2/n_A)(k_A+k_B)}-k_B},\\
 \text{eq.2 in}~\eqref{eq:optbf2}~~  & x_1(t):=e^{d-b+m(k_A+k_B)} = \frac{k_B}{(N\lambda_Z)^{-1}\sqrt{( k_An_B+k_Bn_At^{-2})(k_A + k_B)} - k_A},\\
  \text{eq.5 in}~\eqref{eq:optbf2}~~  & m = \frac{\lambda_Z}{\lambda_b}\frac{n_A-n_Bt}{\sqrt{( k_Bn_A +k_An_B t^2) (k_A+k_B)}}
  \end{aligned}
\end{equation}
\\
\textbf{Existence of a solution to~\eqref{eq:bf2ops}: }Similarly, we consider the following single-variable nonlinear equation for $t$,
\begin{equation}
  \begin{aligned}\label{EQ:Retwb}
  t & = \frac{-k_B(d-b)}{g_2(x_2(t)) - k_A(k_A+k_B)m(t)} = : \frac{f_1(t)}{f_2(t)} = f(t) 
  \end{aligned}
\end{equation}
where $x_1(t)$ and $x_2(t)$ are defined in~\eqref{eq:bf2ops}, and $g_2$ is in~\eqref{def:g12}. Equivalently we can also write $f_1(t) = -k_B\left[\log{x_1(t)} - (k_A+k_B)m(t)\right]$.

The idea of the proof is similar to the case (a): by using the monotonicity of $f_1$ and $f_2$, and also Lemma~\ref{lem:func}.
We denote the domain of $f_1(t)$ as $\mathcal{D}$, i.e.,
\begin{align*}
\mathcal{D} & = \left\{t>0~|~(N\lambda_Z)^{-1}\sqrt{(k_Bn_At^{-2}+k_An_B)(k_A+k_B)}>k_A\right\} \\
& = \begin{cases}
t \in\RR, & N\lambda_Z  \leq \sqrt{n_B(k_A+k_B)/k_A}, \\
t < \sqrt{\frac{k_Bn_A(k_A+k_B)}{(N\lambda_Z)^2 k_A^2 - k_An_B(k_A+k_B)}}, & N\lambda_Z  > \sqrt{n_B(k_A+k_B)/k_A}, 
\end{cases}
\end{align*}
and that of $f_2(t)$ is $\RR.$ In their domains, $x_1(t)$ and $x_2(t)$ are increasing and decreasing respectively, which implies $f_1(t)$ and $f_2(t)$ are strictly decreasing and increasing in $t$ respectively. It is straightforward to verify:
\[
\lim_{t \to 0^+}f_1(t) = +\infty, ~~\lim_{t\to\infty}f_2(t) = +\infty,
\]
where $m(t)$ stays bounded for any $t$,  $\lim_{t\rightarrow 0^+}f_1(t) = \infty$ due to $\lim_{t\rightarrow 0^+}x_1(t) = 0^+$, and $\lim_{t\rightarrow\infty}f_2(t) = \infty$ due to $\lim_{t\rightarrow \infty}x_2(t) \to 0^+$ and $\lim_{x\rightarrow 0^+}g_2(x) = \infty$.
It suffices to show that $f_1(t)$ and $f_2(t)$ share a common positive part. The following argument divides into two subcases whether $f_2(t)$ has a root or not. 

Suppose $\eta(\lambda_Z)$ in~\eqref{def:eta} is positive ($f_2(t)$ has no root), then  $f_1$ and $f_2$ share a common positive part since as $t\rightarrow 0^+$, $f_1$ goes to $\infty$ and $f_2$ stays positive.
For $\lambda_Z$ with $\eta(\lambda_Z) < 0$, $f_2$ has a unique zero $t^*(\lambda_Z)$. 
To ensure $f_1$ and $f_2$ share a common positive part, it suffices to have
\[
f_1(t^*(\lambda_Z)) = -k_B\log x_1(t^*(\lambda_Z))+(k_A+k_B)k_Bm(t^*(\lambda_Z)) > 0,
\]
which is equivalent to
\begin{align*}
& x_1(t^*(\lambda_Z)) = \frac{k_B}{(N\lambda_Z)^{-1}\sqrt{(k_Bn_At^*(\lambda_Z)^{-2}+k_An_B)(k_A+k_B)}-k_A} < e^{(k_A+k_B)m(t^*(\lambda_Z))} \\
& \Longleftrightarrow k_B e^{-(k_A+k_B)m(t^*(\lambda_Z))} < \frac{1}{N\lambda_Z}\sqrt{\left(\frac{k_Bn_A}{t^*(\lambda_Z)^2}+k_An_B\right)(k_A+k_B)} - k_A.
\end{align*}
This finishes the proof, as~\eqref{cond:caseb} guarantees $f_1(t^*(\lambda_Z)) > 0$ based on the argument above.

Note that Lemma~\ref{lem:supp} implies that the inequality above holds for any $\lambda_Z = \sqrt{n_B}/N+\delta$ with $\delta<\eps$. As a result, for any $\lambda_Z$ close to $\sqrt{n_B}/N$, the case (b) in Proposition~\ref{prop:bias} holds.
\end{proof}

\begin{proof}[\bf Proof of Proposition~\ref{prop:bias}(c)] We consider $\sqrt{n_B} < N\lambda_Z < \sqrt{n_A}$ and $\lambda_Z$ satisfies~\eqref{cond:casec}.  
\textbf{Solution Structure:} In this case, we propose the structure:
\begin{equation}\label{eq:struc3}
  \begin{aligned}
 \bar{ \BZ} = \mathcal{B}(a,0,0,0)
   & = \begin{bmatrix}
  a(k_A\I_{k_A} -\BJ_{k_A\times k_A})   & 0 \\
  0& 0 \end{bmatrix}, \\   \bar{\BR} =\mathcal{B}(a_R,b_R,c_R,d_R) &= \begin{bmatrix}
    0 & 0 \\
    0 & \alpha(\BI_{k_B}-\BJ_{k_B}/k_B)
    \end{bmatrix}  + 
\frac{k_Ak_B\tau}{\sqrt{(k_Bn_A+k_An_Bt^2)(k_A+k_B)}} 
\bs\bs^{\top}
\begin{bmatrix}
\I_{k_A} & 0 \\
0 & t\I_{k_B}
\end{bmatrix} 
\end{aligned}
\end{equation}
where
\begin{equation}\label{eq:ar1}
\begin{aligned}
  & k_Aa_R + k_B c_R = 0,~~k_Ab_R + k_B d_R = \alpha, \\
  & b_R = \frac{\tau t}{\sqrt{(k_Bn_A+k_An_Bt^2)(k_A+k_B)}},~~c_R =  \frac{\tau}{\sqrt{(k_Bn_A+k_An_Bt^2)(k_A+k_B)}},
\end{aligned}
\end{equation}
which satisfies $\bar{\BR}\BD\bar{\BZ}^{\top} = \bar{\BZ}^{\top}\bar{\BR}=0$ and $\|\bar{\BR}\BD^{1/2}\|\leq 1$ holds as long as 
\[
\max\{\sqrt{n_B}|\alpha|, \tau\} \leq 1.
\]
\\
\textbf{Optimality system: }Based on structure~\eqref{eq:struc3}, we have~\eqref{eq:az} reduce to 
\begin{equation}\label{eq:az1}
\begin{aligned}
k_Aa_Z + k_Bc_Z & = n^{-1/2}_A\sign(a), ~~~k_Ab_Z + k_Bd_Z = 0, ~~b_Z =c_Z = 0, 
\end{aligned}
\end{equation}
and~\eqref{eq:ap} becomes
\begin{equation}\label{eq:ap1}
\begin{aligned}
& a_P = \frac{e^{-a+mk_B}}{k_Be^{-mk_A}+e^{-a+mk_B}(k_A-1 + e^{k_Aa } )},~~~b_P = \frac{e^{mk_B}}{k_Ae^{mk_B} + k_Be^{-mk_A}}, \\
& c_P = \frac{e^{-mk_A} }{k_Be^{-mk_A}+e^{-a+mk_B}(k_A-1 + e^{k_Aa } )},~~~d_P = \frac{e^{-mk_A}}{k_Ae^{mk_B} + k_Be^{-mk_A}}.
\end{aligned}
\end{equation}
Using the optimality condition $\I_K - \bar{\BP} = N\lambda_Z( [(\bar{\BZ}\BD\bar{\BZ}^{\top})^{\dagger}]^{1/2}\bar{\BZ} + \bar{\BR})$ leads to a normal equation similar to \eqref{EQ:OP:Original}:

\begin{equation*}
  \begin{aligned}
  k_Aa_P+k_Bc_P &= N\lambda_Z(k_Aa_Z+k_Bc_Z),~~k_Ab_P+k_Bd_P = 1 = N\lambda_Z\alpha,\\
 ~~b_P &=N\lambda_Z \left(b_Z + b_R\right)> 0,~~c_P = N\lambda_Z \left(c_Z + c_R \right)> 0.
  \end{aligned}
\end{equation*}
where $(a_Z,b_Z,c_Z,d_Z)$, $(a_R,b_R,c_R,d_R)$ and  $(a_P,b_P,c_P,d_P)$ satisfy~\eqref{eq:az1},~\eqref{eq:ar1} and~\eqref{eq:ap1} respectively.
By comparing the coefficients, we have
\begin{equation}\label{eq:caseopt_c}
\begin{aligned}
& \frac{k_Ae^{-a+mk_B}+k_Be^{-mk_A}}{k_Be^{-mk_A}+e^{-a+mk_B}(k_A-1 + e^{k_Aa } )} = \frac{N\lambda_Z}{\sqrt{n_A}}, \\
& \frac{e^{mk_B}}{k_Ae^{mk_B} + k_Be^{-mk_A}} =   \frac{N\lambda_Z\tau t}{{\sqrt{(k_Bn_A+k_An_Bt^2)(k_A+k_B)}}}, \\
& \frac{e^{-mk_A}}{k_Be^{-mk_A}+e^{-a+mk_B}(k_A-1 + e^{k_Aa } )} =  \frac{N\lambda_Z\tau}{{\sqrt{(k_Bn_A+k_An_Bt^2)(k_A+k_B)}}}, \\
& m = \frac{\lambda_Z\tau}{\lambda_b } \frac{n_A-n_Bt}{\sqrt{(k_Bn_A+k_An_Bt^2)(k_A+k_B)}},~~~\tau\leq 1,~~~N\lambda_Z \alpha = 1.
\end{aligned}
\end{equation}
Here $\sqrt{n_B}\alpha = \sqrt{n_B}/(N\lambda_Z)\leq 1$ with $\alpha = 1/N\lambda_Z$ if $\lambda_Z \geq (\sqrt{n_B}{N})$. Let $x(t,\tau) := e^{a-m(k_A+k_B)}$, and then  the optimality system~\eqref{eq:caseopt_c} becomes 
  \begin{equation}\label{eq:opsc3wbt}
    \begin{aligned}
 \text{eq.1 in}~\eqref{eq:caseopt_c} ~~ &  h(t,\lambda_Z,\lambda_b,\tau)  : = \log{\left[\left(\frac{\sqrt{n_A}}{N\lambda_Z}-1\right)\left(k_A+k_Bx(t,\tau)\right)+1 \right]} \\
    & \qquad - k_A\log{x(t,\tau)} - (k_A+k_B)k_Am(t,\lambda_Z,\lambda_b,\tau) = 0, \\
  \text{eq.1 and 3 in}~\eqref{eq:caseopt_c} ~~ &   x(t,\tau)  : = e^{a-(k_A+k_B)m} \\
  & \qquad =  \frac{k_A}{\tau^{-1}\sqrt{(k_B+t^2k_An_B/n_A)(k_A+k_B)}-k_B}, \\
   \text{eq.4 in}~\eqref{eq:caseopt_c} ~~ & m(t,\lambda_Z,\lambda_b,\tau)  := \frac{\lambda_Z\tau}{\lambda_b} \frac{n_A-n_Bt}{\sqrt{(k_Bn_A+k_An_Bt^2)(k_A+k_B)}},\\
  \text{eq.2 in}~\eqref{eq:caseopt_c}, ~\text{eq.2-3 in}~\eqref{eq:opsc3wbt}~~      & t  = \frac{\sqrt{n_A}(k_A/x(t,\tau)+k_B)}{N\lambda_Z(k_A+k_Be^{-(k_A+k_B)m(t,\lambda_Z,\tau)})}, \\
  \text{eq.4 in}~\eqref{eq:caseopt_c} ~~ & \tau \leq 1.
    \end{aligned}
  \end{equation}
In particular, if $\tau = 1$, $h(t,\lambda_Z,\lambda_b,1) = f_2(t,\lambda_Z,\lambda_b)$ holds where $f_2$ is given in~\eqref{def:f2proof}. When no confusion arises, we denote $m(t,\lambda_Z,\lambda_b,\tau)$ and $x(t,\tau)$ by $m(t)$ and $x(t)$ respectively. Now the goal is to prove the existence of solution set $(a,m,t,\tau)$ of the above system.\\
\\
\textbf{Existence of a solution to~\eqref{eq:opsc3wbt}:} The proof follows from two steps: (i) given $\lambda_Z$, $h(t,\lambda_Z,\tau)$ has a root $t(\lambda, \tau,\lambda_b)$ for any $\tau\in [\tau^*_{\lambda_Z,\lambda_b},1]$ where $\tau^*_{\lambda_Z,\lambda_b}$ is a number only depends on $\lambda_Z$ and $\lambda_b$ if 
\[
f_2(0,\lambda_Z,\lambda_b) = h(0,\lambda_Z,\lambda_b,1) < 0
\]
so that the first three equations in~\eqref{eq:opsc3wbt} satisfy;
(ii) we show that there exists a $\tau \leq 1$ such that the fourth equation in~\eqref{eq:opsc3wbt} also holds. The combination of steps (i) and (ii) is sufficient to prove the existence of the solution, we prove them respectively.\\
\\
\textbf{Proof for step (i):} For simplicity, we use $t(\lambda_Z,\tau)$ or $t$ to replace $t(\lambda_Z,\lambda_b,\tau).$ Observe that $x$ and $m$ are both determined by $t$ and $\tau$ according to eq $2$ and $3$ of~\eqref{eq:opsc3wbt}, so the key is to prove the existence of $t$ as the root of $h$.
Note that the domain of $h(t,\lambda_Z,\tau)$ is $\RR$ for any fixed $0<\tau\leq 1$ and $\sqrt{n_B}\leq N\lambda_Z\leq \sqrt{n_A}$, and $h$ is strictly increasing in $t$ with $\lim_{t\rightarrow\infty} h(t,\lambda_Z,\tau) = \infty$ and
\[
h(0,\lambda_Z,\lambda_{b},\tau) = \log{\left[\left(\frac{\sqrt{n_A}}{N\lambda_Z}-1\right)\left(k_A+k_Bx(0)\right)+1 \right]} - k_A\log{x(0)} - (k_A+k_B)k_Am(0)
\]
where
\[
m(0) =  \frac{\lambda_Z\tau}{\lambda_b}  \sqrt{ \frac{n_A}{k_B(k_A+k_B)} },~~~x(0) =  \frac{k_A}{\tau^{-1}\sqrt{k_B(k_A+k_B)}-k_B}.
\]
This implies $h(t,\lambda_Z,\lambda_{b},\tau)$ has a unique solution in $t$ for a given triple of $(\lambda_Z,\lambda_{b},\tau)$ if and only if $h(0,\lambda_Z,\lambda_{b},\tau) < 0$.

We can also see that $h(0,\lambda_Z,\lambda_{b},\tau)$ is decreasing in $\lambda_Z$ and $\tau$. Note that
\[
\lim_{\tau\rightarrow 0^+}h(0,\lambda_Z,\lambda_b,\tau) = +\infty.
\]
For $h(0,\lambda_Z,\lambda_b,1)< 0$, we define $\tau^*_{\lambda_Z,\lambda_b}\in [0,1]$ as
\[
h(0,\lambda_Z,\lambda_b,\tau^*_{\lambda_Z,\lambda_b}) = 0
\]
where $\tau^*_{\lambda_Z,\lambda_b}$ is the unique zero of $h(0,\lambda_Z,\lambda_b,\tau)$ in $\tau$ as $h(0,\lambda_Z,\lambda_b,\tau)$ is  continuous in $\tau$ for any $\lambda_Z.$

Now we define $t(\lambda_Z,\tau)$ be the zero to $h(t,\lambda_Z,\lambda_{b},\tau)$, i.e., 
\[
h(t(\lambda_Z,\tau),\lambda_Z,\lambda_{b},\tau) = 0,~~~\forall  \tau\in [\tau^*_{\lambda_Z,\lambda_b}, 1].
\]
In particular, $t(\lambda_Z,\tau^*_{\lambda_Z,\lambda_b}) = 0$ holds, so the existence of $x(t(\lambda_Z,\tau))$ and $m(t(\lambda_Z,\tau))$ follow.\\
\\
\textbf{Proof for step (ii): }It suffices to find a $\tau$ such that the fourth equation holds.
Let
\[
L(\lambda_Z,\tau) = t(\lambda_Z,\tau) -  \frac{\sqrt{n_A}(k_A/x(t(\lambda_Z,\tau),\tau)+k_B)}{N\lambda_Z(k_A+k_Be^{-(k_A+k_B)m(t(\lambda_Z,\tau),\lambda_Z,\tau)})}
\]
for any $\tau \in [\tau^*_{\lambda_Z,\lambda_b},1]$ and $\lambda_Z$ with $f_2(0,\lambda_Z) \leq 0.$

To show $L(\lambda_Z,\tau)$ has a zero in $\tau$, we check the value of $L(\lambda_Z,\tau)$ at $\tau = \tau^*_{\lambda_Z,\lambda_b}$ and $\tau=1$. At $\tau = \tau^*_{\lambda_Z,\lambda_b}$, it holds $t(\lambda_Z,\tau^*_{\lambda_Z,\lambda_b}) = 0$ and 
\[
L(\lambda_Z,\tau^*_{\lambda_Z,\lambda_b})  = -\frac{\sqrt{n_A}(k_A/x(0,\tau^*_{\lambda_Z,\lambda_b}),\tau)+k_B)}{N\lambda_Z(k_A+k_Be^{-(k_A+k_B)m(0,\lambda_Z,\tau^*_{\lambda_Z,\lambda_b})})} < 0.
\]
At $\tau = 1$, we have $t(\lambda_Z,1) = t^*(\lambda_Z)$, $x(t^*(\lambda_Z),1) = x_2(t^*(\lambda_Z))$, $m(t^*(\lambda_Z),\lambda_Z,1) = m(t^*(\lambda_Z))$, and \[
\frac{k_A}{x(t^*(\lambda_Z),1)} + k_B = \sqrt{(k_B + k_An_Bt^*(\lambda_Z)^2/n_A)(k_A+k_B)}
\]
follows from the second equation in~\eqref{eq:opsc3wbt}.
Therefore, 
\begin{align*}
L(\lambda_Z,1) & =t^*(\lambda_Z) -  \frac{\sqrt{n_A}(k_A/x(t^*(\lambda_Z),1)+k_B)}{N\lambda_Z(k_A+k_Be^{-(k_A+k_B)m(t^*(\lambda_Z))})} \\
& = t^*(\lambda_Z) -  \frac{\sqrt{(k_Bn_A + k_An_Bt^*(\lambda_Z)^2)(k_A+k_B)}}{N\lambda_Z(k_A+k_Be^{-(k_A+k_B)m(t^*(\lambda_Z))})} > 0
\end{align*}
which is guaranteed by the condition~\eqref{cond:caseb},
\[
k_A+k_Be^{-(k_A+k_B)m(t^*(\lambda_Z))} >\frac{1}{N\lambda_Z}  \sqrt{\left(\frac{k_Bn_A}{t^*(\lambda_Z)^2}+k_An_B\right)(k_A+k_B)}.
\]
By continuity of $L(\lambda_Z,\tau)$ in $\tau$, there exists a choice of $\tau$ such that the fourth equation holds. Therefore, the condition~\eqref{cond:caseb} guarantees a solution to system~\eqref{eq:opsc3wbt}.

Note that Lemma~\ref{lem:supp} implies that the inequality above holds for any $\lambda_Z = \sqrt{n_A}/N-\delta$ with $\delta<\eps$. As a result, for any $\lambda_Z$ close to $\sqrt{n_A}/N$, the case (c) in Proposition~\ref{prop:bias} holds.
\end{proof}

\begin{proof}[\bf Proof of Proposition~\ref{prop:bias}(d)] We consider $N\lambda_Z \geq \sqrt{n_A}$. \\
\textbf{Solution structure:} In this case, we propose the solution structure:
\begin{equation}\label{eq:struc4}
 \bar{ \BZ} = 0, ~~~   \bar{\BR} =\mathcal{B}(a_R,b_R,c_R,d_R)
\end{equation}
Aagin, we directly have  $\bar{\BR}\BD\bar{\BZ}^{\top} = \bar{\BZ}^{\top}\bar{\BR}=0$ and we are left with verifying $\|\bar{\BR}\BD^{1/2}\|\leq 1$ through optimality condition.\\
\\
\textbf{Optimality system:} The optimality condition is straightforward by~\eqref{cond:1stred} when $\BZ=0$. When $\BZ=0$, we have
\begin{equation}\label{eq:case4opor}
  \begin{aligned}
    & N^{-1}(\BI_K - \bar{\BP}) = \lambda_Z \bar{\BR}, \\
    & N^{-1}(\I_K-\bar{\BP})\bn = \lambda_b \bb, \\
    & \|\bar{\BR}\BD^{1/2}\|\leq 1.
  \end{aligned}
\end{equation}
For ease of notation, we denote $u=e^{-mk_A},v=e^{-mk_B}$ and $w=v/u$. The system~\eqref{eq:case4opor} above reduces to
\begin{equation}\label{eq:opsc4wbt}
  \begin{aligned}
  \text{eq.1 and 3 in}~\eqref{eq:case4opor} ~~ & \|(\BI_K-\bar{\BP}) \BD^{1/2}\| \leq N\lambda_Z, \\
  \text{eq.2}~~\eqref{eq:case4opor} ~~ &  \frac{n_A-n_Bw}{k_B+k_Aw} = N\lambda_b \frac{\log{w}}{k_A+k_B},\\
  & m = (k_A+k_B)^{-1} \log w.
  \end{aligned}
\end{equation}
Now we proceed to find a solution to system~\eqref{eq:opsc4wbt}.\\
\\
\textbf{Existence of a solution to~\eqref{eq:opsc4wbt}: }We compute the SVD of $(\BI_K-\bar{\BP}) \BD^{1/2}$ directly,

  \begin{equation}
    \begin{aligned}
      (\BI_K - \bar{\BP})\BD^{1/2} &= \begin{bmatrix}
      \sqrt{n_A}(\BI_{k_A}-\BJ_{k_A}/k_A) & 0 \\
      0 & \sqrt{n_B}(\BI_{k_B}-\BJ_{k_B}/k_B) \end{bmatrix} +\\
      & \frac{1}{k_Bu+k_Av} \begin{bmatrix} 
      \frac{k_B\sqrt{n_A} u}{k_A} \BJ_{k_A} & -\sqrt{n_B}v \BJ_{k_A \times k_B}\\
      -\sqrt{n_A}u \BJ_{k_B \times k_A} & \frac{k_A\sqrt{n_B}}{k_B}v \BJ_{k_B}\end{bmatrix}
    \end{aligned}
  \end{equation}
Hence we have,
\begin{equation}
  \begin{aligned}
    &(\BI_K -\bar{\BP})\BD(\BI_K - \bar{\BP})^{\top} = \begin{bmatrix}
    n_A(\BI_{k_A}-\BJ_{k_A}/k_A) & 0 \\
    0 & n_B(\BI_{k_B}-\BJ_{k_B}/k_B) \end{bmatrix} +\\
    & \qquad \frac{1}{(k_Bu+k_Av)^2} \begin{bmatrix} 
    (\frac{k_B^2n_A}{k_A}u^2+k_Bn_Bv^2)\BJ_{k_A} & -(k_Bn_Au^2 + k_An_Bv^2) \BJ_{k_A \times k_B}\\
    -(k_Bn_Au^2 + k_An_Bv^2) \BJ_{k_B \times k_A} & (\frac{k_A^2n_B}{k_B}v^2+k_An_Au^2) \BJ_{k_B}\end{bmatrix}
  \end{aligned}
\end{equation}
Therefore, the singular values of $(\BI_K - \bar{\BP})\BD^{1/2}$ are $\sqrt{n_A}$ with multiplicity $k_A-1$, $\sqrt{n_B}$
with multiplicity $k_B-1$, $\sqrt{(k_A+k_B)(k_Bn_Au^2+k_An_Bv^2)}/(k_Bu+k_Av)$ with multiplicity $1$, and  $0$ with multiplicity $1$. It suffices to show the 
maximum singular value is given by $\sqrt{n_A}$. Hence eq.1 in~\eqref{eq:opsc4wbt} is satisfied when $\lambda_Z \geq \sqrt{n_A}/N$. We only need to look into the squared singular value
\[
\sigma(w): = \frac{(k_A+k_B)(k_Bn_Au^2+k_An_Bv^2)}{(k_Bu+k_Av)^2} = \frac{(k_A+k_B)(k_Bn_A+k_An_Bw^2)}{(k_B+k_Aw)^2}.
\]
The objective is then to show the existence of $w$ as the solution of eq.2 in~\eqref{eq:opsc4wbt} and $\sigma(w) \leq n_A$ satisfies for that $w$. The idea is to constrain the range of $w$ and bound it by the monotonicity of $\sigma(w)$.

We claim $w\in[1,n_A/n_B]$ by checking two ends of eq.2 in~\eqref{eq:opsc4wbt}. On the one hand, the LHS is greater than $0$ iff $w<n_A/n_B$ and the RHS is greater than $0$ iff $w>1$. On the other hand, on $w \in [1,n_A/n_B]$, the LHS (RHS) strictly decreases (increases) in $w$ and at $w=1\text{ }(n_A/n_B)$, we have LHS$>$RHS (LHS$<$RHS). These allow us to conclude that there exists a unique $w\in \left[1, n_A/n_B \right]$ that satisfies eq.2 of~\eqref{eq:opsc4wbt}.

Now we check the monotonicity of $\sigma(w)$ by computing its derivative:
\begin{align*}
\sigma'(w) & = 2(k_A+k_B)\frac{k_An_Bw(k_B+k_Aw) - k_A (k_Bn_A + k_An_Bw^2)}{ (k_B+k_Aw)^3 }\\
& = 2k_Ak_B(k_A+k_B) \frac{n_Bw-n_A}{(k_B+k_Aw)^3} \leq 0, \text{ for } w\in \left[1,\frac{n_A}{n_B} \right].
\end{align*}
Therefore when $w=1$, we obtain an upper bound of this singular value,
\[
\sigma(1) = \frac{k_Bn_A+k_An_B}{k_A+k_B} \leq n_A,
\]
which implies the largest singular value of $(\BI_K - \bar{\BP})\BD^{1/2}$ is no larger than $\sqrt{n_A}$. This verifies eq.1 of~\eqref{eq:opsc4wbt}.
\end{proof}

\begin{proof}[\bf Proof of Proposition~\ref{prop:bias}(e)] 
If $\lambda_b = \infty$, then~\eqref{cond:caseb} and~\eqref{cond:casec} are equivalent to $\xi(\lambda_Z,\infty) < 0$ and $\xi(\lambda_Z,\infty) > 0$ in~\eqref{def:xi2}. Lemma~\ref{lem:supp}(d) implies that $\xi(\lambda_Z,\infty)$ is increasing in $\lambda_Z$ with $\xi(\sqrt{n_B}/N,\infty) < 0$ and $\xi(\sqrt{n_A}/N,\infty) > 0$. Therefore, there exists a $\lambda^*$ which is the root to $\xi(\lambda_Z,\infty)$, such that~\eqref{cond:caseb} and~\eqref{cond:casec} are equivalent to $\lambda_Z<\lambda^*$ and $\lambda_Z>\lambda^*$ respectively.
\end{proof}

\subsection{Limiting case: Proof of Corollary~\ref{corollary:mc} and Theorem~\ref{thm:limit}}
In this section, we give some asymptotic characterization of $\bar{\BZ}$, when either $n_A$ or $n_B$, or both go to infinity. 

\begin{proof}[\bf Proof of Corollary~\ref{corollary:mc}]
From Proposition~\ref{prop:bias}, the minority collapse occurs when $N\lambda_Z \geq \sqrt{n_B}$. Plugging in $N=k_An_A+k_Bn_B = n_B(k_Ar +k_B)$ yields 
\[
n_B(k_Ar +k_B)\lambda_Z\geq \sqrt{n_B} \Longleftrightarrow r \geq \frac{1}{k_A}\left( \frac{1}{\sqrt{n_B}\lambda_Z} -k_B\right).
\]
\end{proof}

  \begin{proof}[\bf Proof of Theorem~\ref{thm:limit}]
Under the assumption $N\lambda_Z < \sqrt{n_B}$, the mean feature matrix $\bar{\BZ}$ falls in the case (a) of Theorem~\ref{thm:biasf}. 
The key is to show that the unique solution $t^*_N$ of~\eqref{EQ:t} (i.e., the reduced optimality condition for case (a)) converges to $1$ at the rate of $1/\log N$ as $N \to \infty$.  
  For simplicity, we define
  \[
  f_N(t) = \frac{g_1(x_1(t))+(k_A+k_B)k_Bm(t)}{g_2(x_2(t))-(k_A+k_B)k_Am(t)}
  \] 
  indexed by the total sample size. Let $h_N(t) = f_N(t) - t$ and $t_N$ be the zero of $h_N(t)$. From the previous analysis, we know $t_N$ is unique and $h_N(t)$ is monotonically decreasing on $I_{f_N}^+: = \{t:f_N(t)>0\}.$ Moreover, it holds $f_N'(t) < 0$ on $I_{f_N}^+$ and as a result, 
  \[
  h_N'(t) = f_N'(t) - 1 < -1.
  \]
  This implies that
  \[
  \left| h_N(t)-h_N(t') \right| \geq \left|t-t' \right| \quad \text{for any } t,t' \in I_{f_N}^+.
  \]
  From the following argument, it is straightforward to see that for sufficiently large $N$, $g_1(x_1(1))$ and $g_2(x_2(2))$ are both positive and dominate $m(1)$, which implies $1 \in I_{f_N}^+$. So we can obtain the following bound:
  \begin{align*}
  \left| t_N - 1 \right| & \leq \left| h_N(t_N) - h_N(1) \right| = |h_N(1)| \\
  & = \left| \frac{\log{\left[\left(\frac{\sqrt{n_B}}{\lambda}-1\right)\left(k_Ax_1(1)+k_B\right)+1 \right]}-k_B\log x_1(1) + (k_A+k_B)k_Bm(1)}{\log{\left[\left(\frac{\sqrt{n_A}}{\lambda}-1\right)\left(k_A+k_Bx_2(1)\right)+1 \right]}-k_A\log x_2(1) - (k_A+k_B)k_Am(1)} - 1 \right|, 
  \end{align*}
  where $x_1(t)$ and $x_2(t)$ are defined in~\eqref{def:x12case1}, satisfying
  \begin{equation}
  \begin{aligned}
x_1(1) = \frac{k_B}{\sqrt{(rk_B+k_A)(k_A+k_B)}-k_A},~~~x_2(1) = \frac{k_A}{\sqrt{(k_B+k_A/r)(k_A+k_B)}-k_B}.
  \end{aligned}
  \end{equation}
  It is easy to see $x_1(t)$ and $x_2(t)$ stay bounded for fixed $r$. As $N\rightarrow\infty$, we notice both $n_A$ and $n_B$ go to $\infty$, and also
  \[
  m(1) \lesssim \frac{\lambda}{\sqrt{N}\lambda_{b}} = o\left(\log(N)\right)
  \]
  follows from the assumption on the decay rate of $\lambda_b$.
  Thus sending $N\rightarrow\infty$ implies
  \[
  |t_N - 1| \lesssim \left|\frac{\log \sqrt{n_B}/\lambda + o(\log{N}) }{\log \sqrt{n_A}/\lambda+o(\log{N})}  - 1\right| = \frac{\log \sqrt{r}}{\log \sqrt{n_A}/\lambda} = O\left(\frac{1}{\log N}\right)
  \]
  for sufficiently large $N$.

  From~\eqref{EQ:Reduction:OPW}, we know that
  \begin{equation*}
    \begin{aligned}
  b_N &=  \log{\left[\left(\frac{\sqrt{n_B}}{\lambda}-1\right)\left(k_Ax_1(t_N)+k_B\right)+1 \right]}-k_B\log x_1(t_N) +o\left(\log{N}\right) \\
  &\geq \log \frac{\sqrt{n_B}}{\lambda} - k_B\log x_1(t_N) + o\left(\log{N} \right).
    \end{aligned}
  \end{equation*}
  Therefore, $b_N$ is at least $\log (\sqrt{n_B}/\lambda)$ and similarly $c_N \geq \log (\sqrt{n_A}/\lambda).$ The uniform boundedness of $x_1$ and $x_2$ in~\eqref{def:x12case1} implies that $a_N$ and $c_N$, and $b_N$ and $d_N$ grow at the same rate, i.e.,
  \[
  \lim_{N\rightarrow\infty}b_N =O(\log N) = \infty,~~\lim_{N\rightarrow\infty}c_N =O(\log N) = \infty
  \]
  and 
  \[
  \lim_{N\rightarrow\infty} t_N = \lim_{N\rightarrow\infty} \frac{b_N}{c_N} = 1,~~\lim_{N\rightarrow\infty} \frac{a_N}{c_N} = 1,~~\lim_{N\rightarrow\infty} \frac{d_N}{b_N} = 1.
  \]
  In other words, as $N\rightarrow\infty$, we have
  \[
  \lim_{N\rightarrow\infty}\frac{1}{b_N}\bar{\BZ} = (k_A+k_B)\I_{k_A+k_B} - \BJ_{k_A+k_B}
  \]
  which implies the column normalized $\bar{\BZ}$ in this limit should converge to the ETF, so do $\bar{\BH}$ and $\BW$.
\end{proof}

\end{document}